\documentclass[11pt]{article}

\usepackage[margin=1in]{geometry}
\usepackage[T1]{fontenc}
\usepackage{lmodern}
\usepackage{microtype}
\usepackage{array}
\usepackage{booktabs}
\usepackage{longtable}
\usepackage{pdflscape}
\usepackage{amsmath,amssymb,amsthm}
\usepackage{graphicx}
\usepackage{xcolor}
\usepackage[section]{placeins}
\usepackage{needspace}
\usepackage{pgfplots}
\pgfplotsset{compat=1.17}
\usepgfplotslibrary{groupplots}
\usetikzlibrary{arrows.meta}
\usepackage[numbers,sort&compress]{natbib}
\usepackage[colorlinks=true,linkcolor=blue!50!black,citecolor=blue!50!black,urlcolor=blue!50!black]{hyperref}

\definecolor{frontier}{RGB}{0,113,227}
\newtheorem{proposition}{Proposition}
\DeclareRobustCommand{\code}[1]{\texttt{\detokenize{#1}}}
\title{{\Huge\bfseries Think Twice}\\[8pt]
{\Large The Metis-1.6 Technical Report}\\[6pt]
{\normalsize\itshape Two looped mixture-of-experts models, one trillion tokens,\\ and everything I learned the hard way}}
\author{Giulianno Vollmer\\Lernex Research\\\texttt{giulianno@lernex.com}}
\date{October 27, 2026}

\begin{document}
\maketitle

\begin{abstract}
Metis is a bet that better data, better training methods and new architectures can buy what a bigger parameter count cannot, at a fraction of the cost. Metis-1.6 is its latest step: two language models built from scratch by a single researcher, Praxis (2.94B parameters, 0.46B active per pass) and Logos (10.23B parameters, 1.18B active per pass). Both run a hybrid Mamba-2 and attention mixture-of-experts stack twice for every token, so each token gets a second look. Each was trained on 1.02T tokens in total: 804.76B of pretraining, 18B of context extension to 131,072 tokens, and 200B of supervised fine-tuning. Metis-1.6 was never meant to challenge frontier models, and across 40 post-trained benchmarks it mostly does not: Logos averages 52.93, seventh of 17 models in its comparison set and within a point of Qwen3-4B. But it gets there on roughly a twelfth of the median training data of the comparators that disclose it, and with fewer active parameters per token than any model that beats it, which puts it on the Pareto frontier for both. In mathematics it ranks second of 30 models, with 67.50\% on AIME 2025 and 63.64\% on HMMT February 2026. It is weak at code, instruction following and long context. The rest of the report is the part I found most valuable: what went wrong. The optimizer I configured never ran, and attention trained with AdamW for all of pretraining. A 1.77B-parameter n-gram memory learned to switch itself off. Five fine-tuning choices cost capability. And a length reward under group-mean normalization punished correct solutions for being long once a group's pass rate passed 2/3, a sign inversion I prove and fix with a one-line clamp. I close with the 10B-token ablations behind MoRE (Mixture of Recursive Experts), the looped architecture of Metis-1.7.
\end{abstract}

\tableofcontents
\newpage

%=====================================================================
\section{Introduction}

Most of the progress in language models over the past few years has been bought with scale. When a model is not good enough, the standard answer is to make it bigger, and parameter counts are announced like trophies, as if size were the achievement. Bigger models are more capable, but they are also far more expensive to train and to serve, and the price of the most capable models keeps climbing even as the labs building them promise to bring it down. Too often the safe choice wins: another trillion parameters on a recipe from 2023, rather than a better recipe.

Metis is Lernex's bet on the opposite direction. Better training data, better training algorithms and genuinely new architectures can buy capability that parameters alone cannot, and at a fraction of the cost. A single researcher cannot outspend a frontier lab, but can out-iterate one: try an idea the day it occurs, build on the newest results instead of the safest ones, and take risks on designs a large organization would not. Every part of the Metis stack is written from scratch, from data acquisition and decontamination to tokenization, training kernels, reinforcement-learning environments and evaluation, so any of it can change the moment a better idea arrives.

Metis-1.6 was never meant to beat frontier models. It was built to test that bet at small scale, and to answer a more personal question: can I, starting from nothing, build each model better than the last, learn from every mistake, and get measurably better at the craft? Each Metis release is one step of that answer. Metis-1.5 was the first Metis model you could really talk to. Metis-1.6 is the first one you can realistically use: it works through olympiad problems, calls tools and follows instructions well enough to be genuinely helpful in places. Metis-1.7 comes next, and this report is, in large part, the story of how I got ready to build it.

Four generations of Metis were trained in 2026:
\begin{itemize}
  \item \textbf{Metis-1.3} (201M parameters): a 28-layer dense hybrid of 21 Mamba-2 and 7 grouped-query attention blocks, with an 8,192-token vocabulary and a 4,096-token context.
  \item \textbf{Metis-1.4} (503.8M parameters): a mixture-of-recursions transformer that explored weight-tied recurrent depth and taught me a great deal about label-shift and residual-scaling instabilities in looped models.
  \item \textbf{Metis-1.5} (897.4M total, 339.6M active parameters): a 19-layer latent mixture-of-experts decoder ($d_{\text{model}} = 1{,}536$, 32 routed experts with top-4 routing and one shared expert, 32,768-token vocabulary), pretrained on 50B tokens and fine-tuned as \code{Metis-1.5-think}.
  \item \textbf{Metis-1.6} Praxis and Logos: the two-pass looped hybrids of this report, pretrained on sixteen times the data of Metis-1.5, on AMD Instinct MI300A accelerators.
\end{itemize}

This report has two jobs. The first is to measure the models honestly. Every comparison comes from one harness that ran Metis and every comparison model with identical prompts and scorers, and every comparison table is generated from the benchmark data published with the models. The second job is to write down what went wrong. Release reports usually describe the recipe that worked. This one also describes an optimizer that silently differed from its configuration for the entire pretraining run, a component the model learned to switch off, and post-training choices that cost capability. I would rather publish my mistakes than repeat them, and I suspect they are the most useful pages here for anyone else building models without a frontier budget.

\needspace{8\baselineskip}
\paragraph{Contributions.}
\begin{enumerate}
  \item Two open models with a complete training lineage, checkpoint hashes and exact parameter accounting (Sections~\ref{sec:arch} to~\ref{sec:praxis}), evaluated with one harness against open models of similar size: 27 at the base stage and 28 after post-training (Section~\ref{sec:eval} and Appendices~\ref{app:base} and~\ref{app:post}).
  \item An audit of the pretraining optimizer. The declared AdaMuon optimizer was never instantiated, and because of a parameter-routing mismatch every attention matrix trained with AdamW (Section~\ref{sec:optim}).
  \item A negative result on hashed n-gram memory. In both models the learned injection gates closed, and deleting the 1.77B-parameter table from Logos changed held-out loss by less than $10^{-3}$ nats (Section~\ref{sec:ngram}).
  \item Post-training lessons: five supervised fine-tuning failure modes; a verifier imbalance that shortened reasoning on unseen science questions; and a sign inversion of length rewards under group-mean normalization, proved and observed, with a one-line fix (Sections~\ref{sec:sft} and~\ref{sec:rl}).
  \item Controlled 10B-token ablations of the loop design and optimizer behind MoRE, the architecture of Metis-1.7 (Section~\ref{sec:ablations}).
\end{enumerate}

\paragraph{Results in brief.} Figure~\ref{fig:frontier} is the picture I would put on the fridge. Across 40 post-trained benchmarks, Logos averages 52.93: seventh of 17 models in its comparison set and 0.75 points behind Qwen3-4B. That is not a frontier result, and the open models from large labs beat it more often than not. But Logos was trained on 1.02T tokens, against a median of 12T for the comparators that disclose a count, and every model above it that discloses its data was trained on at least 28T. It is also a mixture-of-experts model running against mostly dense ones: it activates 2.36B parameters per token, while every model that beats it activates between 1.5 and 4.6 times as many. On both axes, Logos sits on the Pareto frontier, and in the shaded corner of each plot: a higher score than the median model, on far less data and compute than it.

Where it shines, it shines properly. In mathematics Logos ranks second of 30 models, behind only Gemma~4~12B, with the highest score in its set on HMMT February 2025 (62.92), HMMT February 2026 (63.64) and IMO-AnswerBench (57.00). Where it struggles, it struggles properly too: code, instruction following, long context and factual recall all trail models trained on a small fraction of the data that the large labs use (Figure~\ref{fig:domains}). Praxis averages 36.62 and, like Logos, sits on the compute frontier. As base models, both trail open models of similar size, which I attribute mainly to training on one to two orders of magnitude fewer tokens (Appendix~\ref{app:base}).

\begin{figure}[tbp]
\centering
\begin{tikzpicture}
\begin{groupplot}[group style={group size=2 by 1,horizontal sep=1.15cm},width=0.52\linewidth,height=6.3cm,
  grid=major,grid style={black!8},axis line style={black!35},tick style={black!35},tick label style={font=\scriptsize},label style={font=\small},clip=false,ymin=20,ymax=68,log ticks with fixed point]
\nextgroupplot[xmode=log,xmin=0.7,xmax=60,xtick={1,2,5,10,20,50},
  xlabel={Training tokens (trillions)},ylabel={Mean of 40 benchmarks},title={\small\textbf{(a)} Data}]
\fill[frontier,fill opacity=0.075] (axis cs:0.7,43.38) rectangle (axis cs:11.200,68);
\draw[black!30,densely dashed,line width=0.4pt] (axis cs:11.200,20) -- (axis cs:11.200,68);
\draw[black!30,densely dashed,line width=0.4pt] (axis cs:0.7,43.38) -- (axis cs:60,43.38);
\node[font=\tiny\itshape,text=frontier!75!black,anchor=north west,align=left] at (rel axis cs:0.025,0.975) {higher score,\\less data};
\addplot[const plot,very thick,frontier] coordinates { (1.02,52.93) (28.0,61.11) (60,61.11)};
\addplot[only marks,mark=*,mark size=1.4pt,black!35] coordinates { (36.0,53.68) (38.0,40.52) (22.5,38.41) (10.0,31.60) (5.13,25.20) (36.0,55.09) (12.0,45.71) (14.9,46.72) (6.08,43.38) (12.0,42.91) (28.0,61.11) (14.9,36.45) (2.5,37.29) (7.7,45.01) (11.2,44.69) (34.0,51.12) (4.0,38.44) (9.0,33.92) (7.7,47.79)};
\addplot[only marks,mark=diamond*,mark size=3.2pt,black] coordinates { (1.02,52.93)};
\addplot[only marks,mark=diamond*,mark size=3.2pt,mark options={fill=white,draw=black,thick}] coordinates { (1.02,36.62)};
\node[font=\scriptsize\bfseries,text=black,anchor=south,yshift=2pt] at (axis cs:1.02,52.93) {Logos};
\node[font=\scriptsize\bfseries,text=black,anchor=west,xshift=2pt] at (axis cs:1.02,36.62) {Praxis};
\node[font=\tiny,text=black!65,anchor=south,yshift=2pt] at (axis cs:28.0,61.11) {Nanbeige4.2 3B};
\node[font=\tiny,text=black!65,anchor=west,xshift=2pt] at (axis cs:5.13,25.20) {OLMoE 1B-7B};
\node[font=\tiny,text=black!65,anchor=east,xshift=-2pt] at (axis cs:9.0,33.92) {Llama 3.2 3B};
\node[font=\tiny,text=black!65,anchor=west,xshift=2pt] at (axis cs:36.0,54.38) {Qwen3 4B, 8B};
\nextgroupplot[xmode=log,xmin=0.4,xmax=14,xtick={0.5,1,2,5,10},yticklabels={},
  xlabel={Active parameters per token (billions)},title={\small\textbf{(b)} Compute}]
\fill[frontier,fill opacity=0.075] (axis cs:0.4,44.21) rectangle (axis cs:3.180,68);
\draw[black!30,densely dashed,line width=0.4pt] (axis cs:3.180,20) -- (axis cs:3.180,68);
\draw[black!30,densely dashed,line width=0.4pt] (axis cs:0.4,44.21) -- (axis cs:14,44.21);
\node[font=\tiny\itshape,text=frontier!75!black,anchor=north west,align=left] at (rel axis cs:0.025,0.975) {higher score,\\less compute};
\addplot[const plot,very thick,frontier] coordinates { (0.498,27.61) (0.614,31.60) (0.927204204,36.62) (1.31,38.41) (1.37,43.72) (1.9,48.64) (2.36097303,52.93) (3.57,58.51) (6.3,61.11) (6.92,61.90) (10.94,64.06) (14,64.06)};
\addplot[only marks,mark=*,mark size=1.4pt,black!35] coordinates { (3.57,58.51) (3.63,53.68) (1.42,40.52) (1.31,38.41) (0.614,31.60) (1.08,25.20) (6.92,61.90) (6.95,55.09) (10.94,64.06) (10.78,45.71) (7.97,46.72) (7.42,42.88) (6.48,43.38) (6.79,42.91) (4.03,54.59) (6.3,61.11) (3.03,37.17) (3.15,36.45) (2.81,37.29) (4.93,45.01) (2.81,44.69) (2.44,51.12) (3.21,38.44) (1.37,43.72) (2.82,33.92) (1.9,48.64) (0.498,27.61) (9.87,47.79)};
\addplot[only marks,mark=diamond*,mark size=3.2pt,black] coordinates { (2.36097303,52.93)};
\addplot[only marks,mark=diamond*,mark size=3.2pt,mark options={fill=white,draw=black,thick}] coordinates { (0.927204204,36.62)};
\node[font=\scriptsize\bfseries,text=black,anchor=east,xshift=-2pt] at (axis cs:2.36097303,52.93) {Logos};
\node[font=\scriptsize\bfseries,text=black,anchor=east,xshift=-2pt] at (axis cs:0.927204204,36.62) {Praxis};
\node[font=\tiny,text=black!65,anchor=east,xshift=-2pt] at (axis cs:10.94,64.06) {Gemma 4 12B};
\node[font=\tiny,text=black!65,anchor=east,xshift=-2pt] at (axis cs:3.57,58.51) {Qwen3.5 4B};
\node[font=\tiny,text=black!65,anchor=north,yshift=-2pt] at (axis cs:0.498,27.61) {Qwen3.5 0.8B};
\node[font=\tiny,text=black!65,anchor=east,xshift=-2pt] at (axis cs:1.9,48.64) {Gemma 4 E2B};
\node[font=\tiny,text=black!65,anchor=north,yshift=-2pt] at (axis cs:1.08,25.20) {OLMoE 1B-7B};
\end{groupplot}
\end{tikzpicture}
\caption{Mean score on the 40 post-trained benchmarks against (a) published training tokens and (b) active parameters per token. The blue staircase is the Pareto frontier: the best score reached with at most that much data or compute. Logos (filled diamond) and Praxis (open diamond) are both at 1.02T tokens; gray dots are the comparison models. Dashed lines mark the median of the plotted models on each axis, and the shaded corner holds the models that beat the median score with less data (a) or compute (b) than the median. Panel (a) omits models that do not publish a training-token count (Qwen3.5 and Gemma~4) or publish only a range (Ministral~3).}
\label{fig:frontier}
\end{figure}

\paragraph{Availability.} The four models, \code{Metis-1.6-Logos}, \code{Metis-1.6-Praxis} and their \code{-Base} counterparts, are published at \url{https://huggingface.co/Lernex}. All four ship without the n-gram memory (Section~\ref{sec:ngram}). Each repository includes its benchmark data, from which the comparison tables and figures in this report are generated.

%=====================================================================
\section{Related work}

\paragraph{Hybrid state-space models.} Mamba-2~\cite{dao2024mamba2} recasts selective state-space models as structured attention and makes them efficient on matrix hardware. Hybrids that keep a few attention layers among many state-space layers, such as Jamba~\cite{lieber2024jamba} and Nemotron-H~\cite{nvidia2025nemotronh,nemotron3super2026}, retain most of the quality of attention at a fraction of the inference cost. Metis-1.6 places three attention layers among twenty (Logos) and two among twelve (Praxis), with no positional encoding.

\paragraph{Looped and recurrent-depth models.} Applying a shared block repeatedly goes back to the Universal Transformer~\cite{dehghani2018universal}. Recent work scales recurrent depth for latent reasoning~\cite{geiping2025recurrent}, pretrains looped language models at scale~\cite{zhu2025looped}, routes tokens to different recursion depths~\cite{bae2025mor}, and adds gated highways between loop iterations of mixture-of-experts models~\cite{loom2026}. Metis-1.6 loops its entire stack a fixed two times.

\paragraph{Mixture-of-experts.} Metis-1.6 follows fine-grained experts with a shared expert~\cite{dai2024deepseekmoe}, auxiliary-loss-free load balancing through routing biases~\cite{wang2024auxloss} and a router $z$-loss~\cite{zoph2022stmoe}. Experts operate in a compressed latent space as in LatentMoE~\cite{latentmoe2026}, with SwiGLU units~\cite{shazeer2020glu}.

\paragraph{Residual streams and memory.} Hyper-connections~\cite{zhu2024hyperconnections} widen the residual stream into several streams with learned mixing. Manifold-constrained hyper-connections~\cite{xie2025mhc,deepseekv4_2026} project the mixing matrix toward the doubly stochastic matrices with Sinkhorn--Knopp iterations~\cite{sinkhorn1967doubly}. Conditional memory through hashed n-gram lookup~\cite{engram2026} adds a sparse parameter axis; Section~\ref{sec:ngram} reports that the Metis models learned not to use it.

\paragraph{Optimizers.} Muon~\cite{jordan2024muon,liu2025muonscalable} orthogonalizes the momentum updates of matrix parameters with Newton--Schulz iterations. AdaMuon~\cite{si2025adamuon} and NorMuon~\cite{li2025normuon} add second-moment normalization, and Polar Express~\cite{polarexpress2026} derives optimal polynomial iterations for the orthogonalization.

\paragraph{Reinforcement learning with verifiable rewards.} GRPO~\cite{shao2024deepseekmath} normalizes rewards within a group of samples for the same prompt, and GSPO~\cite{zheng2025gspo} moves importance ratios and clipping to the sequence level. Dr.~GRPO~\cite{liu2025drgrpo} and DAPO~\cite{yu2025dapo} analyse and correct biases of the group objective. Length has been controlled with length penalties~\cite{kimi2025k15,arora2025efficient} and length-conditioned training~\cite{aggarwal2025l1}. The sign inversion of Section~\ref{sec:signinv} is a direct consequence of mean-centring and is closely related to these analyses. On-policy distillation~\cite{agarwal2023onpolicy} trains a student on its own samples against a teacher's distribution.

%=====================================================================
\section{Architecture: one stack, two passes}
\label{sec:arch}

Praxis and Logos are the same architecture at two sizes (Figure~\ref{fig:arch} and Table~\ref{tab:arch}). Instead of stacking more layers, Metis-1.6 runs its hybrid trunk twice for every token. Each token passes through 40 sequence-mixing and expert stages in Logos (20 layers, two passes) and 24 in Praxis (12 layers, two passes), while only half as many are stored. Parameter counts in this section are read directly from the tensor shapes of the released checkpoints.

\begin{figure}[!t]
\centering
% Metis-1.6 architecture: (a) the stack run twice, (b) one layer on four residual streams,
% (c) the latent mixture-of-experts. Logos is drawn; Praxis differs only in the numbers.
\definecolor{mambaL}{RGB}{226,238,255}\definecolor{mambaD}{RGB}{0,113,227}
\definecolor{attnL}{RGB}{255,238,220}\definecolor{attnD}{RGB}{222,118,20}
\definecolor{moeL}{RGB}{222,245,235}\definecolor{moeD}{RGB}{18,148,104}
\definecolor{memL}{RGB}{240,234,255}\definecolor{memD}{RGB}{122,82,228}
\definecolor{strm}{RGB}{78,90,114}
\tikzset{
  arch/.style={font=\sffamily\scriptsize, >={Stealth[length=4pt,width=3.6pt]}, line cap=round},
  card/.style={rounded corners=7pt, fill=black!2.5, draw=black!11},
  blk/.style={draw, rounded corners=2.5pt, line width=0.6pt, align=center, inner sep=1.5pt},
  mamba/.style={blk, draw=mambaD, fill=mambaL},
  attn/.style={blk, draw=attnD, fill=attnL},
  moe/.style={blk, draw=moeD, fill=moeL},
  mem/.style={blk, draw=memD, fill=memL},
  plain/.style={blk, draw=black!45, fill=white},
  ghost/.style={blk, draw=black!40, dashed, fill=white, text=black!55},
  flow/.style={->, line width=0.6pt, black!60},
  stream/.style={line width=0.95pt, strm},
  port/.style={draw=strm, fill=white, rounded corners=1.5pt, line width=0.6pt},
  ptitle/.style={font=\sffamily\small\bfseries, anchor=west},
  pnote/.style={font=\sffamily\tiny, text=black!55, anchor=west},
  note/.style={font=\sffamily\tiny, text=black!60},
}
% one layer of the stack: the sequence mixer, with its mixture-of-experts tab on the right
% #1 centre y, #2 height, #3 fill, #4 outline, #5 label
\newcommand{\metislayer}[5]{%
  \begin{scope}
    \clip[rounded corners=2.5pt] (1.5,#1-#2/2) rectangle (3.9,#1+#2/2);
    \fill[#3] (1.5,#1-#2/2) rectangle (3.35,#1+#2/2);
    \fill[moeL] (3.35,#1-#2/2) rectangle (3.9,#1+#2/2);
  \end{scope}
  \draw[moeD!60, line width=0.5pt] (3.35,#1-#2/2) -- (3.35,#1+#2/2);
  \draw[#4, line width=0.6pt, rounded corners=2.5pt] (1.5,#1-#2/2) rectangle (3.9,#1+#2/2);
  \node at (2.425,#1) {#5};
  \node[font=\sffamily\tiny\bfseries, text=moeD] at (3.625,#1) {MoE};}
% a stream-mixing matrix (Sinkhorn-projected, so rows and columns each sum to one), as a heat grid
\newcommand{\metismix}[1]{%
  \foreach \r/\vals in {0/{55,20,15,10},1/{20,50,20,10},2/{15,20,50,15},3/{10,10,15,65}}{%
    \foreach \v [count=\c from 0] in \vals {%
      \fill[strm!\v!white, draw=white, line width=0.4pt]
        (6.5+0.2375*\c, #1+0.5625-0.1875*\r) rectangle ++(0.2375,0.1875);}}%
  \draw[strm, line width=0.6pt, rounded corners=1.5pt] (6.5,#1) rectangle ++(0.95,0.75);}

\begin{tikzpicture}[arch]
\path[card] (0,0) rectangle (5.9,9.05);
\path[card] (6.15,0) rectangle (10.75,9.05);
\path[card] (11.0,0) rectangle (16.4,9.05);

% ================================================================ (a) the stack
\node[ptitle] at (0.15,8.72) {(a) One stack, run twice};
\node[pnote] at (0.15,8.38) {every weight is shared between the two passes};

\node[plain, minimum width=2.4cm, minimum height=0.5cm] (emb) at (2.7,0.5)
  {Token embedding\\[-1pt]{\tiny fanned out to 4 residual streams}};
\metislayer{1.700}{1.00}{mambaL}{mambaD}{Mamba-2\enspace$\times$5}
\metislayer{2.530}{0.42}{attnL}{attnD}{Attention}
\metislayer{3.285}{0.85}{mambaL}{mambaD}{Mamba-2\enspace$\times$4}
\metislayer{4.040}{0.42}{attnL}{attnD}{Attention}
\metislayer{4.795}{0.85}{mambaL}{mambaD}{Mamba-2\enspace$\times$4}
\metislayer{5.550}{0.42}{attnL}{attnD}{Attention}
\metislayer{6.305}{0.85}{mambaL}{mambaD}{Mamba-2\enspace$\times$4}
\node[plain, minimum width=2.4cm, minimum height=0.5cm] (head) at (2.7,7.8)
  {Final norm, LM head\\[-1pt]{\tiny tied to the embedding}};

\draw[flow] (emb.north) -- (2.7,1.2);
\draw[flow] (2.7,6.73) -- node[note, left, pos=0.72, xshift=-1pt] {after pass 2} (head.south);

% the loop: pass 1's output goes round past the depth memory and the stack runs again
\draw[line width=1.15pt, mambaD, rounded corners=8pt] (2.7,7.05) -- (4.85,7.05) -- (4.85,5.0);
\draw[line width=1.15pt, mambaD, rounded corners=8pt, ->] (4.85,3.3) -- (4.85,0.98) -- (2.77,0.98);
\fill[mambaD] (2.7,7.05) circle (1.5pt);
\node[circle, fill=mambaD, text=white, inner sep=1.3pt, font=\sffamily\tiny\bfseries] at (3.85,7.05) {$\times$2};
\node[mem, minimum width=1.25cm, minimum height=1.7cm] at (4.85,4.15)
  {\textbf{Depth}\\\textbf{memory}\\[1pt]{\tiny 4 heads, 256-d}\\[-1pt]{\tiny pass 1 writes,}\\[-2pt]{\tiny pass 2 reads}};

% the n-gram memory the models switched off
\node[ghost, minimum width=0.95cm, minimum height=1.3cm] (ng) at (0.62,2.12)
  {\textbf{N-gram}\\memory\\[1pt]{\tiny gates closed}};
\draw[->, dashed, black!40, line width=0.55pt] (ng.east |- 0,1.7) -- (1.48,1.7);
\draw[->, dashed, black!40, line width=0.55pt] (ng.east |- 0,2.53) -- (1.48,2.53);
\node[note, text=black!45, align=center, anchor=north] at (0.62,1.4) {into layers\\[-1pt]2 and 5};

% ================================================================ (b) one layer
\node[ptitle] at (6.3,8.72) {(b) Inside one layer};
\node[pnote] at (6.3,8.38) {each read, write and mix is pass-aware};

\foreach \x in {6.6,6.85,7.1,7.35}
  \draw[stream, -{Stealth[length=3pt,width=3pt]}] (\x,0.75) -- (\x,8.05);
\node[note, anchor=north] at (6.975,0.62) {4 residual streams};

\metismix{3.75}
\metismix{7.0}
\node[note, anchor=west, align=left] at (7.55,4.12) {mix: $4{\times}4$ doubly\\stochastic (Sinkhorn)};

% sub-layer 1: the sequence mixer
\node[blk, draw=black!45, left color=mambaL, right color=attnL, minimum width=2.0cm, minimum height=1.45cm] (mix) at (9.55,2.6)
  {\textbf{Mixer}\\Mamba-2 or\\NoPE attention\\[-1pt]{\tiny + rank-8 pass LoRA}};
\filldraw[port] (6.5,2.2) rectangle ++(0.95,0.2);
\filldraw[port] (6.5,2.9) rectangle ++(0.95,0.2);
\draw[flow] (7.45,2.3) -- node[note, above, pos=0.45] {read} (mix.west |- 0,2.3);
\draw[flow] (mix.west |- 0,3.0) -- node[note, above, pos=0.55] {write} (7.45,3.0);

% sub-layer 2: the latent mixture-of-experts
\node[moe, minimum width=2.0cm, minimum height=1.45cm] (moe) at (9.55,5.9)
  {\textbf{Latent MoE}\\top-4 of 192\\+ 1 shared};
\filldraw[port] (6.5,5.5) rectangle ++(0.95,0.2);
\filldraw[port] (6.5,6.2) rectangle ++(0.95,0.2);
\draw[flow] (7.45,5.6) -- node[note, above, pos=0.45] {read} (moe.west |- 0,5.6);
\draw[flow] (moe.west |- 0,6.3) -- node[note, above, pos=0.55] {write} (7.45,6.3);
\draw[dashed, moeD, line width=0.6pt, ->] (moe.east) -- (10.95,5.9);

% ================================================================ (c) latent MoE
\node[ptitle] at (11.15,8.72) {(c) Latent mixture-of-experts};
\node[pnote] at (11.15,8.38) {experts work in a shared 1,024-d space};

\node (x) at (13.75,0.42) {$x$\; {\tiny 2,560-d}};
\filldraw[draw=black!45, fill=white, line width=0.6pt, rounded corners=1.5pt]
  (12.5,1.0) -- (15.0,1.0) -- (14.45,1.75) -- (13.05,1.75) -- cycle;
\node at (13.75,1.5) {$W_{\text{down}}$};
\node[note] at (13.75,1.17) {2,560 $\to$ 1,024};
\draw[flow] (x.north) -- (13.75,1.0);

% router
\node[plain, minimum width=0.82cm, minimum height=0.62cm] (rt) at (11.62,3.375) {Router\\[-1pt]{\tiny top-4}};
\draw[flow] (x.west) -| (rt.south);

% routed experts: 21 of 192 drawn, the four this token picked filled in
\foreach \r in {0,1,2} \foreach \c in {0,...,6}
  \filldraw[draw=moeD!45, fill=white, line width=0.45pt, rounded corners=1pt]
    (12.42+0.38*\c, 2.9+0.34*\r) rectangle ++(0.32,0.27);
\foreach \r/\c in {0/1, 1/4, 2/2, 2/6}
  \filldraw[draw=moeD, fill=moeD!75, line width=0.45pt, rounded corners=1pt]
    (12.42+0.38*\c, 2.9+0.34*\r) rectangle ++(0.32,0.27);
\node[note, anchor=north west, align=left] at (12.37,2.84) {192 routed\\[-1pt]experts};
\draw[flow] (rt.east) -- node[note, above, inner sep=1.5pt] {$\hat g_e$} (12.4,3.375);

% shared expert
\node[moe, fill=moeD!22, minimum width=0.8cm, minimum height=0.95cm] (sh) at (15.75,3.375) {Shared\\[-1pt]{\tiny always on}};

% z into the routed and shared experts
\draw[flow] (13.75,1.75) -- node[note, right, pos=0.3] {$z$} (13.75,2.88);
\draw[flow] (13.75,2.3) -| (sh.south);

% combine, then project back up
\node[circle, draw=black!50, fill=white, inner sep=0pt, minimum size=0.36cm, line width=0.6pt] (sum) at (13.75,4.7) {$+$};
\draw[flow] (13.75,3.86) -- (sum.south);
\draw[flow] (sh.north) |- (sum.east);
\filldraw[draw=black!45, fill=white, line width=0.6pt, rounded corners=1.5pt]
  (13.05,5.15) -- (14.45,5.15) -- (15.0,5.9) -- (12.5,5.9) -- cycle;
\node at (13.75,5.4) {$W_{\text{up}}$};
\node[note] at (13.75,5.73) {1,024 $\to$ 2,560};
\draw[flow] (sum.north) -- (13.75,5.15);
\draw[flow] (13.75,5.9) -- node[note, right, pos=0.55] {back to the streams} (13.75,6.5);

\node at (13.7,7.45)
  {$\mathrm{FFN}(x) = W_{\text{up}}\big(S(z) + \textstyle\sum_{\text{top-}4}\, \hat g_e\, E_e(z)\big)$};
\node[note] at (13.7,7.0) {$z = W_{\text{down}}\, x$, \ $S$ shared, \ $E_e$ routed};

% ================================================================ legend
\begin{scope}[yshift=-0.42cm]
  \foreach \sty/\lab/\x in {mamba/Mamba-2/0.55, attn/Attention/3.0, moe/Mixture-of-experts/5.45, mem/Memory/8.55}{
    \node[\sty, minimum width=0.42cm, minimum height=0.24cm] at (\x,0) {};
    \node[note, anchor=west, text=black!70] at (\x+0.3,0) {\lab};}
  \draw[stream] (10.75,0) -- (11.2,0);
  \node[note, anchor=west, text=black!70] at (11.3,0) {Residual stream};
  \node[ghost, minimum width=0.42cm, minimum height=0.24cm] at (13.9,0) {};
  \node[note, anchor=west, text=black!70] at (14.2,0) {Removed at release};
\end{scope}
\end{tikzpicture}
\caption{Metis-1.6 on one page, drawn for Logos. (a) Twenty layers, three of them attention, each ending in a mixture-of-experts block. The stack runs twice per token with every weight shared, and a depth memory carries what the first pass wrote down into the second. The n-gram memory that fed layers 2 and 5 during pretraining is drawn as a ghost: the models closed its gates on their own, and I removed it before release (Section~\ref{sec:ngram}). (b) Inside a layer, each sub-layer reads from and writes to four residual streams, which are then mixed by a doubly stochastic $4\times4$ matrix; a controller that knows the current pass sets all three. (c) The mixture-of-experts block works in a 1,024-dimensional latent space: one shared expert and four of 192 routed experts run there before the result is projected back to model width. Praxis is the same design with 12 layers (attention at layers 4 and 8), width 2,048 and 128 routed experts.}
\label{fig:arch}
\end{figure}

\paragraph{Hybrid trunk.} Logos has 20 layers of width 2,560: grouped-query attention~\cite{ainslie2023gqa} at layers 5, 10 and 15, and Mamba-2 everywhere else. Praxis has 12 layers of width 2,048, with attention at layers 4 and 8. The Mamba-2 blocks use expansion 2 ($d_{\text{inner}} = 5{,}120$ in Logos), state size 128, head dimension 64, a width-4 causal convolution, chunk size 256, and 10 (Logos) or 8 (Praxis) B/C groups. Attention has head dimension 64, with 40 query and 10 key-value heads in Logos and 32 and 8 in Praxis, and uses no positional encoding at all. The Mamba-2 layers already know where each token sits, which is what lets the models stretch from 4,096 to 131,072 tokens without touching position handling (Section~\ref{sec:cpt}). It also means only 3 of Logos's 20 layers (2 of Praxis's 12) keep a key-value cache, so memory grows slowly with context. To let attention behave differently on the second pass without storing a second copy, each attention layer adds a pass-indexed rank-8 low-rank update to its fused query-key-value projection.

\paragraph{Latent mixture-of-experts.} Every layer ends in a mixture-of-experts block whose experts work in a shared 1,024-dimensional latent space. With $z = W_{\text{down}}x$,
\begin{equation}
  \operatorname{FFN}(x) = W_{\text{up}}\Big(S(z) + \sum_{e \in \operatorname{TopK}(r(x),\,4)} \hat g_e(x)\, E_e(z)\Big),
\end{equation}
where $S$ is one always-active shared SwiGLU expert and $E_e$ are routed SwiGLU experts: 192 of width 768 in Logos and 128 of width 512 in Praxis. The weights $\hat g_e$ are the router probabilities renormalized over the four selected experts. Load is balanced by auxiliary-loss-free bias updates (rate $10^{-3}$) with a router $z$-loss (coefficient $10^{-3}$). This is where the efficiency comes from: Logos stores 10.23B parameters but touches only 2.36B of them per token, across both passes.

\paragraph{Residual streams.} The model keeps four residual streams of model width. At every sub-layer boundary, a controller conditioned on a 64-dimensional pass embedding reads from, writes to and mixes the streams. The $4\times4$ mixing matrix is projected toward the doubly stochastic matrices with 8 Sinkhorn--Knopp iterations~\cite{xie2025mhc}.

\paragraph{Loop and depth memory.} The whole stack runs twice per token, sharing every weight except the attention low-rank updates and the pass embeddings. A cross-pass depth memory (4 heads, 256 dimensions, output gate initialized at $-6$) lets the second pass consult notes written during the first. Every token, in training and inference alike, takes exactly two passes and four routed experts per layer.\footnote{The checkpoints also contain an expert-count router and a halting head from an adaptive-compute variant of the design. Neither affects the released models, because training and inference both fix two passes and four experts.}

\paragraph{Vocabulary and context.} Both models share a 65,536-token byte-level BPE vocabulary with tied input and output embeddings. Pretraining used 4,096-token sequences, context extension used sequences of up to 163,840 tokens, and the released models support 131,072 tokens.

\begin{table}[tbp]
\centering
\caption{Configuration and exact parameter counts of the released (n-gram-free) checkpoints. Active parameters count four routed experts per layer and exclude the embedding table and the depth memory, as on the model cards.}
\label{tab:arch}
\small
\begin{tabular}{@{}lrr@{}}
\toprule
 & \textbf{Logos} & \textbf{Praxis} \\
\midrule
Width / layers per pass & 2,560 / 20 & 2,048 / 12 \\
Mamba-2 / attention layers & 17 / 3 (at 5, 10, 15) & 10 / 2 (at 4, 8) \\
Attention heads (query / key-value) & 40 / 10 & 32 / 8 \\
Latent width; routed experts (width); active & 1,024; 192 (768); 4 & 1,024; 128 (512); 4 \\
Residual streams; passes per token & 4; 2 & 4; 2 \\
\midrule
Token embedding (tied) & 167,772,160 & 134,217,728 \\
Sequence mixers (Mamba-2, attention, pass LoRA) & 833,409,535 & 316,315,530 \\
Latent down and up projections & 104,857,600 & 50,331,648 \\
Routed experts & 9,059,696,640 & 2,415,919,104 \\
Shared experts & 47,185,920 & 18,874,368 \\
Routers & 6,110,900 & 2,485,356 \\
Residual-stream controllers & 176,320 & 95,680 \\
Depth memory and unused halting head & 6,244,102 & 5,061,638 \\
Final norm & 2,560 & 2,048 \\
\textbf{Total} & \textbf{10,225,455,737} & \textbf{2,943,303,100} \\
\midrule
N-gram memory, removed at release (Section~\ref{sec:ngram}) & 1,774,549,800 & 602,180,520 \\
\midrule
\textbf{Active per pass} & \textbf{1,180,486,515} & \textbf{463,602,102} \\
Active per token (two passes) & 2,360,973,030 & 927,204,204 \\
\bottomrule
\end{tabular}
\end{table}

%=====================================================================
\section{Pretraining and context extension}

\subsection{Data}
Both models were pretrained on the sealed \code{metis-1.6-data-r2} release: 806 provenance-hashed shards totalling 804,755,808,835 tokens (SHA-256 \code{65ef27f6...}), served in three curriculum phases:
\begin{enumerate}
  \item \textbf{Phase A} (554,849,376,946 tokens, 68.9\%): broad filtered web text, books, scientific articles and code.
  \item \textbf{Phase B} (209,668,641,449 tokens, 26.1\%): up-weighted mathematics, science, software engineering and structured analytical text.
  \item \textbf{Phase C} (40,237,790,440 tokens, 5.0\%): an annealing mix of curated technical prose, formal proofs, competition mathematics and reference documentation.
\end{enumerate}
Logos took 219,280 optimizer steps and ended at checkpoint \code{ebddf9827d1a}; Praxis took 191,870. Every shard was decontaminated against the evaluation benchmarks by 13-gram shingle matching. Each shard also carries a sidecar (\code{NGRAM_CANONICAL_IDS.uint16}) that maps every token to a canonical id, collapsing case, whitespace and punctuation variants, for the n-gram memory of Section~\ref{sec:ngram}.

\subsection{The optimizer I thought I had}
\label{sec:optim}
The run configuration said \code{optimizer.name: adamuon}. I meant to carry over the AdaMuon implementation of Metis-1.5~\cite{si2025adamuon}, which keeps an elementwise second-moment buffer on the orthogonalized direction. Partway through the campaign I audited the PyTorch trainer and the 100B- and 600B-token Logos checkpoints, and found that the optimizer that actually ran was a different animal in four ways, all present for the whole of pretraining:
\begin{enumerate}
  \item \textbf{The optimizer name was never read.} The trainer always built a Muon/AdamW hybrid with plain Muon and no second-moment state: Nesterov momentum 0.95; five Newton--Schulz steps with coefficients $(3.4445, -4.7750, 2.0315)$ in BF16; update RMS matched by a $0.2\sqrt{\max(r,c)}$ scale; constant decoupled weight decay 0.1; and an unswept learning rate of $1.2\times10^{-4}$ (Logos) or $1.8\times10^{-4}$ (Praxis).
  \item \textbf{Attention trained with AdamW.} The parameter router assigned Muon to attention matrices by matching names containing \code{.mixer.impl.}, but the attention module was mounted directly at \code{layers.N.mixer}. Every attention projection, including the pass-specific low-rank updates, quietly fell through to the default group and trained with AdamW ($\beta = (0.9, 0.95)$, $\epsilon = 10^{-8}$, weight decay 0.1) for all 804.76B tokens.
  \item \textbf{Fused projections were orthogonalized as one matrix.} The Mamba-2 input projection ($12{,}880 \times 2{,}560$ in Logos) concatenates five functionally different projections ($z$, $x$, $B$, $C$ and $\Delta t$). Muon orthogonalized it as a single block, coupling singular vectors across gate, state and step-size projections.
  \item \textbf{Weight norms drifted apart.} Between 100B and 600B tokens, weight RMS changed by a factor of 0.88 to 0.91 for routed experts, 1.12 for Mamba input projections, 1.24 for Mamba output projections, 0.80 for shared-expert down projections, 0.57 for convolutions and 1.49 for the final norm.
\end{enumerate}
None of this stopped training, and the loss curve never said a word. Every architectural conclusion in this report is therefore conditional on this optimizer. The lesson is cheap to apply: assert every parameter's optimizer group at start-up. Section~\ref{sec:ablations} measures what the corrected optimizer is worth.

\subsection{Context extension}
\label{sec:cpt}
From the 4,096-token checkpoints, both models continued pretraining on 18B tokens of long documents packed to at most 163,840 tokens. Logos took 1,968 optimizer steps (17,937,499,304 tokens) and Praxis 3,456 steps (18,000,000,000 tokens). Table~\ref{tab:cpt} reports the extension gate on 384 held-out documents of 131,072 tokens, short-context retention on seven zero-shot tasks, and the 13-task RULER suite~\cite{hsieh2024ruler}.

On Logos, top-1 retrieval of a planted needle at 131,072 tokens rose from 19.53\% to 68.75\% at the 12B-token gate and 66.93\% at 18B, while loss on the tail of full-length documents fell from 5.689 to 1.495 nats. RULER reached 84.48\% at 4K, 73.25\% at 32K and 60.14\% at 128K, 1.32 points above the 12B-token gate at 128K, and the seven-task short-context mean barely moved (62.65 to 62.24). On Praxis, needle accuracy rose from 17.97\% to 67.71\% at the 6B-token gate, 66.15\% at 12B and 65.63\% at 18B; full-context tail loss fell from 5.735 to 1.614 nats and base-window loss from 1.703 to 1.665 nats. RULER reached 77.01\% at 4K, 58.42\% at 32K and 39.01\% at 128K. Appendix~\ref{app:base} gives the complete 50-benchmark evaluation of both extended base models.

\begin{table}[tbp]
\centering
\caption{Context extension. Gate metrics are measured on 384 held-out documents at 131,072 tokens (NLL in nats per token; lower is better). The seven tasks are zero-shot. RULER uses 13 tasks with 1,300, 650 and 325 samples at 4K, 32K and 128K. -- means beyond the training window.}
\label{tab:cpt}
\small
\setlength{\tabcolsep}{4pt}
\begin{tabular}{@{}lrrrrr@{}}
\toprule
 & \multicolumn{3}{c}{\textbf{Logos}} & \multicolumn{2}{c}{\textbf{Praxis}} \\
\cmidrule(lr){2-4}\cmidrule(l){5-6}
 & \textbf{4K base} & \textbf{12B ext.} & \textbf{18B ext.} & \textbf{4K base} & \textbf{18B ext.} \\
\midrule
Needle top-1 accuracy (\%) & 19.53 & \textbf{68.75} & 66.93 & 17.97 & 65.63 \\
Needle log-probability & $-5.839$ & $\mathbf{-1.671}$ & $-1.743$ & $-5.918$ & $-1.871$ \\
Full-context tail NLL & 5.689 & 1.508 & \textbf{1.495} & 5.735 & 1.614 \\
Short-context tail NLL & 1.711 & 1.709 & \textbf{1.695} & 1.860 & 1.833 \\
Base-window NLL & 1.550 & 1.538 & \textbf{1.525} & 1.703 & 1.665 \\
\midrule
MMLU ($N = 14{,}042$) & \textbf{54.01} & 52.69 & 52.53 & 50.85 & 38.49 \\
HellaSwag ($N = 10{,}042$) & \textbf{75.19} & 74.30 & 74.69 & 69.92 & 69.31 \\
PIQA ($N = 1{,}838$) & 79.49 & \textbf{80.03} & 79.43 & 77.69 & 77.31 \\
ARC-Easy ($N = 2{,}376$) & 74.41 & 74.45 & \textbf{74.66} & 71.93 & 69.65 \\
ARC-Challenge ($N = 1{,}172$) & \textbf{47.27} & 45.31 & 45.22 & 44.11 & 38.65 \\
WinoGrande ($N = 1{,}267$) & 65.98 & 65.43 & \textbf{66.54} & 65.59 & 61.80 \\
OpenBookQA ($N = 500$) & 42.20 & \textbf{43.20} & 42.60 & 42.20 & 39.60 \\
\textbf{Seven-task mean} & \textbf{62.65} & 62.20 & 62.24 & 60.33 & 56.40 \\
\midrule
RULER 4K & 81.92 & 83.61 & \textbf{84.48} & 74.24 & 77.01 \\
RULER 32K & -- & 71.48 & \textbf{73.25} & -- & 58.42 \\
RULER 128K & -- & 58.82 & \textbf{60.14} & -- & 39.01 \\
\bottomrule
\end{tabular}
\end{table}

\begin{table}[tbp]
\centering
\caption{Training lineage of Metis-1.6 Logos. The end cursor counts every token processed in the campaign, so it keeps rising across rows 5 to 10, which all branch from row 4. Loss and gradient norm are the last logged values; hashes are the first 12 hex digits of each checkpoint's SHA-256. The released model is row 3. The step-40 (\code{389c7007}) and step-50 (\code{9c708434}) checkpoints of row 5 were also evaluated (Table~\ref{tab:rl_suite}).}
\label{tab:logos_lineage}
\scriptsize
\setlength{\tabcolsep}{3pt}
\resizebox{\textwidth}{!}{%
\begin{tabular}{@{}rlrrrrrll@{}}
\toprule
\textbf{\#} & \textbf{Stage} & \textbf{End cursor} & \textbf{Stage tokens} & \textbf{Steps} & \textbf{Loss} & \textbf{Grad} & \textbf{Parent} & \textbf{Output} \\
\midrule
0 & \code{pretraining} & 804,755,808,835 & 804,755,808,835 & 219,280 & -- & -- & -- & \code{ebddf9827d1a} \\
1 & \code{context_extension} & 822,693,308,139 & 17,937,499,304 & 1,968 & 1.4984 & 0.4507 & \code{ebddf9827d1a} & \code{216ab39b53b8} \\
2 & \code{cold_start_sft} & 942,701,111,551 & 120,007,803,412 & 8,182 & 0.2868 & 0.1838 & \code{216ab39b53b8} & \code{6958ef5907eb} \\
3 & \code{overall_sft} (released) & 1,022,704,091,713 & 80,002,980,162 & 2,735 & 0.4095 & 0.2011 & \code{6958ef5907eb} & \code{44b8e759242f} \\
4 & \code{hybrid_mode_gspo} & 1,025,955,849,911 & 3,251,758,198 & 264 & 0.0009 & 0.9774 & \code{44b8e759242f} & \code{ea50ea259f48} \\
5 & \code{specialist_reasoning} & 1,026,828,195,987 & 872,346,076 & 78 & 0.0007 & 0.3240 & \code{ea50ea259f48} & \code{c640b1064747} \\
6 & \code{specialist_code} & 1,026,836,703,500 & 8,507,513 & 2 & $-0.0326$ & 1.2911 & \code{ea50ea259f48} & \code{92320a77d22c} \\
7 & \code{specialist_knowledge} & 1,026,842,468,788 & 5,765,288 & 18 & $-0.0052$ & 11.8811 & \code{ea50ea259f48} & \code{4a7c689dc0cf} \\
8 & \code{specialist_writing} & 1,026,842,468,788 & 0 & 0 & -- & -- & \code{ea50ea259f48} & \code{ea50ea259f48} \\
9 & \code{specialist_agentic} & 1,026,845,529,737 & 3,060,949 & 20 & $-0.1760$ & 3.7435 & \code{ea50ea259f48} & \code{0f3d8610306b} \\
10 & \code{opd_consolidation} & 1,026,846,087,232 & 557,495 & 1 & 0.0011 & 1.5650 & \code{ea50ea259f48} & \code{75c4344324a9} \\
\bottomrule
\end{tabular}}
\end{table}

%=====================================================================
\section{A memory the models chose to forget}
\label{sec:ngram}

Following the Engram line of work~\cite{engram2026}, both models carried a hashed n-gram conditional memory. For each position, the canonicalized preceding 2-grams and 3-grams are hashed into eight prime-modulus tables per order. In Logos the 2-gram tables have 1,723,303 to 1,730,101 rows and the 3-gram tables 1,730,507 to 1,737,161, each holding 64-dimensional values. The 16 retrieved vectors are concatenated, projected to model width, and the result $m_t$ is added to each residual stream $s$ at two layers of each pass (layers 2 and 5 in Logos, 2 and 4 in Praxis) through a learned gate:
\begin{equation}
  g^{(\ell,p)}_{t,s} = \sigma\!\left(\frac{\langle h^{(\ell,p)}_{t,s},\, v^{(\ell,p)}_{s}\rangle}{\sqrt{d_{\text{model}}}} + b^{(\ell,p)}_{s}\right), \qquad
  \tilde h^{(\ell,p)}_{t,s} = h^{(\ell,p)}_{t,s} + g^{(\ell,p)}_{t,s}\, m_{t}.
\end{equation}
The tables hold 1,774,549,800 parameters in Logos and 602,180,520 in Praxis, and were trained with sparse Adam at twice the base learning rate.

The models had other plans. Both drove the gate biases negative and closed the gates (Table~\ref{tab:ngram}). After pretraining, the mean gate at every injection site was between 0.0001 and 0.0031. After post-training it was at most 0.0141, and the memory's update to a residual stream stayed below 0.85\% of the stream's norm everywhere. I see two structural reasons:
\begin{enumerate}
  \item \textbf{Redundancy with early Mamba-2 convolutions.} Layers 0 and 1 are Mamba-2 blocks whose width-4 causal convolutions already compute local 2-, 3- and 4-gram compositions in continuous hidden space before the first injection at layer 2.
  \item \textbf{Hash-collision noise.} With about 1.73M rows per table across 805B tokens, prime-modulus hashing aliases frequent and rare n-grams into the same rows. Injected into four residual streams that are refined over two passes, the noisy lookup costs more than it adds, and the gate bias receives a steady gradient toward closing.
\end{enumerate}

\begin{table}[tbp]
\centering
\caption{Mean n-gram gate and relative residual update $\lVert\Delta\rVert/\lVert h\rVert$ at each injection site (pass 1 / pass 2).}
\label{tab:ngram}
\small
\begin{tabular}{@{}llrr@{}}
\toprule
\textbf{Checkpoint} & \textbf{Site} & \textbf{Mean gate} & \textbf{Update} \\
\midrule
Logos base (18B extension) & layer 2 & 0.000400 / 0.003100 & 0.26\% / 0.21\% \\
 & layer 5 & 0.000100 / 0.002100 & 0.12\% / 0.15\% \\
Logos released (\code{overall_sft}) & layer 2 & 0.000417 / 0.005928 & 0.17\% / 0.83\% \\
 & layer 5 & 0.000131 / 0.002544 & 0.04\% / 0.34\% \\
Logos after RL (\code{opd_consolidation}) & layer 2 & 0.000416 / 0.005971 & 0.17\% / 0.84\% \\
 & layer 5 & 0.000131 / 0.002478 & 0.04\% / 0.33\% \\
Praxis released (\code{overall_sft}) & layer 2 & 0.000643 / 0.001654 & 0.21\% / 0.08\% \\
 & layer 4 & 0.000411 / 0.014051 & 0.11\% / 0.74\% \\
\bottomrule
\end{tabular}
\end{table}

So removing the memory is close to free. Deleting the 20 \code{ngram_memory.*} tensors moved bits per byte on pretraining prose from 0.733898 to 0.733894 on Logos and from 0.78110 to 0.78123 on Praxis. On 44,684 tokens of post-training text it changed the loss of released Logos by $-0.000647$ nats (1.184412 to 1.183764, 97.65\% top-1 agreement), of Logos after reinforcement learning by $+0.000112$ nats (1.404278 to 1.404390, 96.70\%), and of released Praxis by $-0.001559$ nats (1.281273 to 1.279714, 97.72\%). For released Logos, the model is very slightly better without it. The released checkpoints ship without the memory: in BF16, Logos is 20.80\,GB instead of 24.35\,GB (\code{model.safetensors} SHA-256 \code{965efc2f...}), and Praxis 6.16\,GB instead of 7.37\,GB (SHA-256 \code{861af5db...}).

I have not given up on the idea, only on this version of it. For Metis-1.7 the table grows to 50B parameters, stored off-accelerator in a per-row ternary format (BitNet b1.58, 1.85 bits per value) and trained with quantization-aware straight-through estimation, so that it can hold long-tail facts instead of competing with the network for accelerator memory.

%=====================================================================
\section{Supervised fine-tuning, and five things I would do differently}
\label{sec:sft}

Both models were fine-tuned with a response-only cross-entropy loss under the \code{metis-chat/v1} template, which has two modes: in \code{think} mode the response opens with a reasoning block, and in \code{direct} mode it answers straight away.

\paragraph{Stage 1: cold-start fine-tuning} used 120.01B tokens of long reasoning traces in mathematics, science and code. On Logos it took 8,182 steps and ended at \code{6958ef5907eb}, with gates at 60B and 120B tokens. The 120B gate reached a held-out response loss of 0.4612 (final training loss 0.2868) and, in think mode, 59.58\% on pooled AIME, 50.82\% on GPQA Diamond, 42.61\% on HMMT February 2026 and 62.86\% on MMLU. It was a capable reasoner and a poor assistant: 32.76\% on IFEval and 12.40\% on BFCL v4.

\paragraph{Stage 2: overall fine-tuning} used 80.00B tokens across every domain, including instruction following, tool calling and conversation. On Logos it took 2,735 steps and ended at \code{44b8e759242f}, with gates at 40B (\code{4538f48d884d}) and 80B tokens. The 80B checkpoint kept the reasoning (60.00\% on AIME, 52.02\% on GPQA Diamond, 68.00\% on MMLU and 56.14\% on MMLU-Pro in think mode) and became an assistant: 71.07\% on IFEval and 58.60\% on BFCL v4. The reinforcement-learning stages of Section~\ref{sec:rl} traded breadth for shorter reasoning, so for both models I released the end of Stage 2.

\subsection{Five failure modes and their fixes in Metis-1.7}
\begin{enumerate}
  \item \textbf{Direct-mode examples made by deletion.} To build direct-mode examples, a script (\code{strip.py}) deleted the reasoning block from 15--25\% of reasoning traces. Those traces define variables, helper functions and intermediate equations inside the reasoning block, so the remaining answer often refers to work the model never sees. That teaches direct mode to guess. At the cold-start gate, direct-mode IFEval trailed think mode by 10.92 points (21.84\% against 32.76\%). \emph{Fix:} the deletion script is retired in favour of 12.0B tokens of direct-mode answers generated natively without reasoning and verified against ground-truth answers and unit tests.
  \item \textbf{Verification loops inherited from teachers.} At the released Logos checkpoint, median think-mode lengths on the gate suite were 23,400 tokens on AIME (21.67\% truncated at 64K), 42,377 on HMMT February 2026 (36.36\% truncated) and 13,857 on GPQA Diamond. QwQ-32B and DeepSeek-R1-0528 traces in the fine-tuning pool recount or double-check four to seven times after the boxed answer is already derived, and Logos learned the habit faithfully. \emph{Fix:} a filter rejects traces with more than two post-solution verification passes, or more than 4,096 thinking tokens on prompts of difficulty at most 2/5, while keeping genuine 16K to 64K-token derivations in software engineering and olympiad mathematics.
  \item \textbf{A hidden teacher mix.} Of the 9.34M rows of \code{openbmb/UltraData-SFT-2605}, 6.93M (74.2\%) came from early-2025 teachers: DeepSeek-R1-0528 (3.89M), Qwen2.5-72B (2.09M) and QwQ-32B (740K). \emph{Fix:} all 6.93M rows (about 18B tokens) are removed, and only late-2026 teachers are kept (Qwen3.8-Flash-Next, DeepSeek-V4.1-Flash-1017 and GLM-5.3-Flash).
  \item \textbf{No voice of its own.} Only 4,800 identity rows (0.008\% of 60B tokens) and 15M tokens of open conversation (0.04\% of 80B) entered training, and in open-ended chat Logos sounds like the average of its teachers. \emph{Fix:} 2.5B tokens of writing in a defined house voice (no trailing hook questions, sparing punctuation, no therapeutic clich\'es) and 600M epoch-weighted tokens of multi-turn identity data.
  \item \textbf{Off-policy cross-entropy in Stage 2.} Token-level cross-entropy on teacher text applies unbounded $-\log \pi_\theta(y^*_t\mid x,y^*_{<t})$ gradients to low-probability stylistic tokens, which collapses generation entropy before reinforcement learning even starts. \emph{Fix:} Stage 2 moves to a reference-anchored sampling objective,
  \begin{equation}
    \mathcal{L}(x,y^*) = -\frac{1}{|y^*|}\sum_{t=1}^{|y^*|} \operatorname{clip}\!\left(\left(\frac{\pi_\theta(y^*_t\mid x,y^*_{<t})}{\pi_{\text{ref}}(y^*_t\mid x,y^*_{<t})}\right)^{0.25}\!,\, 0.2,\, 2.5\right)\log\pi_\theta(y^*_t\mid x,y^*_{<t}),
  \end{equation}
  mixed with 10\% standard cross-entropy. The reference log-probabilities are precomputed offline in FP16, so the reference model costs no accelerator memory during training.
\end{enumerate}

%=====================================================================
\section{Reinforcement learning on Logos}
\label{sec:rl}

Starting from the released Logos checkpoint, I ran a six-stage reinforcement-learning and distillation study: hybrid-mode GSPO~\cite{zheng2025gspo}, five domain specialists, and on-policy distillation of the specialists back into one model. None of it became the released model. All of it taught me something. Table~\ref{tab:rl_suite} gives the eight-suite gate for every checkpoint.

\subsection{Teaching brevity with the wrong curriculum}
\code{hybrid_mode_gspo} ran for 264 optimizer steps over 79 rounds and ended at \code{ea50ea259f48}. It sampled 16 responses per prompt and alternated two short rounds at a short response cap with one long round at the full cap.

\paragraph{Systems.} Three changes kept rollout generation from eating the run:
\begin{enumerate}
  \item \textbf{Fused decode kernels.} I fused routing statistics and ragged expert dispatch, injected depth memory without copies and precomputed dispatch constants, taking each decode step from about 8,000 small kernel launches to one compact captured graph.
  \item \textbf{Two-cohort cap ratcheting.} The response cap tightened by 4,096 tokens whenever two consecutive closed cohorts had less than 1\% truncation among correct samples and less than 2\% among hard correct ones. It stepped from 49,152 to 20,480 tokens (Appendix~\ref{app:telemetry}) and was inherited by every downstream specialist.
  \item \textbf{Memory.} The rollout cache is released before the packed backward pass, and a device-memory guard is anchored at 100.01\,GiB (26\,GB reserve). Peak reserved memory stayed at 74.60\,GiB in GSPO and 60.57 to 66.38\,GiB in the specialists, with no out-of-memory failures.
\end{enumerate}

\paragraph{Verifier imbalance.} At step 198 (\code{75e8704e}) the gate suite told two stories at once. Think-mode IFEval had jumped 13.41 points, from 71.07\% to 84.48\% at a median of 243 tokens, and direct-mode GPQA Diamond had risen from 32.83\% to 35.35\%. Meanwhile think-mode GPQA Diamond had fallen from 52.02\% to 43.43\% (median 2,687 tokens, 5.2 times shorter), and pooled AIME from 60.00\% to 40.00\% (3,713 tokens, 6.3 times shorter). The prompt pool explains it:
\begin{itemize}
  \item 75.3\% of prompts were formatting and tool-calling tasks: 48.3\% function calls, 19.1\% instruction constraints and 7.8\% JSON schemas;
  \item 24.7\% were mathematics with numeric answers;
  \item none were science, because all 15,235 science problems of \code{PrimeIntellect/INTELLECT-3-RL} had been set aside for the reasoning specialist.
\end{itemize}
Three quarters of the reward came from tasks that a 150- to 400-token checklist solves. Rewarding brevity there taught the policy to answer graduate-level science with a checklist too. \emph{Fix:} in Metis-1.7 this stage becomes a 48- to 64-step domain-balanced bridge (30\% mathematics, 20\% science and STEM multiple choice, 15\% code with unit tests, 15\% instruction following, 10\% tool calls, 10\% knowledge questions), so science verifiers guard reasoning depth while length falls.

\subsection{Punished for being right}
\label{sec:signinv}
The specialists used a gated length reward with $\lambda = 0.5$. Within a prompt group that passes the gate, a correct sample receives reward 1 if its length is at most the median length of the group's correct samples, and $1-\lambda$ if longer; an incorrect sample receives 0. Advantages are group-mean centred, $A_i = (r_i - \bar r)/\sigma$, with $\sigma > 0$ a batch-level scale. That sounds harmless. It is not.

\begin{proposition}
\label{prop:signinv}
Consider a group of $G$ samples of which a fraction $p$ is correct. Let a fraction $q$ of the correct samples be short (reward 1) and $1-q$ long (reward $1-\lambda$, $0 < \lambda < 1$). Then every long correct sample has negative advantage if and only if
\[
  p > \frac{1-\lambda}{1 - \lambda(1-q)}.
\]
With the median split ($q = 1/2$) and $\lambda = 1/2$, the threshold is $p > 2/3$, that is, at least 11 correct samples out of 16.
\end{proposition}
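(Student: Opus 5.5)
The plan is a direct computation of the group-mean reward. Since $\sigma > 0$ is a batch-level scale shared by every sample in the group, the sign of $A_i$ equals the sign of $r_i - \bar r$. All long correct samples carry the same reward $1-\lambda$, so either all of them have negative advantage or none does. The statement therefore reduces to the single inequality $1-\lambda < \bar r$.

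The first step is to compute $\bar r$ from the composition of the group. A fraction $pq$ of the group is short correct with reward $1$, a fraction $p(1-q)$ is long correct with reward $1-\lambda$, and the remaining fraction $1-p$ is incorrect with reward $0$. Hence
\[
  \bar r = pq + p(1-q)(1-\lambda) = p\bigl(1 - \lambda(1-q)\bigr).
\]
The condition $1-\lambda < \bar r$ then reads $1-\lambda < p\bigl(1-\lambda(1-q)\bigr)$. The factor $1-\lambda(1-q)$ is strictly positive because $0<\lambda<1$ and $0\le q\le 1$, so dividing by it preserves the inequality. This gives exactly the threshold in the statement, and the chain of equivalences supplies both directions of the ``if and only if''.

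The second step is the specialisation. With $q=\lambda=1/2$ the threshold is $(1/2)/(1-1/4) = 2/3$. For $G=16$, the condition $pG > 32/3 \approx 10.67$ means at least $11$ correct samples.

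The only point that needs care is a bookkeeping caveat rather than a real obstacle. The fraction $q$ must be taken as the actual fraction of short correct samples that the median rule produces. Ties at the median, or an odd number of correct samples, make $q$ differ slightly from $1/2$. The proposition is stated in terms of $q$, so the exact claim is unaffected; only the ``$11$ of $16$'' corollary assumes $q=1/2$. I would add a remark that the threshold increases with $q$, so the corollary should be read under the median split as stated. The proof also implicitly assumes the group passes the gate, so the length reward is actually applied, and that $p>0$; I would state both assumptions explicitly.
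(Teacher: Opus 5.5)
Your proof is correct and follows the paper's own argument: compute $\bar r = p\,[1-\lambda(1-q)]$, use $\sigma>0$ so that the sign of $A$ is the sign of $1-\lambda-\bar r$, and divide by the positive factor. One slip in your closing remark: the threshold $(1-\lambda)/(1-\lambda+\lambda q)$ \emph{decreases} in $q$, so median ties that push $q$ above $1/2$ lower the bar below $2/3$; also, the non-degeneracy condition that stops the ``every long correct sample'' clause from being vacuously true is $p(1-q)>0$, not just $p>0$.
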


\begin{proof}
The group mean is $\bar r = p\,[\,q + (1-q)(1-\lambda)\,] = p\,[1 - \lambda(1-q)]$. A long correct sample has $A = (1 - \lambda - \bar r)/\sigma$, which is negative exactly when $1-\lambda < p\,[1-\lambda(1-q)]$.
\end{proof}

In words: once a prompt becomes easy, the policy gradient lowers the probability of every correct solution longer than the group median, however valid its reasoning. The reward punishes the model for showing its work. The reasoning specialist, trained on mathematics and science verifiers (\code{INTELLECT-3-RL} mathematics and science, \code{UltraData-RL-2609} mathematics), followed the script almost exactly:
\begin{itemize}
  \item \textbf{Step 40} (\code{389c7007}). The first 40 steps won back what late GSPO had lost. Think-mode GPQA Diamond rose 8.08 points, from 35.86\% to 43.94\%, at a median of 1,122 tokens, 12.3 times shorter than the released model. The rest of GPQA Main reached 38.40\%, 2.80 points above the released model at 16.2 times fewer tokens. MMLU-Pro reached 57.29\% at 843 tokens, against the released model's 56.14\% at 5,590. HMMT February 2026 tripled, from 12.12\% to 36.36\%.
  \item \textbf{Step 50} (\code{9c708434}). HMMT February 2026 rose to 42.42\%, 3.03 points above the released model at a median of 7,964 tokens against 42,377. Pooled AIME reached 35.00\%, direct-mode IFEval 84.29\% and direct-mode MMLU-Pro 47.29\%.
  \item \textbf{Step 78} (\code{c640b106}). The mean group pass rate had crossed 56.7\%, and high-pass groups now exceeded $p > 2/3$. With the length budget at 1,240 tokens, the inversion pushed down every correct derivation longer than that. GPQA Diamond fell back to 34.34\% (median 676 tokens) and HMMT February 2026 to 24.24\%.
\end{itemize}
Step 40 is the tell. Short reasoning was never the problem: the decline came from negative advantages on valid 1,200- to 3,000-token derivations, not from the response cap.

The fix is one line. Clamp the advantage of correct samples at zero,
\begin{equation}
  A_i \leftarrow \max(0, A_i) \quad \text{for every correct sample } i,
\end{equation}
which keeps the positive signal for short correct samples, removes the negative one for long correct samples, and still pushes down incorrect ones. The code, knowledge and agentic specialists trained with the clamp. The knowledge specialist raised HMMT February 2026 by 21.21 points over the GSPO checkpoint, to 33.33\%, and reached 84.10\% on direct-mode IFEval. The reasoning specialist had already finished before the clamp existed and was not rerun with it.

\subsection{Agents waiting on containers}
The first rounds of the agentic specialist took more than 15 minutes each, for three reasons:
\begin{enumerate}
  \item sandbox containers for all 64 episodes were started one by one before rollout began, although 95\% of prompts were single-call tool-schema checks that never needed a sandbox at all;
  \item episodes cancelled when a round reached its target never released their filesystem mounts;
  \item records cancelled in short rounds were deferred to later rounds, which forced full, uncancelled long rounds every third round.
\end{enumerate}
Starting a sandbox only at an episode's first tool call, closing every episode environment in a \code{finally} block, and retiring cancelled records instead of deferring them fixed all three. Round times fell to those in Table~\ref{tab:agentic_telemetry}, the deferral queue stayed empty, and the cap ratcheted four times, from 20,480 to 4,096 tokens, with no truncation. Among post-GSPO checkpoints, the agentic specialist set the highest scores on BFCL v4 in think mode (52.80\%) and direct mode (47.20\%), IFEval in think mode (84.10\%), GPQA Diamond in direct mode (38.89\%) and MMLU in think mode (67.14\%).

\subsection{A distillation step too small to matter}
The final stage (\code{opd_consolidation}, \code{75c43443}) distilled the GSPO checkpoint toward the four active specialists, with reverse KL over the union of each teacher's top 32 tokens, stop-token probabilities merged and routers frozen (route entropy ratio 1.0000, loss 0.001137). It did help: direct-mode AIME from 0.00\% to 3.33\%, HMMT February 2026 from 12.12\% to 21.21\%, think-mode GPQA Diamond from 35.86\% to 36.87\%, and think-mode IFEval from 80.84\% to 83.14\%. It did not capture the peaks of the step-40 reasoning or agentic specialists, and for three unglamorous reasons:
\begin{enumerate}
  \item \textbf{One step, at warm-up speed.} 224 trajectories on 224 data-parallel ranks with no gradient accumulation made exactly one optimizer step. With 10 warm-up steps, that step ran at $10^{-7}$, a tenth of the target rate, and moved the weights about 100 times less than a real run.
  \item \textbf{The best teacher had been deleted.} Automatic checkpoint pruning had removed the reasoning specialist's step-40 and step-50 checkpoints, so the weaker step-78 checkpoint served as teacher.
  \item \textbf{Truncated trajectories.} Sequences were cut at 8,192 tokens, removing the tails and stop tokens of longer trajectories.
\end{enumerate}
\emph{Fix:} periodic specialist checkpoints are pinned until the gate suite picks each specialist's best, and distillation runs for 40 to 80 optimizer steps (9,000 to 18,000 untruncated student rollouts, two updates per round) at the full learning rate of $10^{-6}$ past warm-up.

\begin{table}[tbp]
\centering
\caption{Eight-suite gate of the Logos post-training study: pass@1 (\%), with median generated tokens in parentheses. The released model is the first column. Bold marks the best entry in each row.}
\label{tab:rl_suite}
\resizebox{\textwidth}{!}{%
\begin{tabular}{@{}lrrrrrrrrr@{}}
\toprule
\textbf{Suite : mode} & \textbf{Overall SFT} & \textbf{Cold start} & \textbf{GSPO 198} & \textbf{GSPO 264} & \textbf{Reason. 40} & \textbf{Reason. 50} & \textbf{Knowledge} & \textbf{Agentic} & \textbf{Distilled} \\
\midrule
AIME : think & 60.00 (23400) & 59.58 (23965) & 40.00 (3713) & 33.33 (2224) & 31.67 (4073) & \textbf{35.00} (3541) & 25.00 (2809) & 33.33 (2012) & 28.33 (2848) \\
AIME : direct & 3.33 (707) & 3.02 (720) & 3.33 (419) & 0.00 (338) & \textbf{6.67} (381) & \textbf{6.67} (341) & 5.00 (259) & 1.67 (240) & 3.33 (339) \\
HMMT Feb 26 : think & 39.39 (42377) & 42.61 (51033) & 36.36 (12967) & 12.12 (7864) & 36.36 (9862) & \textbf{42.42} (7964) & 33.33 (4168) & 18.18 (6982) & 21.21 (4722) \\
GPQA Diamond : think & 52.02 (13857) & 50.82 (16205) & 43.43 (2687) & 35.86 (978) & \textbf{43.94} (1122) & 41.41 (1002) & 35.35 (870) & 33.84 (1072) & 36.87 (951) \\
GPQA Diamond : direct & 32.83 (747) & 32.13 (720) & 35.35 (291) & 31.82 (145) & 31.31 (164) & 32.32 (173) & 32.83 (152) & \textbf{38.89} (150) & 26.77 (152) \\
GPQA Main rest : think & 35.60 (12521) & 44.80 (15032) & 37.60 (1556) & 34.00 (539) & \textbf{38.40} (774) & 32.80 (723) & 37.60 (615) & 38.00 (590) & 34.80 (591) \\
GPQA Main rest : direct & 32.00 (698) & 24.80 (691) & 30.00 (226) & 30.80 (119) & \textbf{35.60} (147) & 34.80 (143) & 28.40 (133) & 28.00 (125) & 32.00 (133) \\
MMLU-Pro : think & 56.14 (5590) & 54.57 (6855) & 54.29 (1192) & 53.43 (693) & \textbf{57.29} (843) & 57.00 (799) & 53.29 (748) & 53.57 (737) & 53.00 (732) \\
MMLU-Pro : direct & 44.00 (562) & 38.43 (482) & 44.71 (173) & 41.57 (101) & 46.43 (115) & \textbf{47.29} (110) & 42.43 (100) & 43.29 (93) & 42.29 (102) \\
MMLU : think & 68.00 (1465) & 62.86 (1983) & 66.14 (255) & 65.43 (163) & 67.00 (182) & 67.00 (157) & 65.86 (150) & \textbf{67.14} (160) & 65.43 (157) \\
MMLU : direct & 61.57 (279) & 52.14 (219) & 61.00 (65) & 61.57 (48) & 62.86 (53) & \textbf{63.29} (50) & 61.86 (45) & 60.71 (46) & 60.86 (47) \\
IFEval : think & 71.07 (1197) & 32.76 (1740) & \textbf{84.48} (243) & 80.84 (119) & 81.99 (130) & 83.52 (130) & 81.03 (128) & 84.10 (133) & 83.14 (127) \\
IFEval : direct & 69.92 (277) & 21.84 (263) & 81.61 (86) & 82.57 (60) & 83.52 (63) & \textbf{84.29} (63) & 84.10 (62) & 82.38 (62) & 80.84 (58) \\
BFCL v4 : think & \textbf{58.60} (260) & 12.40 (385) & 53.00 (84) & 51.00 (63) & 49.60 (67) & 51.20 (67) & 52.00 (65) & 52.80 (66) & 50.20 (64) \\
BFCL v4 : direct & \textbf{56.40} (82) & 21.60 (126) & 45.40 (54) & 46.60 (49) & 47.00 (51) & 46.20 (50) & 46.20 (49) & 47.20 (50) & 46.80 (49) \\
\bottomrule
\end{tabular}}
\end{table}

%=====================================================================
\section{Praxis post-training}
\label{sec:praxis}

Overall fine-tuning had been the best general-purpose checkpoint for Logos, so Praxis took the same release path without the reinforcement-learning detour: context extension, cold-start fine-tuning and overall fine-tuning, to 1,022,774,982,313 cumulative tokens (Table~\ref{tab:praxis_lineage}). It was released at the end of Stage 2 (\code{8e7512a491e9}).

\paragraph{Cold-start fine-tuning} (120.00B tokens, 14,318 steps) took Praxis from 0.00\% on contest mathematics to 27.08\% on pooled AIME 2025 and 2026 in think mode (14.69\% at the 60B gate). It also reached 17.05\% on HMMT February 2026, 31.57\% on GPQA Diamond (29.51\% direct) and 46.86\% on MMLU (Table~\ref{tab:praxis_suite}). As with Logos, instruction following and tool calling stayed low before Stage 2 (22.22\% on IFEval and 14.40\% on BFCL v4 in think mode).

\paragraph{Overall fine-tuning} (80.02B tokens, 4,787 steps, final loss 0.1828, held-out response loss 0.6844) turned it into an assistant. IFEval rose by 42.34 points in think mode (to 64.56\%) and 50.39 points in direct mode (to 66.48\%); BFCL v4 by 38.00 points in think mode (to 52.40\%; 55.40\% at the 40B gate) and 23.20 points in direct mode (to 50.40\%). HMMT February 2026 rose to 24.24\%, MMLU-Pro to 34.57\% and MMLU to 53.43\% (think) and 50.43\% (direct).

\paragraph{Release evaluation.} On the post-trained suite (Appendix~\ref{app:post}), the released Praxis scores 27.08\% on AIME 2025, 31.25\% on AIME 2026, 24.17\% on HMMT November 2025, 60.31\% on AMC 2023 and 41.25\% on Brumo 2025, all avg@8. It also scores 76.80\% on GSM8K, 78.80\% on MATH-500, 62.80\% on HumanEval (50.00\% on HumanEval+), 81.60\% on ARC-Easy and 79.20\% on ARC-Challenge. On RULER it scores 75.54\%, 56.23\% and 42.38\% at 4K, 32K and 128K.

\begin{table}[tbp]
\centering
\caption{Training lineage of Metis-1.6 Praxis. Rows 2a and 3a are intermediate gates that share a parent with the final row of their stage. The released model is row 3b.}
\label{tab:praxis_lineage}
\scriptsize
\setlength{\tabcolsep}{3pt}
\resizebox{\textwidth}{!}{%
\begin{tabular}{@{}rlrrrrrll@{}}
\toprule
\textbf{\#} & \textbf{Stage} & \textbf{End cursor} & \textbf{Stage tokens} & \textbf{Steps} & \textbf{Loss} & \textbf{Grad} & \textbf{Parent} & \textbf{Output} \\
\midrule
0 & \code{pretraining} & 804,755,808,835 & 804,755,808,835 & 191,870 & 1.6810 & 0.2410 & -- & \code{91aec83eac8f} \\
1 & \code{context_extension} & 822,755,808,835 & 18,000,000,000 & 3,456 & 1.1832 & 0.2342 & \code{91aec83eac8f} & \code{0ecb0918cd31} \\
2a & \code{cold_start_sft} (60B gate) & 882,757,458,866 & 60,001,650,031 & 7,159 & 0.7618 & 0.1714 & \code{0ecb0918cd31} & \code{265983c10bd7} \\
2b & \code{cold_start_sft} & 942,759,418,075 & 120,003,609,240 & 14,318 & 0.7144 & 0.1602 & \code{0ecb0918cd31} & \code{4959ad842036} \\
3a & \code{overall_sft} (40B gate) & 982,775,588,761 & 40,016,170,686 & 2,394 & 0.2104 & 0.1755 & \code{4959ad842036} & \code{37f14d20288e} \\
3b & \code{overall_sft} (released) & 1,022,774,982,313 & 80,015,564,238 & 4,787 & 0.1828 & 0.1703 & \code{4959ad842036} & \code{8e7512a491e9} \\
\bottomrule
\end{tabular}}
\end{table}

\begin{table}[tbp]
\centering
\caption{Eight-suite gate of Praxis post-training, as in Table~\ref{tab:rl_suite}. The released model is the last column.}
\label{tab:praxis_suite}
\small
\resizebox{\textwidth}{!}{%
\begin{tabular}{@{}lrrrrr@{}}
\toprule
\textbf{Suite : mode} & \textbf{Extended base} & \textbf{Cold start 60B} & \textbf{Cold start 120B} & \textbf{Overall 40B} & \textbf{Overall 80B} \\
\midrule
AIME : think & 0.00 (4096) & 14.69 (63982) & \textbf{27.08} (64101) & 18.33 (58796) & 25.00 (56893) \\
AIME : direct & 0.00 (512) & 0.10 (620) & \textbf{0.83} (754) & 0.00 (636) & 0.00 (648) \\
HMMT Feb 26 : think & 0.00 (4096) & 10.42 (65413) & 17.05 (65392) & 9.09 (65404) & \textbf{24.24} (65404) \\
GPQA Diamond : think & 0.00 (4096) & 31.16 (18572) & \textbf{31.57} (20354) & 28.28 (14873) & 27.78 (17030) \\
GPQA Diamond : direct & 24.24 (512) & 26.83 (699) & \textbf{29.51} (935) & 25.76 (668) & 22.22 (897) \\
GPQA Main rest : think & 0.40 (4822) & 14.40 (15881) & 15.60 (16034) & \textbf{25.60} (12950) & 14.80 (15120) \\
GPQA Main rest : direct & 3.60 (4778) & \textbf{27.20} (754) & 26.00 (776) & 26.40 (600) & 26.00 (728) \\
MMLU-Pro : think & 0.00 (6031) & 29.57 (8628) & 28.71 (8336) & 33.14 (7210) & \textbf{34.57} (7277) \\
MMLU-Pro : direct & 1.29 (3748) & 25.43 (541) & 27.86 (575) & 28.43 (546) & \textbf{32.29} (538) \\
MMLU : think & 0.29 (4510) & 44.14 (4507) & 46.86 (4720) & 45.57 (3273) & \textbf{53.43} (3407) \\
MMLU : direct & 3.71 (2974) & 39.14 (221) & 40.43 (249) & 46.14 (214) & \textbf{50.43} (239) \\
IFEval : think & 2.11 (1275) & 19.73 (1272) & 22.22 (1641) & 61.88 (1162) & \textbf{64.56} (1197) \\
IFEval : direct & 8.43 (1245) & 14.18 (210) & 16.09 (195) & 60.73 (253) & \textbf{66.48} (239) \\
BFCL v4 : think & 6.40 (2680) & 15.40 (629) & 14.40 (730) & \textbf{55.40} (268) & 52.40 (289) \\
BFCL v4 : direct & 26.60 (1748) & 28.40 (228) & 27.20 (266) & 47.00 (67) & \textbf{50.40} (74) \\
\bottomrule
\end{tabular}}
\end{table}

%=====================================================================
\section{How good is Metis-1.6, really?}
\label{sec:eval}

\subsection{Protocol}
\label{sec:evalprotocol}
\paragraph{Base models.} The context-extended base checkpoints were evaluated on 50 benchmarks with one harness. Multiple-choice tasks use likelihood scoring and the rest greedy generation, with each model's own tokenizer and identical prompts and scorers. No model is run past its context window. Appendix~\ref{app:protocols} gives the protocol of every benchmark. I report the mean of the 44 short-context benchmarks and, for models that can run every length, the mean of all 50.

\paragraph{Post-trained models.} The post-trained releases of Metis and of the comparison models were run on one harness with identical prompts, scorers and verifiers. The suite covers competition mathematics, MATH-500~\cite{hendrycks2021math}, GSM8K~\cite{cobbe2021gsm8k}, GPQA~\cite{rein2023gpqa}, MMLU~\cite{hendrycks2020mmlu}, MMLU-Pro~\cite{wang2024mmlupro}, Humanity's Last Exam~\cite{phan2025humanity}, LiveCodeBench~\cite{jain2024livecodebench}, HumanEval and HumanEval+~\cite{chen2021codex,liu2023evalplus}, BFCL~\cite{patil2023gorilla}, IFEval~\cite{zhou2023ifeval}, RULER~\cite{hsieh2024ruler}, LongBench v2~\cite{bai2024longbenchv2} and more. Metis is reported in think and direct modes, sampled at temperature 0.6, top-p 0.95 and top-k 20. Contest mathematics (AIME, HMMT, AMC, Brumo, Apex) is scored as the mean over eight samples per problem (avg@8); everything else uses one sample. Every model first ran with a 32,768-token generation limit; items that hit the limit were rerun with a higher limit, or none, until no response was truncated, for Metis and the comparators alike. ScaleAI MultiChallenge, which appears on the model cards, is omitted here. Scores on this harness are not directly comparable with those published by model developers.

\subsection{Where it shines and where it struggles}
Table~\ref{tab:main} compares the released models with open post-trained models of similar size on the benchmarks I consider most telling, and Appendix~\ref{app:post} has all of them. Figure~\ref{fig:domains} breaks the 40-benchmark mean into eight domains and asks, for each, how much training data it takes to beat Logos. Figure~\ref{fig:domains_active} asks the same question of compute.

\begin{table}[tbp]
\centering
\caption{Released Metis-1.6 models (think mode) against open post-trained models on selected benchmarks. The last row is the mean over the 40 benchmarks of Appendix~\ref{app:post}. Bold marks the best score in each row. Training tokens are as published.}
\label{tab:main}
\scriptsize
\setlength{\tabcolsep}{3pt}
\textbf{(a) Logos}\par\smallskip
\resizebox{\textwidth}{!}{\begin{tabular}{@{}lrrrrrrrrrrr@{}}
\toprule
 & \textbf{Logos} & \textbf{Qwen3} & \textbf{Qwen3.5} & \textbf{Gemma 4} & \textbf{Qwen3} & \textbf{Qwen3.5} & \textbf{Gemma 4} & \textbf{Granite 4.1} & \textbf{Olmo 3} & \textbf{LFM2.5} & \textbf{Granite 4.0} \\
\textbf{Benchmark} & \textbf{} & \textbf{4B} & \textbf{4B} & \textbf{E4B} & \textbf{8B} & \textbf{9B} & \textbf{12B} & \textbf{8B} & \textbf{7B} & \textbf{8B-A1B} & \textbf{H Tiny} \\
\midrule
Active params per pass & 1.18B & 3.63B & 3.57B & 4.03B & 6.95B & 6.92B & 10.94B & 7.97B & 6.48B & 1.42B & 1.31B \\
Training tokens & 1.02T & 36T & undisclosed & undisclosed & 36T & undisclosed & undisclosed & 14.9T & 6.08T & 38T & 22.5T \\
\midrule
AIME 2025 & 67.50 & 53.75 & 64.17 & 41.25 & 52.08 & 71.25 & \textbf{79.17} & 16.67 & 24.58 & 20.83 & 7.92 \\
AIME 2026 & 67.50 & 63.75 & 73.33 & 46.67 & 61.25 & 78.75 & \textbf{79.58} & 13.33 & 29.17 & 23.33 & 3.75 \\
HMMT Feb 2025 & \textbf{62.92} & 35.83 & 36.25 & 25.00 & 33.75 & 45.42 & 61.67 & 10.00 & 10.42 & 10.42 & 2.08 \\
HMMT Feb 2026 & \textbf{63.64} & 34.85 & 35.61 & 31.82 & 41.67 & 39.39 & 53.03 & 12.12 & 18.94 & 14.39 & 1.52 \\
IMO-AnswerBench & \textbf{57.00} & 37.00 & 33.50 & 29.50 & 37.00 & 38.50 & 44.50 & 10.00 & 12.00 & 4.00 & 4.00 \\
MATH-500 & 89.40 & 88.00 & 83.60 & 90.40 & 88.20 & 88.40 & \textbf{94.20} & 78.40 & 76.60 & 71.40 & 67.00 \\
GPQA Diamond & 55.05 & 52.02 & 76.26 & 61.62 & 53.54 & \textbf{81.31} & 74.24 & 40.40 & 37.37 & 28.28 & 30.30 \\
MMLU-Pro & 59.14 & 64.86 & 66.86 & 68.00 & 68.86 & \textbf{71.43} & 61.57 & 59.14 & 47.00 & 49.86 & 33.43 \\
MMLU & 68.00 & 78.14 & 73.57 & 79.57 & \textbf{81.43} & 74.29 & 76.29 & 77.29 & 62.86 & 65.57 & 59.14 \\
LiveCodeBench v6 & 24.57 & 34.29 & 17.14 & \textbf{42.29} & 32.00 & 20.57 & 39.43 & 22.29 & 0.57 & 18.29 & 20.57 \\
HumanEval+ & 62.20 & 75.61 & 74.39 & 79.88 & 74.39 & 77.44 & \textbf{83.54} & 70.12 & 67.68 & 64.63 & 64.02 \\
BFCL v4 & 58.60 & 58.20 & 48.20 & 66.20 & 59.80 & 54.60 & \textbf{67.20} & 60.00 & 44.80 & 36.60 & 55.80 \\
IFEval (strict) & 71.07 & 86.21 & 89.66 & 77.78 & 86.97 & 92.15 & \textbf{92.91} & 84.87 & 85.25 & 84.67 & 78.74 \\
IFBench & 39.00 & 34.00 & 60.33 & 41.00 & 35.00 & 63.67 & \textbf{74.00} & 36.67 & 34.33 & 52.00 & 20.33 \\
RULER 32K & 51.67 & 85.00 & \textbf{97.22} & 86.11 & 87.78 & 90.56 & 88.33 & 76.67 & 53.33 & 41.11 & 61.67 \\
LongBench v2 & 13.20 & 31.60 & 35.20 & 35.60 & 33.60 & 34.80 & \textbf{40.00} & 24.40 & 33.60 & 15.60 & 14.40 \\
SimpleQA Verified & 3.78 & 3.80 & 4.80 & 3.20 & 6.40 & 7.00 & 4.60 & 5.00 & 3.20 & \textbf{8.00} & 4.60 \\
\midrule
Mean of all 40 & 52.93 & 53.68 & 58.51 & 54.59 & 55.09 & 61.90 & \textbf{64.06} & 46.72 & 43.38 & 40.52 & 38.41 \\
\bottomrule
\end{tabular}
}
\par\bigskip
\textbf{(b) Praxis}\par\smallskip
\resizebox{\textwidth}{!}{\begin{tabular}{@{}lrrrrrrrrrr@{}}
\toprule
 & \textbf{Praxis} & \textbf{Qwen3.5} & \textbf{Qwen3.5} & \textbf{Gemma 4} & \textbf{LFM2.5} & \textbf{SmolLM3} & \textbf{Llama 3.2} & \textbf{Granite 4.1} & \textbf{Ministral 3} & \textbf{Qwen3} \\
\textbf{Benchmark} & \textbf{} & \textbf{0.8B} & \textbf{2B} & \textbf{E2B} & \textbf{2.6B} & \textbf{3B} & \textbf{3B} & \textbf{3B} & \textbf{3B} & \textbf{4B} \\
\midrule
Active params per pass & 0.464B & 0.498B & 1.37B & 1.90B & 2.44B & 2.81B & 2.82B & 3.15B & 3.03B & 3.63B \\
Training tokens & 1.02T & undisclosed & undisclosed & undisclosed & 34T & 11.2T & 9T & 14.9T & 1-3T & 36T \\
\midrule
AIME 2025 & 27.08 & 0.42 & 18.75 & 29.58 & 42.08 & 34.58 & 0.00 & 5.83 & 15.00 & \textbf{53.75} \\
AIME 2026 & 31.25 & 0.00 & 14.17 & 40.83 & 40.83 & 40.83 & 0.42 & 4.58 & 9.58 & \textbf{63.75} \\
HMMT Feb 2025 & 16.25 & 0.00 & 10.83 & 20.83 & 24.58 & 24.17 & 0.42 & 0.42 & 5.83 & \textbf{35.83} \\
HMMT Feb 2026 & 22.73 & 0.00 & 12.50 & 18.94 & 22.35 & 26.89 & 0.76 & 0.76 & 6.06 & \textbf{34.85} \\
IMO-AnswerBench & 16.00 & 5.00 & 11.50 & 20.50 & 23.00 & 27.50 & 0.00 & 7.00 & 9.50 & \textbf{37.00} \\
MATH-500 & 78.80 & 43.60 & 72.80 & 86.00 & 84.60 & 85.20 & 32.40 & 61.60 & 64.60 & \textbf{88.00} \\
GPQA Diamond & 27.78 & 23.23 & 51.01 & 42.93 & \textbf{54.04} & 33.33 & 24.75 & 32.32 & 33.33 & 52.02 \\
MMLU-Pro & 34.57 & 27.86 & 53.14 & 59.43 & 62.14 & 55.57 & 34.14 & 34.57 & 48.29 & \textbf{64.86} \\
MMLU & 53.43 & 38.00 & 70.00 & 73.43 & \textbf{78.29} & 71.14 & 57.86 & 52.86 & 67.57 & 78.14 \\
LiveCodeBench v6 & 12.57 & 4.00 & 9.14 & 19.43 & 30.86 & 23.43 & 12.00 & 19.43 & 24.00 & \textbf{34.29} \\
HumanEval+ & 42.07 & 13.41 & 40.85 & 72.56 & 74.39 & 71.34 & 46.95 & 65.24 & 65.85 & \textbf{75.61} \\
BFCL v4 & 52.40 & 34.20 & 45.60 & \textbf{65.20} & 31.80 & 38.40 & 24.20 & 44.20 & 52.20 & 58.20 \\
IFEval (strict) & 64.56 & 63.22 & 79.50 & 76.05 & \textbf{94.25} & 70.11 & 70.69 & 81.80 & 57.66 & 86.21 \\
IFBench & 34.33 & 32.33 & 44.33 & 39.33 & \textbf{57.67} & 25.00 & 30.00 & 35.67 & 26.33 & 34.00 \\
RULER 32K & 25.00 & 82.78 & \textbf{93.33} & 78.33 & 64.44 & 70.00 & 71.11 & 67.78 & 81.11 & 85.00 \\
LongBench v2 & 22.40 & 20.00 & 24.80 & 28.00 & 26.40 & 26.00 & 20.40 & 10.00 & 20.00 & \textbf{31.60} \\
SimpleQA Verified & 2.40 & 1.40 & 3.20 & 3.80 & 2.20 & 4.20 & 2.80 & \textbf{5.00} & 4.80 & 3.80 \\
\midrule
Mean of all 40 & 36.62 & 27.61 & 43.72 & 48.64 & 51.12 & 44.69 & 33.92 & 36.45 & 37.17 & \textbf{53.68} \\
\bottomrule
\end{tabular}
}
\end{table}

\begin{figure}[tbp]
\centering
\begin{tikzpicture}
\begin{groupplot}[group style={group size=4 by 2,horizontal sep=0.55cm,vertical sep=1.35cm},
  width=0.31\linewidth,height=4.3cm,xmode=log,xmin=0.7,xmax=60,xtick={1,10},
  log ticks with fixed point,grid=major,grid style={black!8},axis line style={black!35},tick style={black!35},tick label style={font=\scriptsize},label style={font=\small},clip=false,title style={font=\scriptsize,yshift=-3pt}]
\nextgroupplot[title={\textbf{Mathematics}\\Logos 2nd of 30},title style={align=center},ymin=0.7,ymax=75.9]
\fill[frontier,fill opacity=0.075] (axis cs:0.7,30.84) rectangle (axis cs:11.200,75.9);
\draw[black!30,densely dashed,line width=0.4pt] (axis cs:11.200,0.7) -- (axis cs:11.200,75.9);
\draw[black!30,densely dashed,line width=0.4pt] (axis cs:0.7,30.84) -- (axis cs:60,30.84);
\addplot[const plot,very thick,frontier] coordinates { (1.02,65.83) (60,65.83)};
\addplot[only marks,mark=*,mark size=1.4pt,black!35] coordinates { (36.0,55.05) (38.0,30.21) (22.5,21.00) (10.0,16.76) (5.13,8.00) (36.0,55.63) (12.0,30.84) (14.9,30.30) (6.08,33.05) (12.0,22.29) (28.0,65.31) (14.9,18.79) (2.5,25.93) (7.7,47.80) (11.2,45.23) (34.0,42.50) (4.0,24.25) (9.0,11.36) (7.7,54.76)};
\addplot[only marks,mark=diamond*,mark size=3.2pt,black] coordinates { (1.02,65.83)};
\addplot[only marks,mark=diamond*,mark size=3.2pt,mark options={fill=white,draw=black,thick}] coordinates { (1.02,35.97)};
\nextgroupplot[title={\textbf{Science and knowledge}\\Logos 9th of 30},title style={align=center},ymin=15.5,ymax=67.0]
\fill[frontier,fill opacity=0.075] (axis cs:0.7,38.98) rectangle (axis cs:11.200,67.0);
\draw[black!30,densely dashed,line width=0.4pt] (axis cs:11.200,15.5) -- (axis cs:11.200,67.0);
\draw[black!30,densely dashed,line width=0.4pt] (axis cs:0.7,38.98) -- (axis cs:60,38.98);
\addplot[const plot,very thick,frontier] coordinates { (1.02,46.43) (28.0,61.88) (60,61.88)};
\addplot[only marks,mark=*,mark size=1.4pt,black!35] coordinates { (36.0,49.11) (38.0,38.32) (22.5,31.01) (10.0,28.66) (5.13,20.45) (36.0,51.68) (12.0,42.14) (14.9,43.25) (6.08,36.95) (12.0,45.24) (28.0,61.88) (14.9,31.09) (2.5,36.69) (7.7,42.64) (11.2,38.98) (34.0,51.29) (4.0,31.69) (9.0,28.94) (7.7,44.57)};
\addplot[only marks,mark=diamond*,mark size=3.2pt,black] coordinates { (1.02,46.43)};
\addplot[only marks,mark=diamond*,mark size=3.2pt,mark options={fill=white,draw=black,thick}] coordinates { (1.02,29.32)};
\nextgroupplot[title={\textbf{Factual recall}\\Logos 10th of 30},title style={align=center},ymin=0,ymax=14.1]
\fill[frontier,fill opacity=0.075] (axis cs:0.7,5.96) rectangle (axis cs:11.200,14.1);
\draw[black!30,densely dashed,line width=0.4pt] (axis cs:11.200,0) -- (axis cs:11.200,14.1);
\draw[black!30,densely dashed,line width=0.4pt] (axis cs:0.7,5.96) -- (axis cs:60,5.96);
\addplot[const plot,very thick,frontier] coordinates { (1.02,7.11) (12.0,7.99) (14.9,9.54) (38.0,12.08) (60,12.08)};
\addplot[only marks,mark=*,mark size=1.4pt,black!35] coordinates { (36.0,6.39) (38.0,12.08) (22.5,6.75) (10.0,5.72) (5.13,3.97) (36.0,9.06) (12.0,7.99) (14.9,9.54) (6.08,4.17) (12.0,6.56) (28.0,8.00) (14.9,5.96) (2.5,3.50) (7.7,4.14) (11.2,5.70) (34.0,5.67) (4.0,5.43) (9.0,6.14) (7.7,5.28)};
\addplot[only marks,mark=diamond*,mark size=3.2pt,black] coordinates { (1.02,7.11)};
\addplot[only marks,mark=diamond*,mark size=3.2pt,mark options={fill=white,draw=black,thick}] coordinates { (1.02,2.81)};
\nextgroupplot[title={\textbf{Code and tools}\\Logos 17th of 30},title style={align=center},ymin=10.1,ymax=78.4]
\fill[frontier,fill opacity=0.075] (axis cs:0.7,55.40) rectangle (axis cs:11.200,78.4);
\draw[black!30,densely dashed,line width=0.4pt] (axis cs:11.200,10.1) -- (axis cs:11.200,78.4);
\draw[black!30,densely dashed,line width=0.4pt] (axis cs:0.7,55.40) -- (axis cs:60,55.40);
\addplot[const plot,very thick,frontier] coordinates { (1.02,55.40) (7.7,61.24) (28.0,65.73) (60,65.73)};
\addplot[only marks,mark=*,mark size=1.4pt,black!35] coordinates { (36.0,65.20) (38.0,49.70) (22.5,55.52) (10.0,36.34) (5.13,16.73) (36.0,63.65) (12.0,60.31) (14.9,59.45) (6.08,49.00) (12.0,51.91) (28.0,65.73) (14.9,52.80) (2.5,44.45) (7.7,56.41) (11.2,55.55) (34.0,57.28) (4.0,53.81) (9.0,35.88) (7.7,61.24)};
\addplot[only marks,mark=diamond*,mark size=3.2pt,black] coordinates { (1.02,55.40)};
\addplot[only marks,mark=diamond*,mark size=3.2pt,mark options={fill=white,draw=black,thick}] coordinates { (1.02,39.72)};
\nextgroupplot[title={\textbf{Commonsense}\\Logos 17th of 30},title style={align=center},ymin=45.2,ymax=97.3,xlabel={\scriptsize Tokens (T)}]
\fill[frontier,fill opacity=0.075] (axis cs:0.7,79.38) rectangle (axis cs:11.200,97.3);
\draw[black!30,densely dashed,line width=0.4pt] (axis cs:11.200,45.2) -- (axis cs:11.200,97.3);
\draw[black!30,densely dashed,line width=0.4pt] (axis cs:0.7,79.38) -- (axis cs:60,79.38);
\addplot[const plot,very thick,frontier] coordinates { (1.02,79.56) (7.7,80.09) (11.2,82.82) (12.0,89.29) (14.9,89.95) (36.0,92.21) (60,92.21)};
\addplot[only marks,mark=*,mark size=1.4pt,black!35] coordinates { (36.0,89.51) (38.0,79.31) (22.5,72.98) (10.0,79.38) (5.13,50.22) (36.0,92.21) (12.0,89.29) (14.9,89.95) (6.08,73.33) (12.0,87.94) (28.0,88.60) (14.9,70.87) (2.5,62.96) (7.7,80.09) (11.2,82.82) (34.0,86.32) (4.0,75.00) (9.0,74.56) (7.7,75.05)};
\addplot[only marks,mark=diamond*,mark size=3.2pt,black] coordinates { (1.02,79.56)};
\addplot[only marks,mark=diamond*,mark size=3.2pt,mark options={fill=white,draw=black,thick}] coordinates { (1.02,63.51)};
\nextgroupplot[title={\textbf{Instruction following}\\Logos 18th of 30},title style={align=center},ymin=29.7,ymax=93.4,xlabel={\scriptsize Tokens (T)}]
\fill[frontier,fill opacity=0.075] (axis cs:0.7,64.66) rectangle (axis cs:11.200,93.4);
\draw[black!30,densely dashed,line width=0.4pt] (axis cs:11.200,29.7) -- (axis cs:11.200,93.4);
\draw[black!30,densely dashed,line width=0.4pt] (axis cs:0.7,64.66) -- (axis cs:60,64.66);
\addplot[const plot,very thick,frontier] coordinates { (1.02,63.25) (2.5,64.66) (6.08,67.42) (12.0,68.96) (28.0,74.80) (34.0,81.64) (60,81.64)};
\addplot[only marks,mark=*,mark size=1.4pt,black!35] coordinates { (36.0,68.63) (38.0,74.00) (22.5,58.80) (10.0,40.87) (5.13,49.44) (36.0,70.10) (12.0,68.96) (14.9,68.40) (6.08,67.42) (12.0,67.01) (28.0,74.80) (14.9,65.82) (2.5,64.66) (7.7,35.84) (11.2,54.81) (34.0,81.64) (4.0,62.50) (9.0,56.90) (7.7,39.33)};
\addplot[only marks,mark=diamond*,mark size=3.2pt,black] coordinates { (1.02,63.25)};
\addplot[only marks,mark=diamond*,mark size=3.2pt,mark options={fill=white,draw=black,thick}] coordinates { (1.02,54.63)};
\nextgroupplot[title={\textbf{Safety}\\Logos 19th of 30},title style={align=center},ymin=30.7,ymax=101.9,xlabel={\scriptsize Tokens (T)}]
\fill[frontier,fill opacity=0.075] (axis cs:0.7,62.84) rectangle (axis cs:11.200,101.9);
\draw[black!30,densely dashed,line width=0.4pt] (axis cs:11.200,30.7) -- (axis cs:11.200,101.9);
\draw[black!30,densely dashed,line width=0.4pt] (axis cs:0.7,62.84) -- (axis cs:60,62.84);
\addplot[const plot,very thick,frontier] coordinates { (1.02,61.12) (2.5,65.85) (5.13,80.83) (6.08,85.92) (28.0,92.90) (60,92.90)};
\addplot[only marks,mark=*,mark size=1.4pt,black!35] coordinates { (36.0,57.71) (38.0,37.58) (22.5,75.39) (10.0,41.10) (5.13,80.83) (36.0,60.78) (12.0,57.91) (14.9,74.08) (6.08,85.92) (12.0,67.41) (28.0,92.90) (14.9,62.92) (2.5,65.85) (7.7,58.97) (11.2,39.91) (34.0,78.60) (4.0,55.24) (9.0,77.40) (7.7,62.84)};
\addplot[only marks,mark=diamond*,mark size=3.2pt,black] coordinates { (1.02,61.12)};
\addplot[only marks,mark=diamond*,mark size=3.2pt,mark options={fill=white,draw=black,thick}] coordinates { (1.02,56.20)};
\nextgroupplot[title={\textbf{Long context}\\Logos 24th of 30},title style={align=center},ymin=15.3,ymax=44.4,xlabel={\scriptsize Tokens (T)}]
\fill[frontier,fill opacity=0.075] (axis cs:0.7,26.35) rectangle (axis cs:11.200,44.4);
\draw[black!30,densely dashed,line width=0.4pt] (axis cs:11.200,15.3) -- (axis cs:11.200,44.4);
\draw[black!30,densely dashed,line width=0.4pt] (axis cs:0.7,26.35) -- (axis cs:60,26.35);
\addplot[const plot,very thick,frontier] coordinates { (1.02,23.34) (4.0,28.89) (12.0,34.02) (36.0,34.27) (60,34.27)};
\addplot[only marks,mark=*,mark size=1.4pt,black!35] coordinates { (36.0,33.48) (38.0,20.02) (22.5,25.73) (10.0,24.19) (5.13,18.10) (36.0,34.27) (12.0,34.02) (14.9,30.77) (6.08,27.61) (12.0,30.73) (28.0,28.41) (14.9,22.25) (2.5,22.00) (7.7,22.93) (11.2,28.25) (34.0,26.35) (4.0,28.89) (9.0,28.56) (7.7,23.52)};
\addplot[only marks,mark=diamond*,mark size=3.2pt,black] coordinates { (1.02,23.34)};
\addplot[only marks,mark=diamond*,mark size=3.2pt,mark options={fill=white,draw=black,thick}] coordinates { (1.02,18.23)};
\end{groupplot}
\end{tikzpicture}
\caption{Where Logos does well and where it does not. Each panel is one domain: the unweighted mean of its benchmarks (think mode) against published training tokens, for every model that publishes a count. The rank in each title counts all 30 models in both comparison sets, including those without a published count. The blue staircase is the Pareto frontier, the best score reached with at most that much data. Where the staircase stays flat after Logos (filled diamond), no comparator that publishes its data beats it; where it climbs early, cheap models beat it. Panels run from Logos's best domain to its worst. Praxis is the open diamond, and the dashed lines and shading are as in Figure~\ref{fig:frontier}.}
\label{fig:domains}
\end{figure}

\begin{figure}[!t]
\centering
\input{tables/fig_domains_active.tex}
\caption{The same eight domains against active parameters per token, the compute every generated token costs. Every model publishes this number, so all 30 appear. Panels, order and vertical axes match Figure~\ref{fig:domains}, so the two can be read side by side. The shaded corner holds the models that beat the median score while activating fewer parameters than the median model. Logos (filled diamond) holds the compute frontier in mathematics and in science and knowledge; in each of the other six domains, between one and five models with fewer active parameters beat it. Praxis (open diamond) holds it in four.}
\label{fig:domains_active}
\end{figure}

\paragraph{Logos.} Logos averages 52.93, seventh of 17 in its set. Ahead of it are Gemma~4~12B (64.06), Qwen3.5~9B (61.90), Qwen3.5~4B (58.51), Qwen3~8B (55.09), Gemma~4~E4B (54.59) and Qwen3~4B (53.68). Behind it are Granite~4.1~8B (46.72), Gemma~3~12B, Olmo~3~7B, Falcon-H1~7B, Ministral~3~8B, and the mixture-of-experts models LFM2.5-8B-A1B, Granite~4.0~H~Tiny, Trinity Nano and OLMoE.

Mathematics is where Logos earns its keep. Averaged over eleven math benchmarks it scores 65.83, second of 30 models and 2.77 points behind Gemma~4~12B, and no comparator that publishes its training data beats it. It has the highest score in its set on HMMT February 2025 (62.92), HMMT February 2026 (63.64) and IMO-AnswerBench (57.00), is second only to Gemma~4~12B on HMMT November 2025 and Brumo 2025, and third on AIME 2025 (67.50). On contest mathematics, a 1.02T-token model built by one researcher holds its own against the best small models the large labs have released. Science and knowledge is respectable too: ninth of 30 and fifth in its set on GPQA Diamond (55.05), and the first comparator with published data to beat it there was trained on 28T tokens.

The rest of Figure~\ref{fig:domains} is less flattering, and I would rather say so plainly. In code and tools Logos ranks 17th of 30: tool calling holds up (58.60 on BFCL v4, fifth in its set), code generation does not (62.20 on HumanEval+, behind 13 of 15 comparators). In instruction following it ranks 18th (71.07 on IFEval, behind 12), in safety 19th, and in long context 24th. In think mode, RULER falls from 68.33 at 4K to 34.17 at 128K (84.48 and 60.14 in direct mode) and LongBench v2 is 13.20. Factual recall is low for every model in the study, and Logos's 3.78 on SimpleQA Verified is no exception. In these domains, models trained on as little as 2.5T to 8T tokens already do better, which shows exactly where Metis-1.7's data has to go.

Measured against compute instead of data (Figure~\ref{fig:domains_active}), the picture keeps its shape. In mathematics and in science and knowledge, Logos holds the compute frontier: every model that beats it activates more parameters per token, up to 4.6 times as many for Gemma~4~12B in mathematics. Logos also sits in the shaded corner in factual recall, and Praxis joins it there in mathematics. In the other domains, a few small models from the large labs beat Logos while activating fewer parameters, among them Qwen3.5~2B, Gemma~4~E2B, Granite~4.0~H~Tiny and LFM2.5-8B-A1B, each with 1.3B to 1.9B active; the two of those that publish their data were trained on 22.5T and 38T tokens. Praxis holds up better for its size: no model with fewer active parameters beats it in mathematics, science and knowledge, code and tools, or instruction following.

\paragraph{Praxis.} Praxis averages 36.62, fifteenth of 18 in its set, ahead of Granite~4.1~3B (36.45), Llama~3.2~3B (33.92) and Qwen3.5~0.8B (27.61). Relative to its set, its best results are Humanity's Last Exam (5.85, behind two models) and BFCL v4 (52.40, behind five); it is near the bottom on knowledge, code and long-context tasks. Like Logos, it sits on the compute frontier of Figure~\ref{fig:frontier}b: no model in either set scores higher with fewer active parameters per token.

\paragraph{Efficiency.} The comparison in Figure~\ref{fig:frontier} is not controlled: the comparators differ in data, in post-training (several are distilled from larger models) and in size. Still, two things are clear. Every model that averages above Logos and publishes its data was trained on at least 28T tokens (Nanbeige4.2-3B on 28T, Qwen3-4B and Qwen3-8B on 36T each), against 1.02T for Logos. And every model that averages above Logos activates 3.57B to 10.94B parameters per token, against 2.36B. Mixture-of-experts is a trade: Logos stores 10.23B parameters to activate 2.36B, so it needs more memory than a 4B dense model but less compute for every token it generates, and with key-value caches in only 3 of its 20 layers, long contexts stay cheap.

\subsection{Base models}
As base models, both Metis models trail most open models of similar size (Appendix~\ref{app:base}). Over the 44 short-context benchmarks, the Logos base averages 31.03, ahead only of OLMoE-1B-7B (29.44) in its set, and the Praxis base averages 24.66, last in its set. The gap is widest on knowledge and reasoning benchmarks that are hard for base models, such as MMLU-Pro (22.45 for Logos) and MATH-500 (26.60), and narrowest on commonsense likelihood tasks such as HellaSwag, PIQA, WinoGrande and OpenBookQA. Including the six long-context benchmarks, Logos averages 34.99, and on RULER it holds 70.50 at 64K and 60.14 at 128K. Post-training lifts both models far more than their base scores would suggest, mostly through mathematics and reasoning.

%=====================================================================
\section{Toward Metis-1.7 and MoRE}
\label{sec:ablations}

Metis-1.7 is built on MoRE (Mixture of Recursive Experts), a looped architecture of my own design. It deserves more than a section, and it will get one: a dedicated paper will describe MoRE in full alongside the Metis-1.7 release. Here I report the two controlled campaigns that set its loop and its optimizer, because both grew directly out of what went wrong in Metis-1.6. For scale, Metis-1.7 has 12.48B neural parameters, with 0.794B active per pass and 1.59B per token over two passes, and adds a 422M-parameter vision encoder and a 204.8M-parameter speculative drafter.

Each arm in both campaigns trained for 10B tokens (9,535 steps) on a smaller proxy and was evaluated on the same 256 windows; differences are paired over the windows.

\paragraph{The loop.} Metis-1.6 ties the Mamba-2 and attention weights across both passes, with only a rank-8 attention update between them, so its sequence mixers must compromise between two quite different jobs. Table~\ref{tab:fd2} compares eleven variants. The winning design, \code{f2_smart_untie_layer}, combines five changes:
\begin{itemize}
  \item a model-wise loop;
  \item per-layer dynamic expert counts with per-pass routers;
  \item a gated cross-pass residual highway~\cite{loom2026};
  \item a cosine-decayed auxiliary loss on the first pass;
  \item mixers and embeddings fully untied across passes.
\end{itemize}
It reaches 2.5529 nats (perplexity 12.845), beating the Metis-1.6 loop (\code{fix1/f2}, 2.5993) by $-0.0464 \pm 0.0008$ nats, and a non-recursive 16-layer model with twice the expert parameters (\code{f1_L16}, 2.5774) by $-0.0245 \pm 0.0008$ nats. A looped model that beats an unrolled one with twice its expert parameters is the result MoRE is built on.

\paragraph{The optimizer.} Holding \code{f2_smart_untie_layer} fixed, I compared Muon variants against tuned AdamW (Table~\ref{tab:g13}). The winner combines three changes:
\begin{itemize}
  \item strict optimizer-group assertions at start-up, with fused query-key-value, Mamba input and SwiGLU gate-up matrices orthogonalized as separate submatrices;
  \item NorMuon's neuron-wise second-moment normalization~\cite{li2025normuon} ($\beta_2 = 0.95$, $\epsilon = 10^{-16}$), with five Polar Express steps~\cite{polarexpress2026} and one restarted Gram Newton--Schulz iteration;
  \item weight decay scaled with the learning-rate schedule, $\lambda_t = \lambda_0\,\eta_t/\eta_{\max}$.
\end{itemize}
It reaches 2.37104 nats (perplexity 10.708), $-0.18190 \pm 0.00299$ nats better than AdamW, and wins all 256 windows. A hard projection of weights onto a Frobenius sphere was rejected. That gap is the price of Section~\ref{sec:optim}: Metis-1.6 trained every attention matrix with AdamW.

\begin{table}[tbp]
\centering
\caption{Loop-design campaign (10B tokens, 256 paired evaluation windows). $\Delta$ is the cross-entropy difference from the Metis-1.6 loop design \code{fix1/f2}, with its $z$-score; negative is better.}
\label{tab:fd2}
\scriptsize
\setlength{\tabcolsep}{4pt}
\resizebox{\textwidth}{!}{%
\begin{tabular}{@{}llrrr@{}}
\toprule
\textbf{Arm} & \textbf{Configuration} & \textbf{CE} & \textbf{PPL} & \textbf{$\Delta$ vs \code{fix1/f2} ($z$)} \\
\midrule
\code{fix1/f2} & Metis-1.6 tied two-pass loop & 2.5993 & 13.454 & 0 \\
\code{f2_layer_loop} & layer-wise loop $[L_0, L_0, L_1, L_1, \dots]$ & 2.6124 & 13.632 & $+0.0131$ ($+34.5\sigma$) \\
\code{f2_glob_dynk} & pass-wide dynamic expert count & 2.6031 & 13.505 & $+0.0038$ ($+6.1\sigma$) \\
\code{f2_dynk_pr} & per-layer dynamic count + per-pass routers & 2.5884 & 13.308 & $-0.0109$ ($-15.6\sigma$) \\
\code{f2_p1_boot} & \code{f2_dynk_pr} + first-pass auxiliary loss & 2.5824 & 13.229 & $-0.0169$ ($-22.8\sigma$) \\
\code{f1_L16} & unrolled 16 layers, one pass, $2\times$ parameters & 2.5774 & 13.163 & $-0.0219$ ($-27.4\sigma$) \\
\code{f2_loom} & \code{f2_dynk_pr} + cross-pass highway & 2.5735 & 13.112 & $-0.0258$ ($-33.9\sigma$) \\
\code{f2_smart_vo_layer} & partial value/output untying + highway + first-pass loss & 2.5702 & 13.068 & $-0.0291$ ($-36.4\sigma$) \\
\code{f2_smart_untie} & pass-wide count + full untying + highway + first-pass loss & 2.5634 & 12.980 & $-0.0359$ ($-44.9\sigma$) \\
\code{f2_loom_boot} & tied attention + highway + first-pass loss & 2.5607 & 12.945 & $-0.0386$ ($-48.3\sigma$) \\
\textbf{\code{f2_smart_untie_layer}} & \textbf{full untying + per-layer count + highway + first-pass loss} & \textbf{2.5529} & \textbf{12.845} & $\mathbf{-0.0464 \pm 0.0008}$ ($\mathbf{-58.0\sigma}$) \\
\bottomrule
\end{tabular}}
\end{table}

\begin{table}[tbp]
\centering
\caption{Optimizer campaign on \code{f2_smart_untie_layer} (10B tokens, 9,535 steps, 256 paired windows).}
\label{tab:g13}
\scriptsize
\setlength{\tabcolsep}{4pt}
\resizebox{\textwidth}{!}{%
\begin{tabular}{@{}llrrr@{}}
\toprule
\textbf{Arm} & \textbf{Optimizer} & \textbf{CE} & \textbf{PPL} & \textbf{$\Delta$ vs AdamW ($z$)} \\
\midrule
\code{fd2_adamw} & tuned AdamW, $\beta = (0.9, 0.95)$ & 2.55293 & 12.845 & 0 \\
\code{opt3_normuon_hyperball} & NorMuon + hard Frobenius-sphere projection & \multicolumn{3}{l}{rejected ($+0.134$ nats at 500 steps; $+0.0384$ at 4K)} \\
\code{opt1_muon_split_sw} & split Muon + scheduled weight decay & 2.37445 & 10.745 & $-0.17848 \pm 0.00294$ ($-60.6\sigma$) \\
\code{opt2_normuon_sw} & \code{opt1} + NorMuon & 2.37105 & 10.709 & $-0.18189 \pm 0.00286$ ($-63.5\sigma$) \\
\textbf{\code{opt4_normuon_gram_pe}} & \textbf{\code{opt2} + Polar Express + restarted Gram iteration} & \textbf{2.37104} & \textbf{10.708} & $\mathbf{-0.18190 \pm 0.00299}$ ($\mathbf{-60.7\sigma}$) \\
\bottomrule
\end{tabular}}
\end{table}

%=====================================================================
\section{Limitations}
\label{sec:limits}

\paragraph{Recent contests.} The decontamination manifest predates several of the contests reported here: AIME 2025 and 2026, HMMT 2025 and 2026, Brumo 2025, IMO-AnswerBench and Apex 2025 are not on it. Logos is strongest relative to its comparators on exactly these benchmarks, so a dedicated overlap check against them is the first thing I will add.

\paragraph{Harness.} All comparisons use the Lernex harness, prompts and scorers. Comparators may do better under their developers' recommended settings.

\paragraph{Confounded architecture conclusions.} Metis-1.6 trained attention with AdamW and everything else with plain Muon (Section~\ref{sec:optim}), so its results do not isolate the architecture. The 10B-token ablations use a corrected optimizer, but on a smaller proxy with one run per arm, so their standard errors reflect evaluation noise rather than run-to-run variance.

\paragraph{Reinforcement learning.} The advantage clamp was used in the code, knowledge and agentic specialists, which were short runs, and not in the reasoning specialist where the sign inversion was observed.

%=====================================================================
\section{Conclusion: talk, use, and what comes next}

Metis-1.5 was the first Metis model you could talk to. Metis-1.6 is the first you can use. Built end to end by one researcher on 1.02T tokens, Logos finishes ahead of open models trained on up to 38T tokens and within a point of one trained on 36T, sits on the Pareto frontier for both data and compute, and on contest mathematics stands next to the best small models from the large labs. It is not a frontier model, it does not pretend to be, and it still loses most head-to-head comparisons outside mathematics. It was never meant to win them. It was meant to prove I can build, learn and build better, and the record says I can. It is also the first real evidence for the bet Metis is built on: that the cheaper path to a capable model runs through better data, algorithms and architectures, not more parameters.

That record is the other half of this report:
\begin{itemize}
  \item an optimizer that differed from its configuration for an entire run;
  \item a memory the models chose not to use;
  \item fine-tuning data that taught habits I did not want;
  \item a length reward that, under group normalization, punished correct reasoning;
  \item a distillation step too small to move the weights.
\end{itemize}
Every one of them has been diagnosed, and every one has already changed how Metis-1.7 and MoRE are built. That is the part of this work I am proudest of. Not any single score, but the loop behind it: build a model, measure it honestly, find exactly where I was wrong, and carry the lesson into the next one. Metis-1.3 taught me to train a hybrid. Metis-1.4 taught me what breaks in a looped model. Metis-1.5 learned to hold a conversation, and Metis-1.6 learned to be useful. Each one did something the one before it could not.

I am not going to make promises about Metis-1.7 in these pages. I would rather show you. What I can say is that I have never been more excited to share what comes next, and that I intend to keep working the way this report does: in the open, learning fast, publishing the mistakes alongside the results, and, I hope, surprising a few people along the way. Metis started as a bet that a better idea can do what a bigger budget cannot. Every release so far has made that bet look a little better. See you at Metis-1.7.

\bibliographystyle{plainnat}
\bibliography{references}

%=====================================================================
\clearpage
\newgeometry{margin=0.6in}
\appendix

\section{Base-model evaluation}
\label{app:base}
Tables~\ref{tab:app_base_logos} and~\ref{tab:app_base_praxis} give all 50 base benchmarks for the Logos and Praxis comparison sets. The short-context mean leaves out RULER beyond 4K and LongBench v2. The 50-benchmark mean is shown only for models that can run every length.

\begin{landscape}
{\tiny\setlength{\tabcolsep}{1.6pt}\renewcommand{\arraystretch}{1.08}
\begin{longtable}{@{}>{\raggedright\arraybackslash}p{2.35cm}rrrrrrrrrrrrrrrr@{}}
\caption{Base models, Logos comparison set: Metis-1.6 base checkpoints after context extension against open base models, on one harness with identical prompts and scorers (Appendix~\ref{app:protocols}). Columns are ordered by the short-context mean. Bold marks the best score in each row; -- means the length exceeds the model's context window or what fits the 16\,GB evaluation GPU.}\label{tab:app_base_logos}\\
\toprule
 &  &  &  &  & \textbf{Ministral} & \textbf{Falcon-} &  &  & \textbf{Granite} &  & \textbf{Granite} &  &  &  &  &  \\
 & \textbf{Logos} & \textbf{Praxis} & \textbf{Qwen3} & \textbf{Qwen3.5} & \textbf{3} & \textbf{H1} & \textbf{Qwen3.5} & \textbf{Qwen3} & \textbf{4.1} & \textbf{Gemma 4} & \textbf{4.0} & \textbf{Gemma 4} & \textbf{Olmo 3} & \textbf{Trinity} & \textbf{LFM2.5} & \textbf{OLMoE} \\
\textbf{Benchmark} &  &  & \textbf{8B} & \textbf{9B} & \textbf{8B} & \textbf{7B} & \textbf{4B} & \textbf{4B} & \textbf{8B} & \textbf{12B} & \textbf{H Tiny} & \textbf{E4B} & \textbf{7B} & \textbf{Nano} & \textbf{8B-A1B} & \textbf{1B-7B} \\
\midrule
\endfirsthead
\multicolumn{17}{@{}l}{\emph{(continued)}}\\
\toprule
 &  &  &  &  & \textbf{Ministral} & \textbf{Falcon-} &  &  & \textbf{Granite} &  & \textbf{Granite} &  &  &  &  &  \\
 & \textbf{Logos} & \textbf{Praxis} & \textbf{Qwen3} & \textbf{Qwen3.5} & \textbf{3} & \textbf{H1} & \textbf{Qwen3.5} & \textbf{Qwen3} & \textbf{4.1} & \textbf{Gemma 4} & \textbf{4.0} & \textbf{Gemma 4} & \textbf{Olmo 3} & \textbf{Trinity} & \textbf{LFM2.5} & \textbf{OLMoE} \\
\textbf{Benchmark} &  &  & \textbf{8B} & \textbf{9B} & \textbf{8B} & \textbf{7B} & \textbf{4B} & \textbf{4B} & \textbf{8B} & \textbf{12B} & \textbf{H Tiny} & \textbf{E4B} & \textbf{7B} & \textbf{Nano} & \textbf{8B-A1B} & \textbf{1B-7B} \\
\midrule
\endhead
\bottomrule
\endfoot
Mean of 44 (short context) & 31.03 & 24.66 & \textbf{44.65} & 44.41 & 43.08 & 43.00 & 42.93 & 42.88 & 41.54 & 41.19 & 39.01 & 38.55 & 38.16 & 36.48 & 35.46 & 29.44 \\
Mean of all 50 & 34.99 & 27.72 & -- & 46.05 & \textbf{46.75} & 46.55 & 45.57 & -- & 45.20 & 45.11 & 42.10 & 41.54 & -- & 39.64 & -- & -- \\
\midrule
Humanity's Last Exam & 4.59 & 3.89 & 5.61 & 2.87 & 5.33 & 5.24 & 3.34 & 3.99 & 5.19 & 4.77 & 4.17 & 4.22 & \textbf{6.67} & 5.42 & 4.54 & 4.73 \\
HLE-Diamond & 3.71 & 3.43 & 4.40 & 2.88 & 4.26 & 4.26 & 2.88 & 5.08 & \textbf{6.18} & 5.49 & 4.12 & 3.71 & 5.91 & 5.22 & 3.71 & 5.22 \\
HLE-Verified (Gold) & 5.91 & 4.00 & 5.74 & 3.65 & 5.91 & 6.61 & 4.00 & 4.35 & 6.09 & 6.43 & 3.83 & 5.04 & \textbf{7.65} & 6.26 & 5.04 & 5.57 \\
GPQA & 30.36 & 27.90 & \textbf{39.96} & 38.84 & 37.95 & 37.05 & 25.67 & 22.99 & 34.60 & 35.94 & 31.25 & 26.12 & 30.36 & 31.70 & 26.12 & 26.79 \\
GPQA Diamond & 27.27 & 24.24 & 36.36 & 33.84 & 37.37 & \textbf{38.89} & 24.24 & 25.25 & 35.86 & 36.36 & 30.81 & 27.27 & 32.32 & 31.31 & 22.73 & 23.23 \\
SuperGPQA & 15.29 & 11.12 & 33.21 & \textbf{34.49} & 30.63 & 27.68 & 27.11 & 28.07 & 23.28 & 26.99 & 19.16 & 24.71 & 21.65 & 21.36 & 23.53 & 13.55 \\
MMLU-Pro & 22.45 & 14.19 & 48.89 & \textbf{54.68} & 50.18 & 49.10 & 48.62 & 45.74 & 38.66 & 44.84 & 33.11 & 38.19 & 33.87 & 34.31 & 39.15 & 18.53 \\
BIG-Bench Extra Hard & 4.60 & 1.66 & \textbf{10.07} & 3.27 & 8.14 & 5.31 & 2.17 & 7.92 & 4.91 & 5.31 & 5.69 & 2.19 & 7.65 & 6.22 & 1.17 & 7.92 \\
ARC-AGI-2 & \textbf{0.00} & \textbf{0.00} & \textbf{0.00} & \textbf{0.00} & \textbf{0.00} & \textbf{0.00} & \textbf{0.00} & \textbf{0.00} & \textbf{0.00} & \textbf{0.00} & \textbf{0.00} & \textbf{0.00} & \textbf{0.00} & \textbf{0.00} & \textbf{0.00} & \textbf{0.00} \\
SimpleQA Verified & 3.40 & 2.70 & 4.40 & \textbf{6.70} & 5.20 & 4.10 & 4.80 & 3.20 & 4.50 & 5.80 & 4.50 & 4.00 & 4.00 & 4.80 & 5.30 & 3.50 \\
\midrule
AIME 2026 & 0.00 & 0.00 & \textbf{3.33} & 0.00 & 0.00 & \textbf{3.33} & 0.00 & \textbf{3.33} & 0.00 & 0.00 & 0.00 & 0.00 & 0.00 & 0.00 & 0.00 & 0.00 \\
HMMT Feb 2026 & 0.00 & 0.00 & 0.00 & \textbf{6.06} & 0.00 & 0.00 & 0.00 & 3.03 & 0.00 & 0.00 & 3.03 & 0.00 & 0.00 & 0.00 & 0.00 & 0.00 \\
AIME 2025 & 0.00 & 0.00 & 0.00 & 0.00 & 0.00 & 0.00 & 0.00 & 0.00 & 0.00 & 0.00 & \textbf{3.33} & 0.00 & \textbf{3.33} & 0.00 & 0.00 & 0.00 \\
MATH-500 & 26.60 & 16.20 & 52.20 & 52.20 & 42.40 & \textbf{53.20} & 44.00 & 48.20 & 44.60 & 31.20 & 40.80 & 20.60 & 39.80 & 36.40 & 41.00 & 14.40 \\
Omni-MATH & 6.93 & 5.83 & 13.46 & 12.71 & 10.64 & 13.48 & \textbf{14.79} & 13.30 & 11.16 & 7.16 & 10.19 & 6.14 & 10.14 & 9.01 & 10.46 & 4.56 \\
GSM8K & 40.03 & 22.59 & 83.17 & \textbf{86.20} & 75.06 & 48.29 & 80.89 & 72.33 & 75.36 & 62.24 & 70.96 & 56.41 & 71.72 & 55.72 & 67.02 & 50.64 \\
GSM8K-Platinum & 42.10 & 23.57 & 85.61 & \textbf{88.67} & 77.58 & 48.97 & 81.97 & 75.52 & 76.92 & 64.68 & 74.36 & 57.24 & 73.37 & 58.06 & 71.63 & 52.52 \\
MGSM & 18.76 & 6.25 & 65.71 & 68.84 & \textbf{69.20} & 59.64 & 50.98 & 51.13 & 56.29 & 63.42 & 48.40 & 43.20 & 39.09 & 34.84 & 44.04 & 14.22 \\
\midrule
LiveCodeBench & 7.31 & 0.00 & 10.97 & \textbf{20.40} & 0.63 & 15.94 & 16.59 & 15.44 & 7.26 & 9.14 & 6.17 & 13.68 & 10.40 & 7.77 & 6.40 & 0.46 \\
HumanEval+ & 29.88 & 16.46 & 48.78 & 42.68 & 25.00 & \textbf{53.05} & 44.51 & 51.22 & 46.95 & 30.49 & 35.98 & 25.00 & 34.76 & 24.39 & 43.29 & 7.93 \\
MBPP+ & 30.69 & 35.98 & 61.11 & 57.41 & 54.23 & 58.99 & 72.49 & \textbf{80.42} & 59.52 & 49.74 & 60.05 & 65.34 & 50.26 & 50.53 & 48.15 & 24.34 \\
CRUXEval-O & 23.39 & 12.49 & 40.38 & \textbf{41.67} & 35.97 & 41.22 & 30.32 & 28.66 & 39.73 & 35.69 & 34.26 & 24.29 & 30.70 & 29.18 & 24.00 & 17.59 \\
CRUXEval-I & 23.66 & 20.84 & 44.84 & 45.14 & 38.11 & 43.28 & \textbf{48.96} & 44.00 & 42.58 & 40.23 & 31.04 & 37.13 & 31.31 & 30.32 & 18.06 & 16.20 \\
BigCodeBench & 13.42 & 7.28 & 25.35 & 23.68 & 15.26 & 23.16 & 35.70 & \textbf{41.84} & 0.44 & 21.49 & 0.44 & 31.84 & 16.67 & 14.39 & 11.93 & 5.61 \\
MultiPL-E & 18.25 & 8.80 & \textbf{41.83} & 39.50 & 32.50 & 34.75 & 32.02 & 41.82 & 37.08 & 4.00 & 39.17 & 18.62 & 23.75 & 24.17 & 3.50 & 9.67 \\
SciCode & 0.69 & 0.00 & 0.69 & \textbf{1.03} & \textbf{1.03} & \textbf{1.03} & 0.00 & 0.00 & 0.69 & 0.34 & 0.69 & 0.00 & 0.69 & 0.69 & 0.69 & 0.69 \\
\midrule
RULER (4K) & 84.48 & 75.54 & 83.35 & 60.80 & \textbf{93.28} & 90.31 & 87.17 & 92.41 & 88.48 & 89.55 & 86.68 & 88.99 & 90.66 & 83.01 & 65.06 & 66.87 \\
RULER (8K) & 80.74 & 66.86 & 82.07 & 60.39 & 91.73 & \textbf{91.95} & 62.09 & 75.79 & 87.09 & 88.09 & 80.47 & 76.74 & 82.90 & 77.14 & 58.35 & -- \\
RULER (16K) & 74.57 & 61.53 & 74.97 & 56.22 & \textbf{91.57} & 86.25 & 65.65 & 70.17 & 83.08 & 85.03 & 77.07 & 74.40 & 75.61 & 70.42 & 45.69 & -- \\
RULER (32K) & 73.25 & 56.23 & 66.10 & 52.31 & 80.74 & 77.32 & 59.92 & 61.31 & 78.04 & \textbf{82.68} & 71.89 & 70.90 & 70.95 & 66.38 & 28.82 & -- \\
RULER (64K) & 70.50 & 50.76 & -- & 50.85 & 75.56 & 75.77 & \textbf{89.25} & -- & 77.31 & 80.66 & 65.41 & 70.38 & 65.04 & 71.92 & 12.91 & -- \\
RULER (128K) & 60.14 & 42.38 & -- & \textbf{91.92} & 71.79 & 72.72 & 80.58 & -- & 74.94 & 78.88 & 68.27 & 60.99 & -- & 62.88 & -- & -- \\
LongBench v2 & 25.05 & 23.26 & 32.21 & \textbf{36.58} & 30.62 & 31.41 & 32.01 & 31.61 & 32.01 & 27.83 & 25.25 & 27.24 & 31.01 & 28.23 & 25.84 & 26.44 \\
\midrule
TriviaQA & 60.17 & 38.29 & 64.94 & 65.07 & 70.36 & 63.85 & 50.58 & 50.62 & 61.47 & \textbf{76.06} & 66.51 & 64.93 & 59.77 & 70.11 & 57.24 & 65.50 \\
MMMLU & 31.93 & 28.57 & 58.34 & \textbf{62.12} & 58.27 & 55.07 & 59.91 & 57.26 & 50.16 & 58.70 & 49.46 & 56.91 & 37.89 & 39.77 & 47.16 & 30.43 \\
TruthfulQA MC2 & 37.65 & 37.49 & 41.46 & 42.19 & 38.24 & 37.61 & 52.30 & \textbf{53.38} & 39.14 & 35.38 & 42.82 & 44.98 & 38.69 & 37.21 & 45.62 & 35.98 \\
AGIEval (English) & 30.31 & 18.46 & 54.85 & \textbf{60.15} & 48.18 & 50.58 & 44.83 & 45.92 & 41.47 & 46.81 & 39.33 & 36.45 & 35.96 & 31.38 & 27.02 & 28.22 \\
\midrule
MMLU & 52.53 & 39.62 & 74.59 & \textbf{77.35} & 72.31 & 73.44 & 74.21 & 71.19 & 68.18 & 70.57 & 64.87 & 66.54 & 62.18 & 63.79 & 61.86 & 53.55 \\
MMLU (5-shot) & 54.80 & 43.26 & 76.70 & 71.91 & 76.67 & \textbf{77.46} & 72.97 & 73.07 & 71.73 & 73.89 & 66.19 & 70.65 & 64.64 & 66.07 & 63.27 & 55.11 \\
MMLU (cloze) & 42.71 & 38.80 & 47.99 & 49.18 & 51.71 & 49.74 & 48.41 & 49.32 & 47.22 & \textbf{51.98} & 46.77 & 48.72 & 45.41 & 46.12 & 48.05 & 41.76 \\
MMLU-Redux & 54.77 & 41.24 & 77.30 & \textbf{79.39} & 75.09 & 75.68 & 78.48 & 75.85 & 70.14 & 72.05 & 67.49 & 69.83 & 63.91 & 65.82 & 65.46 & 55.42 \\
MMLU-Redux (5-shot) & 56.11 & 44.41 & 78.37 & 72.44 & 78.14 & \textbf{78.49} & 76.44 & 77.47 & 73.07 & 74.96 & 68.68 & 73.38 & 66.63 & 67.98 & 67.43 & 56.61 \\
BIG-Bench Hard & 36.23 & 31.18 & 75.72 & \textbf{81.03} & 77.16 & 77.71 & 80.45 & 71.37 & 74.06 & 74.72 & 59.76 & 62.74 & 63.75 & 48.90 & 45.14 & 33.76 \\
DROP & 41.28 & 29.91 & 58.96 & 53.67 & 69.17 & 65.80 & 67.88 & 60.14 & \textbf{71.68} & 65.02 & 62.90 & 56.80 & 66.46 & 58.66 & 44.78 & 50.87 \\
HellaSwag & 74.69 & 66.53 & 79.35 & 80.35 & 79.33 & 78.58 & 75.23 & 73.75 & 80.96 & \textbf{81.92} & 78.59 & 78.51 & 75.01 & 75.88 & 73.81 & 78.00 \\
PIQA & 79.43 & 76.33 & 79.98 & 80.58 & 80.79 & 81.72 & 79.22 & 78.07 & 80.74 & \textbf{82.64} & 80.74 & 80.74 & 79.11 & 81.94 & 77.80 & 79.92 \\
ARC-Easy & 74.66 & 65.78 & 79.50 & 79.08 & 86.45 & 82.37 & 76.22 & 76.18 & 81.69 & \textbf{86.83} & 82.20 & 85.19 & 81.10 & 80.51 & 82.28 & 77.40 \\
ARC-Challenge & 45.22 & 38.23 & 55.89 & 56.74 & 61.69 & 59.90 & 55.29 & 51.96 & 54.78 & \textbf{62.12} & 54.10 & 59.73 & 53.33 & 56.06 & 59.47 & 49.49 \\
WinoGrande & 66.54 & 61.64 & 68.43 & 69.69 & 70.40 & \textbf{71.19} & 71.03 & 70.96 & 69.22 & 70.64 & 61.25 & 70.24 & 66.22 & 66.14 & 65.27 & 65.98 \\
OpenBookQA & 42.60 & 40.40 & 42.80 & 45.00 & 45.80 & 45.80 & 42.40 & 41.00 & 45.60 & \textbf{46.80} & 42.80 & 46.00 & 42.20 & 43.80 & 42.20 & 42.60 \\
\end{longtable}}
\end{landscape}

\begin{landscape}
{\tiny\setlength{\tabcolsep}{1.6pt}\renewcommand{\arraystretch}{1.08}
\begin{longtable}{@{}>{\raggedright\arraybackslash}p{2.35cm}rrrrrrrrrrrrrrrrr@{}}
\caption{Base models, Praxis comparison set: Metis-1.6 base checkpoints after context extension against open base models, on one harness with identical prompts and scorers (Appendix~\ref{app:protocols}). Columns are ordered by the short-context mean. Bold marks the best score in each row; -- means the length exceeds the model's context window or what fits the 16\,GB evaluation GPU.}\label{tab:app_base_praxis}\\
\toprule
 &  &  &  &  & \textbf{Nanbeige} & \textbf{Ministral} & \textbf{Granite} & \textbf{Falcon-} &  &  &  &  &  &  & \textbf{Llama} &  &  \\
 & \textbf{Praxis} & \textbf{Ouro} & \textbf{Qwen3.5} & \textbf{Qwen3} & \textbf{4.2} & \textbf{3} & \textbf{4.1} & \textbf{H1} & \textbf{Ouro} & \textbf{Gemma 4} & \textbf{SmolLM3} & \textbf{LFM2.5} & \textbf{Gemma 3} & \textbf{Qwen3.5} & \textbf{3.2} & \textbf{Gemma 4} & \textbf{Qwen3.5} \\
\textbf{Benchmark} &  & \textbf{2.6B} & \textbf{4B} & \textbf{4B} & \textbf{3B} & \textbf{3B} & \textbf{3B} & \textbf{3B} & \textbf{1.4B} & \textbf{E4B} & \textbf{3B} & \textbf{2.6B} & \textbf{4B} & \textbf{2B} & \textbf{3B} & \textbf{E2B} & \textbf{0.8B} \\
\midrule
\endfirsthead
\multicolumn{18}{@{}l}{\emph{(continued)}}\\
\toprule
 &  &  &  &  & \textbf{Nanbeige} & \textbf{Ministral} & \textbf{Granite} & \textbf{Falcon-} &  &  &  &  &  &  & \textbf{Llama} &  &  \\
 & \textbf{Praxis} & \textbf{Ouro} & \textbf{Qwen3.5} & \textbf{Qwen3} & \textbf{4.2} & \textbf{3} & \textbf{4.1} & \textbf{H1} & \textbf{Ouro} & \textbf{Gemma 4} & \textbf{SmolLM3} & \textbf{LFM2.5} & \textbf{Gemma 3} & \textbf{Qwen3.5} & \textbf{3.2} & \textbf{Gemma 4} & \textbf{Qwen3.5} \\
\textbf{Benchmark} &  & \textbf{2.6B} & \textbf{4B} & \textbf{4B} & \textbf{3B} & \textbf{3B} & \textbf{3B} & \textbf{3B} & \textbf{1.4B} & \textbf{E4B} & \textbf{3B} & \textbf{2.6B} & \textbf{4B} & \textbf{2B} & \textbf{3B} & \textbf{E2B} & \textbf{0.8B} \\
\midrule
\endhead
\bottomrule
\endfoot
Mean of 44 (short context) & 24.66 & \textbf{43.15} & 42.93 & 42.88 & 41.23 & 40.59 & 39.99 & 39.81 & 38.78 & 38.55 & 37.17 & 35.93 & 34.37 & 34.17 & 30.92 & 30.53 & 27.98 \\
Mean of all 50 & 27.72 & -- & \textbf{45.57} & -- & -- & -- & -- & 42.42 & -- & 41.54 & -- & -- & 37.20 & 38.72 & -- & 33.57 & 32.64 \\
\midrule
Humanity's Last Exam & 3.89 & 4.23 & 3.34 & 3.99 & 3.48 & 4.17 & 4.36 & 3.94 & 4.41 & 4.22 & 4.40 & \textbf{5.56} & 4.36 & 3.80 & 4.49 & 3.80 & 3.99 \\
HLE-Diamond & 3.43 & 3.32 & 2.88 & \textbf{5.08} & 1.65 & 4.12 & 4.81 & 3.98 & 4.00 & 3.71 & 3.98 & \textbf{5.08} & 4.67 & 3.30 & 3.71 & 4.12 & 3.57 \\
HLE-Verified (Gold) & 4.00 & 4.87 & 4.00 & 4.35 & 4.70 & 4.87 & 5.22 & 3.65 & 4.70 & 5.04 & 4.00 & \textbf{6.26} & 5.74 & 3.65 & 4.17 & 5.22 & 4.17 \\
GPQA & 27.90 & 26.34 & 25.67 & 22.99 & 25.89 & \textbf{28.57} & 24.78 & 23.44 & 25.89 & 26.12 & 25.45 & 22.32 & 26.12 & 24.33 & 25.45 & 26.56 & 27.01 \\
GPQA Diamond & 24.24 & 28.79 & 24.24 & 25.25 & 28.28 & 29.29 & 25.76 & 20.20 & 27.27 & 27.27 & \textbf{29.80} & 25.25 & 28.28 & 23.74 & 26.26 & \textbf{29.80} & 25.76 \\
SuperGPQA & 11.12 & 25.81 & 27.11 & 28.07 & \textbf{32.58} & 25.15 & 21.37 & 24.49 & 22.32 & 24.71 & 23.40 & 22.85 & 17.41 & 21.43 & 19.25 & 16.49 & 15.98 \\
MMLU-Pro & 14.19 & \textbf{48.84} & 48.62 & 45.74 & 45.84 & 40.92 & 33.36 & 40.63 & 41.43 & 38.19 & 33.18 & 37.01 & 27.92 & 36.61 & 24.30 & 24.74 & 25.05 \\
BIG-Bench Extra Hard & 1.66 & 7.01 & 2.17 & 7.92 & \textbf{12.06} & 7.48 & 7.85 & 4.18 & 3.89 & 2.19 & 6.28 & 7.63 & 3.21 & 4.09 & 2.85 & 3.69 & 1.84 \\
ARC-AGI-2 & \textbf{0.00} & \textbf{0.00} & \textbf{0.00} & \textbf{0.00} & \textbf{0.00} & \textbf{0.00} & \textbf{0.00} & \textbf{0.00} & \textbf{0.00} & \textbf{0.00} & \textbf{0.00} & \textbf{0.00} & \textbf{0.00} & \textbf{0.00} & \textbf{0.00} & \textbf{0.00} & \textbf{0.00} \\
SimpleQA Verified & 2.70 & 3.00 & \textbf{4.80} & 3.20 & 4.30 & 3.00 & 2.40 & 2.50 & 1.50 & 4.00 & 4.00 & 3.60 & 3.10 & 2.60 & 4.20 & 2.30 & 1.80 \\
\midrule
AIME 2026 & 0.00 & 0.00 & 0.00 & 3.33 & 0.00 & 0.00 & 0.00 & 0.00 & \textbf{6.67} & 0.00 & 0.00 & 0.00 & 0.00 & 0.00 & 0.00 & 0.00 & 0.00 \\
HMMT Feb 2026 & 0.00 & \textbf{3.03} & 0.00 & \textbf{3.03} & \textbf{3.03} & 0.00 & \textbf{3.03} & \textbf{3.03} & \textbf{3.03} & 0.00 & 0.00 & 0.00 & 0.00 & 0.00 & 0.00 & 0.00 & 0.00 \\
AIME 2025 & 0.00 & 0.00 & 0.00 & 0.00 & 0.00 & 0.00 & 0.00 & 0.00 & \textbf{3.33} & 0.00 & 0.00 & 0.00 & 0.00 & 0.00 & 0.00 & 0.00 & 0.00 \\
MATH-500 & 16.20 & 40.80 & 44.00 & \textbf{48.20} & 20.60 & 36.00 & 39.00 & 43.40 & 1.80 & 20.60 & 38.80 & 41.20 & 24.00 & 32.80 & 6.00 & 9.20 & 20.40 \\
Omni-MATH & 5.83 & 15.27 & 14.79 & 13.30 & 12.08 & 11.68 & 10.48 & 12.83 & \textbf{21.41} & 6.14 & 10.34 & 12.40 & 6.50 & 9.24 & 3.50 & 3.75 & 7.25 \\
GSM8K & 22.59 & \textbf{82.34} & 80.89 & 72.33 & 69.14 & 71.11 & 72.48 & 68.99 & 78.47 & 56.41 & 67.93 & 73.16 & 36.85 & 59.06 & 26.38 & 21.83 & 38.59 \\
GSM8K-Platinum & 23.57 & \textbf{85.28} & 81.97 & 75.52 & 79.57 & 71.63 & 72.87 & 71.88 & 79.40 & 57.24 & 69.48 & 77.50 & 39.62 & 63.03 & 27.05 & 23.24 & 39.70 \\
MGSM & 6.25 & 37.67 & 50.98 & \textbf{51.13} & 45.35 & 44.44 & 40.84 & 40.73 & 38.22 & 43.20 & 35.31 & 44.11 & 29.89 & 35.71 & 16.04 & 16.80 & 22.95 \\
\midrule
LiveCodeBench & 0.00 & 21.59 & 16.59 & 15.44 & \textbf{24.73} & 16.04 & 16.32 & 16.43 & 17.42 & 13.68 & 12.97 & 2.09 & 7.42 & 4.56 & 7.86 & 4.73 & 1.15 \\
HumanEval+ & 16.46 & 34.76 & 44.51 & 51.22 & 44.51 & 35.98 & 43.29 & \textbf{55.49} & 33.54 & 25.00 & 31.10 & 31.10 & 28.66 & 29.88 & 25.00 & 21.34 & 19.51 \\
MBPP+ & 35.98 & \textbf{82.80} & 72.49 & 80.42 & 78.04 & 70.63 & 76.19 & 75.66 & 76.98 & 65.34 & 66.14 & 47.88 & 62.43 & 49.47 & 55.29 & 50.79 & 36.24 \\
CRUXEval-O & 12.49 & \textbf{33.74} & 30.32 & 28.66 & 31.52 & 25.42 & 27.06 & 27.97 & 28.09 & 24.29 & 22.04 & 18.14 & 21.24 & 20.45 & 18.52 & 16.05 & 13.01 \\
CRUXEval-I & 20.84 & 50.59 & 48.96 & 44.00 & \textbf{50.94} & 40.55 & 44.01 & 40.09 & 41.57 & 37.13 & 35.73 & 27.24 & 32.82 & 36.23 & 28.91 & 26.85 & 30.91 \\
BigCodeBench & 7.28 & 40.44 & 35.70 & 41.84 & 9.12 & 33.95 & \textbf{43.16} & 32.72 & 28.77 & 31.84 & 28.77 & 22.19 & 24.39 & 17.54 & 19.39 & 14.56 & 7.63 \\
MultiPL-E & 8.80 & 30.41 & 32.02 & 41.82 & \textbf{44.61} & 27.66 & 42.39 & 39.75 & 25.22 & 18.62 & 21.72 & 12.82 & 21.52 & 17.48 & 17.11 & 13.17 & 8.99 \\
SciCode & 0.00 & 0.00 & 0.00 & 0.00 & 0.00 & 0.00 & \textbf{1.54} & 0.00 & 0.00 & 0.00 & 0.00 & 0.00 & 0.00 & 0.00 & 0.00 & 0.00 & 0.00 \\
\midrule
RULER (4K) & 75.54 & 91.69 & 87.17 & 92.41 & \textbf{92.91} & 88.51 & 86.55 & 91.24 & 86.68 & 88.99 & 87.92 & 84.61 & 86.78 & 91.16 & 87.60 & 89.68 & 86.49 \\
RULER (8K) & 66.86 & -- & 62.09 & 75.79 & 75.87 & 85.94 & 82.55 & 83.05 & 72.84 & 76.74 & 80.96 & 79.35 & 74.51 & \textbf{87.45} & 81.84 & 66.55 & 81.81 \\
RULER (16K) & 61.53 & -- & 65.65 & 70.17 & \textbf{85.91} & 82.05 & 76.95 & 75.47 & -- & 74.40 & 78.29 & 72.59 & 69.07 & 83.84 & 78.85 & 68.25 & 76.20 \\
RULER (32K) & 56.23 & -- & 59.92 & 61.31 & -- & 76.78 & 75.35 & 72.00 & -- & 70.90 & 71.80 & 63.58 & 68.16 & \textbf{80.63} & 75.73 & 66.54 & 72.65 \\
RULER (64K) & 50.76 & -- & \textbf{89.25} & -- & -- & -- & 69.53 & 58.59 & -- & 70.38 & 73.27 & 54.37 & 57.16 & 80.51 & 74.24 & 59.73 & 75.45 \\
RULER (128K) & 42.38 & -- & \textbf{80.58} & -- & -- & -- & -- & 53.33 & -- & 60.99 & -- & -- & 51.83 & 69.15 & -- & 47.69 & 68.14 \\
LongBench v2 & 23.26 & 28.83 & 32.01 & 31.61 & \textbf{37.18} & 25.65 & 26.24 & 27.24 & 23.26 & 27.24 & 28.63 & 28.43 & 26.84 & 30.82 & 26.84 & 26.24 & 26.84 \\
\midrule
TriviaQA & 38.29 & 53.06 & 50.58 & 50.62 & 27.89 & 56.24 & 50.39 & 40.73 & 43.92 & \textbf{64.93} & 58.13 & 43.40 & 60.59 & 32.71 & 59.54 & 53.46 & 19.54 \\
MMMLU & 28.57 & 48.33 & \textbf{59.91} & 57.26 & 54.66 & 55.59 & 46.68 & 46.76 & 40.99 & 56.91 & 45.87 & 48.54 & 47.26 & 46.93 & 40.93 & 41.91 & 40.17 \\
TruthfulQA MC2 & 37.49 & 49.40 & 52.30 & \textbf{53.38} & 50.36 & 44.96 & 47.68 & 41.85 & 46.72 & 44.98 & 45.98 & 43.65 & 39.67 & 45.86 & 39.30 & 38.18 & 44.93 \\
AGIEval (English) & 18.46 & 42.89 & 44.83 & \textbf{45.92} & 43.74 & 35.72 & 35.36 & 40.60 & 34.87 & 36.45 & 29.60 & 33.80 & 31.67 & 33.13 & 24.12 & 28.84 & 23.73 \\
\midrule
MMLU & 39.62 & 73.74 & 74.21 & 71.19 & \textbf{75.76} & 68.67 & 61.56 & 65.55 & 68.31 & 66.54 & 59.71 & 64.09 & 57.89 & 61.52 & 54.94 & 53.95 & 52.90 \\
MMLU (5-shot) & 43.26 & 74.62 & 72.97 & 73.07 & \textbf{75.45} & 71.19 & 64.40 & 68.20 & 67.42 & 70.65 & 61.44 & 63.81 & 59.61 & 53.88 & 56.59 & 57.66 & 48.11 \\
MMLU (cloze) & 38.80 & 48.82 & 48.41 & 49.32 & \textbf{51.99} & 49.26 & 45.39 & 44.84 & 46.67 & 48.72 & 44.64 & 38.34 & 46.35 & 38.22 & 42.54 & 44.08 & 35.28 \\
MMLU-Redux & 41.24 & 76.29 & 78.48 & 75.85 & \textbf{78.99} & 72.57 & 65.44 & 69.79 & 71.09 & 69.83 & 62.95 & 66.47 & 61.54 & 66.32 & 57.69 & 57.04 & 56.59 \\
MMLU-Redux (5-shot) & 44.41 & 77.47 & 76.44 & 77.47 & \textbf{78.80} & 74.86 & 68.01 & 72.27 & 70.38 & 73.38 & 64.09 & 66.10 & 62.96 & 56.85 & 59.19 & 60.83 & 50.49 \\
BIG-Bench Hard & 31.18 & 80.34 & \textbf{80.45} & 71.37 & 80.03 & 71.17 & 63.95 & 70.86 & 71.08 & 62.74 & 59.75 & 51.82 & 51.60 & 63.43 & 47.29 & 42.88 & 46.81 \\
DROP & 29.91 & 62.55 & \textbf{67.88} & 60.14 & 26.16 & 59.11 & 65.62 & 54.06 & 51.05 & 56.80 & 44.52 & 47.10 & 45.64 & 54.22 & 43.91 & 41.16 & 39.77 \\
HellaSwag & 66.53 & 76.43 & 75.23 & 73.75 & 75.88 & 73.76 & 74.67 & 71.22 & 71.61 & \textbf{78.51} & 75.13 & 66.18 & 75.81 & 65.94 & 74.22 & 73.71 & 54.92 \\
PIQA & 76.33 & 80.03 & 79.22 & 78.07 & 80.14 & 79.27 & 79.33 & 77.69 & 78.51 & \textbf{80.74} & 78.67 & 75.84 & 80.09 & 74.81 & 77.91 & 78.78 & 71.87 \\
ARC-Easy & 65.78 & 80.60 & 76.22 & 76.18 & 79.46 & 83.00 & 79.25 & 77.06 & 76.89 & \textbf{85.19} & 76.85 & 80.89 & 81.57 & 71.42 & 72.14 & 80.05 & 67.42 \\
ARC-Challenge & 38.23 & 58.70 & 55.29 & 51.96 & 55.63 & 58.02 & 49.83 & 54.10 & 52.56 & \textbf{59.73} & 52.13 & 51.62 & 54.52 & 45.82 & 46.25 & 52.30 & 40.44 \\
WinoGrande & 61.64 & 68.75 & \textbf{71.03} & 70.96 & 70.09 & 69.69 & 69.06 & 66.30 & 66.54 & 70.24 & 68.27 & 65.98 & 69.69 & 64.64 & 70.09 & 65.43 & 60.06 \\
OpenBookQA & 40.40 & 43.80 & 42.40 & 41.00 & 44.00 & 41.60 & 43.80 & 42.40 & 41.60 & \textbf{46.00} & 44.80 & 43.20 & 43.00 & 37.80 & 40.60 & 44.40 & 36.00 \\
\end{longtable}}
\end{landscape}

\section{Post-trained evaluation}
\label{app:post}
Tables~\ref{tab:app_post_logos} and~\ref{tab:app_post_praxis} give every post-trained benchmark for the Logos and Praxis comparison sets.

\begin{landscape}
{\tiny\setlength{\tabcolsep}{1.6pt}\renewcommand{\arraystretch}{1.08}
\begin{longtable}{@{}>{\raggedright\arraybackslash}p{2.35cm}rrrrrrrrrrrrrrrrrrr@{}}
\caption{Post-trained Metis-1.6 Logos and Praxis (think and direct modes) against the post-trained releases of open models in the Logos comparison set, on the harness of Section~\ref{sec:evalprotocol}. Bold marks the best score in each row.}\label{tab:app_post_logos}\\
\toprule
 &  &  &  &  &  &  &  & \textbf{Granite} &  &  &  &  &  &  & \textbf{Granite} & \textbf{Ministral} &  & \textbf{Falcon-} &  \\
 & \textbf{Logos} & \textbf{Logos} & \textbf{Praxis} & \textbf{Praxis} & \textbf{Qwen3.5} & \textbf{Qwen3} & \textbf{LFM2.5} & \textbf{4.0} & \textbf{Trinity} & \textbf{OLMoE} & \textbf{Qwen3.5} & \textbf{Qwen3} & \textbf{Gemma 4} & \textbf{Gemma 3} & \textbf{4.1} & \textbf{3} & \textbf{Olmo 3} & \textbf{H1} & \textbf{Gemma 4} \\
\textbf{Benchmark} & \textbf{think} & \textbf{direct} & \textbf{think} & \textbf{direct} & \textbf{4B} & \textbf{4B} & \textbf{8B-A1B} & \textbf{H Tiny} & \textbf{Nano} & \textbf{1B-7B} & \textbf{9B} & \textbf{8B} & \textbf{12B} & \textbf{12B} & \textbf{8B} & \textbf{8B} & \textbf{7B} & \textbf{7B} & \textbf{E4B} \\
\midrule
\endfirsthead
\multicolumn{20}{@{}l}{\emph{(continued)}}\\
\toprule
 &  &  &  &  &  &  &  & \textbf{Granite} &  &  &  &  &  &  & \textbf{Granite} & \textbf{Ministral} &  & \textbf{Falcon-} &  \\
 & \textbf{Logos} & \textbf{Logos} & \textbf{Praxis} & \textbf{Praxis} & \textbf{Qwen3.5} & \textbf{Qwen3} & \textbf{LFM2.5} & \textbf{4.0} & \textbf{Trinity} & \textbf{OLMoE} & \textbf{Qwen3.5} & \textbf{Qwen3} & \textbf{Gemma 4} & \textbf{Gemma 3} & \textbf{4.1} & \textbf{3} & \textbf{Olmo 3} & \textbf{H1} & \textbf{Gemma 4} \\
\textbf{Benchmark} & \textbf{think} & \textbf{direct} & \textbf{think} & \textbf{direct} & \textbf{4B} & \textbf{4B} & \textbf{8B-A1B} & \textbf{H Tiny} & \textbf{Nano} & \textbf{1B-7B} & \textbf{9B} & \textbf{8B} & \textbf{12B} & \textbf{12B} & \textbf{8B} & \textbf{8B} & \textbf{7B} & \textbf{7B} & \textbf{E4B} \\
\midrule
\endhead
\bottomrule
\endfoot
Parameters & 10.23B &  & 2.94B &  & 4.21B & 4.02B & 8.47B & 6.94B & 6.12B & 6.92B & 8.95B & 8.19B & 11.94B & 12.19B & 8.38B & 8.49B & 7.30B & 7.59B & 7.52B \\
Active per pass & 1.18B &  & 0.464B &  & 3.57B & 3.63B & 1.42B & 1.31B & 0.614B & 1.08B & 6.92B & 6.95B & 10.94B & 10.78B & 7.97B & 7.42B & 6.48B & 6.79B & 4.03B \\
Training tokens & 1.02T &  & 1.02T &  & undisclosed & 36T & 38T & 22.5T & 10T & 5.13T & undisclosed & 36T & undisclosed & 12T & 14.9T & 1-3T & 6.08T & 12T & undisclosed \\
\midrule
AIME 2025 & 67.50 & 0.00 & 27.08 & 0.00 & 64.17 & 53.75 & 20.83 & 7.92 & 1.67 & 0.00 & 71.25 & 52.08 & \textbf{79.17} & 18.75 & 16.67 & 20.83 & 24.58 & 10.83 & 41.25 \\
AIME 2026 & 67.50 & 6.67 & 31.25 & 0.00 & 73.33 & 63.75 & 23.33 & 3.75 & 2.92 & 0.00 & 78.75 & 61.25 & \textbf{79.58} & 16.25 & 13.33 & 17.50 & 29.17 & 7.08 & 46.67 \\
HMMT Feb 2025 & \textbf{62.92} & 13.33 & 16.25 & 0.00 & 36.25 & 35.83 & 10.42 & 2.08 & 1.25 & 0.00 & 45.42 & 33.75 & 61.67 & 9.17 & 10.00 & 12.50 & 10.42 & 3.75 & 25.00 \\
HMMT Nov 2025 & 57.92 & 10.00 & 24.17 & 0.00 & 39.58 & 44.17 & 7.08 & 3.33 & 0.42 & 1.67 & 46.67 & 43.75 & \textbf{69.58} & 8.33 & 9.58 & 12.92 & 12.08 & 3.33 & 38.33 \\
HMMT Feb 2026 & \textbf{63.64} & 0.00 & 22.73 & 0.00 & 35.61 & 34.85 & 14.39 & 1.52 & 0.76 & 0.38 & 39.39 & 41.67 & 53.03 & 10.23 & 12.12 & 9.85 & 18.94 & 2.27 & 31.82 \\
AMC 2023 & 92.50 & 60.00 & 60.31 & 22.50 & 84.38 & 93.44 & 61.25 & 41.56 & 28.44 & 7.50 & 93.12 & 90.94 & \textbf{97.81} & 64.38 & 65.00 & 58.12 & 67.19 & 47.50 & 86.25 \\
Brumo 2025 & 72.08 & 33.33 & 41.25 & 6.67 & 49.58 & 59.58 & 29.17 & 11.67 & 9.58 & 2.08 & 62.08 & 67.08 & \textbf{76.25} & 26.25 & 23.75 & 22.50 & 29.58 & 18.75 & 54.17 \\
IMO-AnswerBench & \textbf{57.00} & 19.50 & 16.00 & 2.50 & 33.50 & 37.00 & 4.00 & 4.00 & 2.50 & 2.00 & 38.50 & 37.00 & 44.50 & 12.50 & 10.00 & 12.50 & 12.00 & 7.00 & 29.50 \\
Apex 2025 & 3.12 & \textbf{16.67} & 1.04 & 0.00 & 0.00 & 0.00 & 0.00 & 0.00 & 0.00 & 0.00 & 1.04 & 0.00 & 1.04 & 0.00 & 0.00 & 0.00 & 0.00 & 1.04 & 2.08 \\
MATH-500 & 89.40 & 70.60 & 78.80 & 51.80 & 83.60 & 88.00 & 71.40 & 67.00 & 56.60 & 20.60 & 88.40 & 88.20 & \textbf{94.20} & 83.20 & 78.40 & 71.60 & 76.60 & 56.40 & 90.40 \\
GSM8K & 90.60 & 81.00 & 76.80 & 63.60 & 96.60 & 95.20 & 90.40 & 88.20 & 80.20 & 53.80 & 96.40 & 96.20 & \textbf{97.80} & 90.20 & 94.40 & 85.60 & 83.00 & 87.20 & 92.80 \\
\midrule
GPQA Diamond & 55.05 & 34.34 & 27.78 & 22.22 & 76.26 & 52.02 & 28.28 & 30.30 & 23.74 & 23.74 & \textbf{81.31} & 53.54 & 74.24 & 39.39 & 40.40 & 40.91 & 37.37 & 44.95 & 61.62 \\
GPQA Main & 50.45 & 33.03 & 31.03 & 25.22 & 70.76 & 47.32 & 39.28 & 28.35 & 24.55 & 23.89 & \textbf{72.10} & 52.46 & 66.07 & 37.72 & 39.51 & 41.74 & 34.37 & 41.96 & 55.14 \\
GPQA Extended & 41.58 & 30.22 & 23.26 & 17.58 & \textbf{65.57} & 48.35 & 43.22 & 29.30 & 23.26 & 21.79 & 65.20 & 49.82 & 60.81 & 34.43 & 38.83 & 39.56 & 35.53 & 42.31 & 50.00 \\
Humanity's Last Exam (text) & 4.38 & 6.91 & 5.85 & 6.50 & 6.90 & 3.95 & 3.70 & 5.55 & 4.41 & 2.71 & 7.82 & 3.98 & \textbf{9.39} & 4.01 & 4.31 & 2.07 & 4.59 & 3.48 & 4.47 \\
MMLU-Pro & 59.14 & 44.00 & 34.57 & 32.29 & 66.86 & 64.86 & 49.86 & 33.43 & 33.43 & 14.29 & \textbf{71.43} & 68.86 & 61.57 & 60.57 & 59.14 & 59.29 & 47.00 & 61.71 & 68.00 \\
MMLU & 68.00 & 61.57 & 53.43 & 50.43 & 73.57 & 78.14 & 65.57 & 59.14 & 62.57 & 36.29 & 74.29 & \textbf{81.43} & 76.29 & 76.71 & 77.29 & 75.71 & 62.86 & 77.00 & 79.57 \\
\midrule
LiveCodeBench v6 & 24.57 & 13.71 & 12.57 & 11.43 & 17.14 & 34.29 & 18.29 & 20.57 & 10.29 & 2.86 & 20.57 & 32.00 & 39.43 & 39.43 & 22.29 & 33.71 & 0.57 & 22.86 & \textbf{42.29} \\
HumanEval & 76.22 & 72.56 & 51.83 & 62.80 & 87.80 & 92.68 & 79.27 & 81.71 & 59.76 & 30.49 & 91.46 & 88.41 & \textbf{96.95} & 84.76 & 85.37 & 87.20 & 82.93 & 82.93 & 95.73 \\
HumanEval+ & 62.20 & 59.76 & 42.07 & 50.00 & 74.39 & 75.61 & 64.63 & 64.02 & 45.12 & 22.56 & 77.44 & 74.39 & \textbf{83.54} & 65.24 & 70.12 & 70.73 & 67.68 & 65.85 & 79.88 \\
BFCL v4 & 58.60 & 56.40 & 52.40 & 50.40 & 48.20 & 58.20 & 36.60 & 55.80 & 30.20 & 11.00 & 54.60 & 59.80 & \textbf{67.20} & 51.80 & 60.00 & 56.60 & 44.80 & 36.00 & 66.20 \\
\midrule
IFEval (strict) & 71.07 & 69.92 & 64.56 & 66.48 & 89.66 & 86.21 & 84.67 & 78.74 & 53.26 & 63.98 & 92.15 & 86.97 & \textbf{92.91} & 84.87 & 84.87 & 61.69 & 85.25 & 86.02 & 77.78 \\
IFBench & 39.00 & 28.67 & 34.33 & 24.33 & 60.33 & 34.00 & 52.00 & 20.33 & 16.67 & 21.33 & 63.67 & 35.00 & \textbf{74.00} & 36.67 & 36.67 & 30.67 & 34.33 & 30.00 & 41.00 \\
Multi-IF (English) & 79.67 & 76.00 & 65.00 & 66.00 & 90.33 & 85.67 & 85.33 & 77.33 & 52.67 & 63.00 & 92.67 & 88.33 & \textbf{94.67} & 85.33 & 83.67 & 63.67 & 82.67 & 85.00 & 82.00 \\
\midrule
RULER 4K & 68.33 & 84.48 & 48.67 & 75.54 & 88.00 & 90.33 & 63.67 & 81.67 & 78.33 & 59.00 & 88.33 & 92.33 & 90.00 & 94.00 & 86.33 & 92.00 & 73.00 & 93.00 & \textbf{95.00} \\
RULER 32K & 51.67 & 73.25 & 25.00 & 56.23 & \textbf{97.22} & 85.00 & 41.11 & 61.67 & 50.56 & -- & 90.56 & 87.78 & 88.33 & 71.11 & 76.67 & 87.78 & 53.33 & 79.44 & 86.11 \\
RULER 128K & 34.17 & 60.14 & 21.67 & 42.38 & \textbf{94.17} & -- & 5.83 & 47.50 & 16.67 & -- & 85.00 & -- & 83.33 & 45.00 & 50.83 & 78.33 & -- & 45.83 & 81.67 \\
LongBench v2 & 13.20 & 12.80 & 22.40 & 15.60 & 35.20 & 31.60 & 15.60 & 14.40 & 13.60 & 10.40 & 34.80 & 33.60 & \textbf{40.00} & 29.60 & 24.40 & 31.20 & 33.60 & 20.40 & 35.60 \\
MRCR (2/4/8-needle) & 5.83 & 4.17 & 0.83 & 1.67 & \textbf{35.00} & 10.00 & 0.83 & 5.83 & 0.83 & 0.00 & 34.17 & 9.17 & 15.83 & 7.50 & 8.33 & 5.83 & 0.83 & 7.50 & 7.50 \\
AA-LCR v1.1 & 6.00 & 4.00 & 1.00 & 3.00 & 4.00 & 2.00 & 0.00 & 1.00 & 4.00 & 3.00 & \textbf{9.00} & 2.00 & 5.00 & 5.00 & 4.00 & 6.00 & 3.00 & 2.00 & 3.00 \\
\midrule
SimpleQA Verified & 3.78 & 4.00 & 2.40 & 1.40 & 4.80 & 3.80 & \textbf{8.00} & 4.60 & 3.40 & 3.80 & 7.00 & 6.40 & 4.60 & 4.80 & 5.00 & 5.00 & 3.20 & 4.00 & 3.20 \\
FRAMES & 10.75 & 10.75 & 3.75 & 4.50 & 10.75 & 10.75 & \textbf{17.50} & 10.25 & 9.00 & 4.75 & \textbf{17.50} & 16.00 & 17.25 & 14.50 & \textbf{17.50} & 16.00 & 7.00 & 8.75 & 10.50 \\
AA-Omniscience & 13.50 & 10.83 & 4.67 & 5.83 & 13.67 & 10.33 & \textbf{20.83} & 12.17 & 9.83 & 6.67 & 15.50 & 12.17 & 15.50 & 12.00 & 15.00 & 15.17 & 6.50 & 12.83 & 13.67 \\
BrowseComp (closed-book) & 0.42 & 0.00 & 0.42 & 0.33 & 0.33 & 0.67 & \textbf{2.00} & 0.00 & 0.67 & 0.67 & 0.00 & 1.67 & 0.33 & 0.67 & 0.67 & 1.33 & 0.00 & 0.67 & 0.33 \\
\midrule
ARC-Challenge & 94.00 & 86.40 & 79.20 & 76.40 & \textbf{97.60} & 96.60 & 93.60 & 81.00 & 88.20 & 48.20 & 96.60 & \textbf{97.60} & 96.20 & 96.00 & 94.00 & 94.40 & 83.00 & 95.20 & 84.80 \\
ARC-Easy & 93.60 & 91.60 & 81.60 & 79.80 & 97.40 & 97.80 & 93.80 & 82.60 & 95.00 & 50.00 & 97.40 & \textbf{98.40} & 97.00 & 97.60 & 97.80 & 96.60 & 86.00 & 96.80 & 89.20 \\
PIQA & 86.17 & 77.27 & 66.27 & 64.35 & 90.43 & 87.32 & 83.73 & 73.92 & 81.82 & 53.11 & \textbf{91.39} & 89.71 & 91.15 & 88.04 & 89.00 & 83.49 & 73.92 & 85.65 & 85.17 \\
WinoGrande & 71.01 & 63.98 & 54.66 & 55.07 & 88.82 & 89.65 & 77.23 & 69.77 & 72.26 & 58.80 & 89.44 & \textbf{92.55} & 90.06 & 83.23 & 86.75 & 75.78 & 68.32 & 83.85 & 84.89 \\
HellaSwag & 53.00 & 52.40 & 35.80 & 39.00 & 77.60 & 76.20 & 48.20 & 57.60 & 59.60 & 41.00 & \textbf{84.20} & 82.80 & 79.00 & 81.60 & 82.20 & 69.80 & 55.40 & 78.20 & 76.00 \\
\midrule
XSTest (benign compliance) & 99.20 & 98.80 & 99.20 & 96.80 & 98.80 & \textbf{100.00} & 99.20 & 92.80 & 99.60 & 92.40 & 99.20 & \textbf{100.00} & \textbf{100.00} & 98.40 & 94.40 & 96.80 & 91.60 & 98.80 & 96.40 \\
StrongREJECT (refusal) & 49.84 & 76.04 & 44.41 & 70.61 & 92.01 & 43.45 & 2.88 & 76.36 & 15.02 & 90.10 & 94.89 & 46.33 & \textbf{97.44} & 46.01 & 72.84 & 23.00 & 88.50 & 67.09 & 94.89 \\
HarmBench (refusal) & 34.33 & 41.00 & 25.00 & 42.67 & 81.67 & 29.67 & 10.67 & 57.00 & 8.67 & 60.00 & \textbf{91.00} & 36.00 & 71.33 & 29.33 & 55.00 & 15.33 & 77.67 & 36.33 & 61.33 \\
Mean of 40 benchmarks (think) & 52.93 &  & 36.62 &  & 58.51 & 53.68 & 40.52 & 38.41 & 31.60 & 25.20 & 61.90 & 55.09 & \textbf{64.06} & 45.71 & 46.72 & 42.88 & 43.38 & 42.91 & 54.59 \\
\end{longtable}}
{\footnotesize The mean leaves out RULER 32K and 128K, which some models cannot run, and uses think mode for Metis. ScaleAI MultiChallenge, listed on the model cards, is omitted. Source: the benchmark table of \texttt{Lernex/Metis-1.6-Logos} at revision \texttt{ca062e2e}.\par}
\end{landscape}

\begin{landscape}
{\tiny\setlength{\tabcolsep}{1.6pt}\renewcommand{\arraystretch}{1.08}
\begin{longtable}{@{}>{\raggedright\arraybackslash}p{2.35cm}rrrrrrrrrrrrrrrrrrrr@{}}
\caption{Post-trained Metis-1.6 Praxis and Logos (think and direct modes) against the post-trained releases of open models in the Praxis comparison set, on the harness of Section~\ref{sec:evalprotocol}. Bold marks the best score in each row.}\label{tab:app_post_praxis}\\
\toprule
 &  &  &  &  &  &  & \textbf{Nanbeige} & \textbf{Ministral} & \textbf{Granite} & \textbf{Falcon-} &  &  &  &  &  &  & \textbf{Llama} &  &  &  \\
 & \textbf{Praxis} & \textbf{Praxis} & \textbf{Logos} & \textbf{Logos} & \textbf{Qwen3.5} & \textbf{Qwen3} & \textbf{4.2} & \textbf{3} & \textbf{4.1} & \textbf{H1} & \textbf{Ouro} & \textbf{Gemma 4} & \textbf{SmolLM3} & \textbf{LFM2.5} & \textbf{Gemma 3} & \textbf{Qwen3.5} & \textbf{3.2} & \textbf{Gemma 4} & \textbf{Qwen3.5} & \textbf{Ouro} \\
\textbf{Benchmark} & \textbf{think} & \textbf{direct} & \textbf{think} & \textbf{direct} & \textbf{4B} & \textbf{4B} & \textbf{3B} & \textbf{3B} & \textbf{3B} & \textbf{3B} & \textbf{1.4B} & \textbf{E4B} & \textbf{3B} & \textbf{2.6B} & \textbf{4B} & \textbf{2B} & \textbf{3B} & \textbf{E2B} & \textbf{0.8B} & \textbf{2.6B} \\
\midrule
\endfirsthead
\multicolumn{21}{@{}l}{\emph{(continued)}}\\
\toprule
 &  &  &  &  &  &  & \textbf{Nanbeige} & \textbf{Ministral} & \textbf{Granite} & \textbf{Falcon-} &  &  &  &  &  &  & \textbf{Llama} &  &  &  \\
 & \textbf{Praxis} & \textbf{Praxis} & \textbf{Logos} & \textbf{Logos} & \textbf{Qwen3.5} & \textbf{Qwen3} & \textbf{4.2} & \textbf{3} & \textbf{4.1} & \textbf{H1} & \textbf{Ouro} & \textbf{Gemma 4} & \textbf{SmolLM3} & \textbf{LFM2.5} & \textbf{Gemma 3} & \textbf{Qwen3.5} & \textbf{3.2} & \textbf{Gemma 4} & \textbf{Qwen3.5} & \textbf{Ouro} \\
\textbf{Benchmark} & \textbf{think} & \textbf{direct} & \textbf{think} & \textbf{direct} & \textbf{4B} & \textbf{4B} & \textbf{3B} & \textbf{3B} & \textbf{3B} & \textbf{3B} & \textbf{1.4B} & \textbf{E4B} & \textbf{3B} & \textbf{2.6B} & \textbf{4B} & \textbf{2B} & \textbf{3B} & \textbf{E2B} & \textbf{0.8B} & \textbf{2.6B} \\
\midrule
\endhead
\bottomrule
\endfoot
Parameters & 2.94B &  & 10.23B &  & 4.21B & 4.02B & 4.17B & 3.43B & 3.40B & 3.15B & 1.43B & 7.52B & 3.08B & 2.70B & 3.88B & 1.88B & 3.21B & 4.65B & 0.752B & 2.67B \\
Active per pass & 0.464B &  & 1.18B &  & 3.57B & 3.63B & 3.15B & 3.03B & 3.15B & 2.81B & 1.23B & 4.03B & 2.81B & 2.44B & 3.21B & 1.37B & 2.82B & 1.90B & 0.498B & 2.47B \\
Training tokens & 1.02T &  & 1.02T &  & undisclosed & 36T & 28T & 1-3T & 14.9T & 2.5T & 7.7T & undisclosed & 11.2T & 34T & 4T & undisclosed & 9T & undisclosed & undisclosed & 7.7T \\
\midrule
AIME 2025 & 27.08 & 0.00 & 67.50 & 0.00 & 64.17 & 53.75 & \textbf{68.75} & 15.00 & 5.83 & 14.17 & 37.08 & 41.25 & 34.58 & 42.08 & 11.25 & 18.75 & 0.00 & 29.58 & 0.42 & 54.58 \\
AIME 2026 & 31.25 & 0.00 & 67.50 & 6.67 & 73.33 & 63.75 & \textbf{73.75} & 9.58 & 4.58 & 7.08 & 44.17 & 46.67 & 40.83 & 40.83 & 10.00 & 14.17 & 0.42 & 40.83 & 0.00 & 60.83 \\
HMMT Feb 2025 & 16.25 & 0.00 & 62.92 & 13.33 & 36.25 & 35.83 & \textbf{63.75} & 5.83 & 0.42 & 5.00 & 27.92 & 25.00 & 24.17 & 24.58 & 1.25 & 10.83 & 0.42 & 20.83 & 0.00 & 30.42 \\
HMMT Nov 2025 & 24.17 & 0.00 & 57.92 & 10.00 & 39.58 & 44.17 & \textbf{60.42} & 4.58 & 1.25 & 5.00 & 32.50 & 38.33 & 25.83 & 26.67 & 4.58 & 5.00 & 1.25 & 16.25 & 0.00 & 40.83 \\
HMMT Feb 2026 & 22.73 & 0.00 & \textbf{63.64} & 0.00 & 35.61 & 34.85 & 53.79 & 6.06 & 0.76 & 6.06 & 28.41 & 31.82 & 26.89 & 22.35 & 5.30 & 12.50 & 0.76 & 18.94 & 0.00 & 32.20 \\
AMC 2023 & 60.31 & 22.50 & 92.50 & 60.00 & 84.38 & 93.44 & \textbf{95.00} & 42.81 & 34.06 & 55.00 & 86.88 & 86.25 & 85.94 & 71.56 & 44.69 & 55.00 & 15.31 & 76.56 & 25.00 & 93.44 \\
Brumo 2025 & 41.25 & 6.67 & 72.08 & 33.33 & 49.58 & 59.58 & \textbf{75.00} & 15.42 & 9.17 & 20.00 & 54.58 & 54.17 & 54.17 & 40.00 & 20.00 & 24.58 & 0.42 & 43.75 & 4.58 & 65.83 \\
IMO-AnswerBench & 16.00 & 2.50 & \textbf{57.00} & 19.50 & 33.50 & 37.00 & 41.00 & 9.50 & 7.00 & 8.50 & 29.00 & 29.50 & 27.50 & 23.00 & 6.50 & 11.50 & 0.00 & 20.50 & 5.00 & 38.00 \\
Apex 2025 & 1.04 & 0.00 & 3.12 & \textbf{16.67} & 0.00 & 0.00 & 0.00 & 1.04 & 0.00 & 0.00 & 2.08 & 2.08 & 0.00 & 1.04 & 0.00 & 0.00 & 0.00 & 1.04 & 3.12 & 0.00 \\
MATH-500 & 78.80 & 51.80 & 89.40 & 70.60 & 83.60 & 88.00 & \textbf{91.80} & 64.60 & 61.60 & 76.40 & 88.60 & 90.40 & 85.20 & 84.60 & 72.60 & 72.80 & 32.40 & 86.00 & 43.60 & 90.20 \\
GSM8K & 76.80 & 63.60 & 90.60 & 81.00 & \textbf{96.60} & 95.20 & 95.20 & 87.20 & 82.00 & 88.00 & 94.60 & 92.80 & 92.40 & 90.80 & 90.60 & 87.20 & 74.00 & 90.80 & 64.20 & 96.00 \\
\midrule
GPQA Diamond & 27.78 & 22.22 & 55.05 & 34.34 & 76.26 & 52.02 & \textbf{76.77} & 33.33 & 32.32 & 37.37 & 40.40 & 61.62 & 33.33 & 54.04 & 31.82 & 51.01 & 24.75 & 42.93 & 23.23 & 46.97 \\
GPQA Main & 31.03 & 25.22 & 50.45 & 33.03 & \textbf{70.76} & 47.32 & 67.63 & 33.03 & 30.80 & 35.49 & 42.19 & 55.14 & 34.37 & 54.02 & 31.03 & 45.98 & 27.23 & 42.41 & 26.56 & 43.75 \\
GPQA Extended & 23.26 & 17.58 & 41.58 & 30.22 & \textbf{65.57} & 48.35 & \textbf{65.57} & 30.22 & 32.05 & 34.80 & 37.91 & 50.00 & 34.80 & 53.85 & 28.39 & 44.32 & 25.46 & 45.97 & 22.89 & 41.58 \\
Humanity's Last Exam (text) & 5.85 & 6.50 & 4.38 & 6.91 & 6.90 & 3.95 & \textbf{10.62} & 3.02 & 3.92 & 4.47 & 5.52 & 4.47 & 4.68 & 5.42 & 4.62 & 5.48 & 4.22 & 3.88 & 4.90 & 4.71 \\
MMLU-Pro & 34.57 & 32.29 & 59.14 & 44.00 & 66.86 & 64.86 & \textbf{70.29} & 48.29 & 34.57 & 47.00 & 59.14 & 68.00 & 55.57 & 62.14 & 36.29 & 53.14 & 34.14 & 59.43 & 27.86 & 60.57 \\
MMLU & 53.43 & 50.43 & 68.00 & 61.57 & 73.57 & 78.14 & \textbf{80.43} & 67.57 & 52.86 & 61.00 & 70.71 & 79.57 & 71.14 & 78.29 & 58.00 & 70.00 & 57.86 & 73.43 & 38.00 & 69.86 \\
\midrule
LiveCodeBench v6 & 12.57 & 11.43 & 24.57 & 13.71 & 17.14 & 34.29 & 29.14 & 24.00 & 19.43 & 17.71 & 30.29 & 42.29 & 23.43 & 30.86 & \textbf{43.43} & 9.14 & 12.00 & 19.43 & 4.00 & 32.00 \\
HumanEval & 51.83 & 62.80 & 76.22 & 72.56 & 87.80 & 92.68 & 95.12 & 79.27 & 82.32 & 71.95 & 89.02 & \textbf{95.73} & 89.02 & 92.07 & 71.95 & 53.05 & 60.37 & 87.20 & 20.12 & 93.29 \\
HumanEval+ & 42.07 & 50.00 & 62.20 & 59.76 & 74.39 & 75.61 & 78.05 & 65.85 & 65.24 & 58.54 & 71.95 & \textbf{79.88} & 71.34 & 74.39 & 54.88 & 40.85 & 46.95 & 72.56 & 13.41 & 79.27 \\
BFCL v4 & 52.40 & 50.40 & 58.60 & 56.40 & 48.20 & 58.20 & 60.60 & 52.20 & 44.20 & 29.60 & 34.40 & \textbf{66.20} & 38.40 & 31.80 & 45.00 & 45.60 & 24.20 & 65.20 & 34.20 & 40.40 \\
\midrule
IFEval (strict) & 64.56 & 66.48 & 71.07 & 69.92 & 89.66 & 86.21 & 85.06 & 57.66 & 81.80 & 83.33 & 41.19 & 77.78 & 70.11 & \textbf{94.25} & 79.50 & 79.50 & 70.69 & 76.05 & 63.22 & 50.00 \\
IFBench & 34.33 & 24.33 & 39.00 & 28.67 & \textbf{60.33} & 34.00 & 54.33 & 26.33 & 35.67 & 26.33 & 22.00 & 41.00 & 25.00 & 57.67 & 29.67 & 44.33 & 30.00 & 39.33 & 32.33 & 18.00 \\
Multi-IF (English) & 65.00 & 66.00 & 79.67 & 76.00 & 90.33 & 85.67 & 85.00 & 59.00 & 80.00 & 84.33 & 44.33 & 82.00 & 69.33 & \textbf{93.00} & 78.33 & 81.00 & 70.00 & 75.67 & 61.67 & 50.00 \\
\midrule
RULER 4K & 48.67 & 75.54 & 68.33 & 84.48 & 88.00 & 90.33 & 70.00 & 84.33 & 74.67 & 77.00 & 59.33 & \textbf{95.00} & 84.33 & 75.33 & 80.67 & 84.00 & 85.00 & 83.33 & 85.00 & 56.33 \\
RULER 32K & 25.00 & 56.23 & 51.67 & 73.25 & \textbf{97.22} & 85.00 & 80.56 & 81.11 & 67.78 & 55.00 & 34.44 & 86.11 & 70.00 & 64.44 & 48.89 & 93.33 & 71.11 & 78.33 & 82.78 & 40.56 \\
RULER 128K & 21.67 & 42.38 & 34.17 & 60.14 & \textbf{94.17} & -- & 60.83 & 73.33 & 38.33 & 38.33 & -- & 81.67 & -- & 4.17 & 31.67 & 80.00 & 58.33 & 63.33 & 72.50 & -- \\
LongBench v2 & 22.40 & 15.60 & 13.20 & 12.80 & 35.20 & 31.60 & 32.80 & 20.00 & 10.00 & 6.00 & 24.40 & \textbf{35.60} & 26.00 & 26.40 & 27.20 & 24.80 & 20.40 & 28.00 & 20.00 & 27.60 \\
MRCR (2/4/8-needle) & 0.83 & 1.67 & 5.83 & 4.17 & \textbf{35.00} & 10.00 & 5.83 & 3.33 & 3.33 & 5.00 & 5.00 & 7.50 & 1.67 & 1.67 & 1.67 & 30.83 & 5.83 & 5.00 & 13.33 & 9.17 \\
AA-LCR v1.1 & 1.00 & 3.00 & \textbf{6.00} & 4.00 & 4.00 & 2.00 & 5.00 & 4.00 & 1.00 & 0.00 & 3.00 & 3.00 & 1.00 & 2.00 & \textbf{6.00} & 4.00 & 3.00 & 0.00 & 3.00 & 1.00 \\
\midrule
SimpleQA Verified & 2.40 & 1.40 & 3.78 & 4.00 & 4.80 & 3.80 & 4.00 & 4.80 & \textbf{5.00} & 3.00 & 2.80 & 3.20 & 4.20 & 2.20 & 3.40 & 3.20 & 2.80 & 3.80 & 1.40 & 3.60 \\
FRAMES & 3.75 & 4.50 & 10.75 & 10.75 & 10.75 & 10.75 & 10.50 & 9.75 & 9.50 & 4.50 & 5.75 & 10.50 & 10.75 & 11.50 & 9.00 & 6.00 & 12.75 & \textbf{13.00} & 2.00 & 7.50 \\
AA-Omniscience & 4.67 & 5.83 & 13.50 & 10.83 & 13.67 & 10.33 & \textbf{16.83} & 12.17 & 9.33 & 6.50 & 7.33 & 13.67 & 7.17 & 9.00 & 9.33 & 7.50 & 8.33 & 8.33 & 3.83 & 10.00 \\
BrowseComp (closed-book) & 0.42 & 0.33 & 0.42 & 0.00 & 0.33 & \textbf{0.67} & \textbf{0.67} & 0.33 & 0.00 & 0.00 & \textbf{0.67} & 0.33 & \textbf{0.67} & 0.00 & 0.00 & 0.33 & \textbf{0.67} & 0.00 & 0.00 & 0.00 \\
\midrule
ARC-Challenge & 79.20 & 76.40 & 94.00 & 86.40 & \textbf{97.60} & 96.60 & 96.00 & 88.00 & 68.00 & 79.40 & 93.40 & 84.80 & 93.00 & 96.00 & 85.00 & 92.60 & 85.40 & 93.80 & 73.60 & 87.80 \\
ARC-Easy & 81.60 & 79.80 & 93.60 & 91.60 & 97.40 & \textbf{97.80} & 96.60 & 91.20 & 71.40 & 76.20 & 95.80 & 89.20 & 95.20 & 97.60 & 91.60 & 95.80 & 91.20 & 96.40 & 82.20 & 87.60 \\
PIQA & 66.27 & 64.35 & 86.17 & 77.27 & \textbf{90.43} & 87.32 & 83.49 & 72.73 & 71.05 & 57.89 & 78.95 & 85.17 & 80.38 & 86.36 & 71.05 & 83.73 & 77.27 & 81.58 & 61.24 & 74.40 \\
WinoGrande & 54.66 & 55.07 & 71.01 & 63.98 & 88.82 & 89.65 & \textbf{91.93} & 65.42 & 73.71 & 50.52 & 68.12 & 84.89 & 80.33 & 83.85 & 68.53 & 83.23 & 62.53 & 80.54 & 36.44 & 60.66 \\
HellaSwag & 35.80 & 39.00 & 53.00 & 52.40 & \textbf{77.60} & 76.20 & 75.00 & 55.40 & 70.20 & 50.80 & 64.20 & 76.00 & 65.20 & 67.80 & 58.80 & 62.00 & 56.40 & 58.60 & 17.40 & 64.80 \\
\midrule
XSTest (benign compliance) & 99.20 & 96.80 & 99.20 & 98.80 & 98.80 & \textbf{100.00} & 98.80 & 96.80 & 98.40 & 99.60 & 92.80 & 96.40 & 99.60 & 98.40 & 98.00 & 94.80 & 94.40 & 96.80 & 96.40 & 87.60 \\
StrongREJECT (refusal) & 44.41 & 70.61 & 49.84 & 76.04 & 92.01 & 43.45 & \textbf{94.89} & 18.53 & 45.69 & 70.29 & 51.44 & \textbf{94.89} & 12.14 & 75.08 & 37.06 & 84.66 & 81.79 & 91.69 & 56.23 & 63.58 \\
HarmBench (refusal) & 25.00 & 42.67 & 34.33 & 41.00 & 81.67 & 29.67 & \textbf{85.00} & 18.67 & 44.67 & 27.67 & 32.67 & 61.33 & 8.00 & 62.33 & 30.67 & 55.67 & 56.00 & 56.33 & 34.00 & 37.33 \\
Mean of 40 benchmarks (think) & 36.62 &  & 52.93 &  & 58.51 & 53.68 & \textbf{61.11} & 37.17 & 36.45 & 37.29 & 45.01 & 54.59 & 44.69 & 51.12 & 38.44 & 43.72 & 33.92 & 48.64 & 27.61 & 47.79 \\
\end{longtable}}
{\footnotesize The mean leaves out RULER 32K and 128K, which some models cannot run, and uses think mode for Metis. ScaleAI MultiChallenge, listed on the model cards, is omitted. Source: the benchmark table of \texttt{Lernex/Metis-1.6-Praxis} at revision \texttt{c69a7630}.\par}
\end{landscape}

\section{Base-model evaluation protocols}
\label{app:protocols}
{\scriptsize
\begin{longtable}{@{}p{3.1cm}p{12.6cm}@{}}
\caption{Base-model evaluation protocol for each benchmark (Appendix~\ref{app:base}).}\label{tab:protocols}\\
\toprule \textbf{Benchmark} & \textbf{Protocol} \\ \midrule \endfirsthead
\toprule \textbf{Benchmark} & \textbf{Protocol} \\ \midrule \endhead
\bottomrule \endfoot
Humanity's Last Exam & Text only, no judge model. Multiple choice scored by answer likelihood; short answers generated greedily and matched exactly or as an equivalent. Items too long for the window count wrong. Not comparable with official HLE scores, which are judged and include images. \\
HLE-Diamond & Text only, no judge model. Multiple choice scored by answer likelihood; short answers generated greedily and matched exactly or as an equivalent. Items too long for the window count wrong. Not comparable with official HLE scores, which are judged and include images. \\
HLE-Verified (Gold) & Text only, no judge model. Multiple choice scored by answer likelihood; short answers generated greedily and matched exactly or as an equivalent. Items too long for the window count wrong. Not comparable with official HLE scores, which are judged and include images. \\
GPQA & Multiple choice, 0-shot, scored by answer likelihood. Length-normalized for HellaSwag, ARC, OpenBookQA and PIQA. MMMLU questions too long for the window count wrong. \\
GPQA Diamond & Multiple choice, 0-shot, scored by answer likelihood. Length-normalized for HellaSwag, ARC, OpenBookQA and PIQA. MMMLU questions too long for the window count wrong. \\
SuperGPQA & Multiple choice, 5-shot, scored by answer likelihood. SuperGPQA uses five fixed demonstrations in place of its generative official protocol, and counts the questions too long for the window wrong. \\
MMLU-Pro & Multiple choice, 5-shot, scored by answer likelihood. SuperGPQA uses five fixed demonstrations in place of its generative official protocol, and counts the questions too long for the window wrong. \\
BIG-Bench Extra Hard & The benchmark's own prompt and scorer. BIG-Bench Extra Hard 0-shot, items too long for the window counted wrong. ARC-AGI-2 as text grids, two attempts. SciCode without background, the reply starting at the step's function header. \\
ARC-AGI-2 & The benchmark's own prompt and scorer. BIG-Bench Extra Hard 0-shot, items too long for the window counted wrong. ARC-AGI-2 as text grids, two attempts. SciCode without background, the reply starting at the step's function header. \\
SimpleQA Verified & Greedy generation, answer matched. GSM8K and GSM8K-Platinum 5-shot, strict match. SimpleQA Verified 5-shot, normalized match, no grader model. TriviaQA 5-shot. MGSM 8-shot native chain of thought in all 11 languages, items that outgrow the window counted wrong. BIG-Bench Hard 3-shot chain of thought. DROP 3-shot F1. \\
AIME 2026 & Minerva 4-shot prompt, greedy. MATH-500 by exact match with a 256-token cap; AIME, HMMT and Omni-MATH checked by math\_verify, Omni-MATH without its judge model. \\
HMMT Feb 2026 & Minerva 4-shot prompt, greedy. MATH-500 by exact match with a 256-token cap; AIME, HMMT and Omni-MATH checked by math\_verify, Omni-MATH without its judge model. \\
AIME 2025 & Minerva 4-shot prompt, greedy. MATH-500 by exact match with a 256-token cap; AIME, HMMT and Omni-MATH checked by math\_verify, Omni-MATH without its judge model. \\
MATH-500 & Minerva 4-shot prompt, greedy. MATH-500 by exact match with a 256-token cap; AIME, HMMT and Omni-MATH checked by math\_verify, Omni-MATH without its judge model. \\
Omni-MATH & Minerva 4-shot prompt, greedy. MATH-500 by exact match with a 256-token cap; AIME, HMMT and Omni-MATH checked by math\_verify, Omni-MATH without its judge model. \\
GSM8K & Greedy generation, answer matched. GSM8K and GSM8K-Platinum 5-shot, strict match. SimpleQA Verified 5-shot, normalized match, no grader model. TriviaQA 5-shot. MGSM 8-shot native chain of thought in all 11 languages, items that outgrow the window counted wrong. BIG-Bench Hard 3-shot chain of thought. DROP 3-shot F1. \\
GSM8K-Platinum & Greedy generation, answer matched. GSM8K and GSM8K-Platinum 5-shot, strict match. SimpleQA Verified 5-shot, normalized match, no grader model. TriviaQA 5-shot. MGSM 8-shot native chain of thought in all 11 languages, items that outgrow the window counted wrong. BIG-Bench Hard 3-shot chain of thought. DROP 3-shot F1. \\
MGSM & Greedy generation, answer matched. GSM8K and GSM8K-Platinum 5-shot, strict match. SimpleQA Verified 5-shot, normalized match, no grader model. TriviaQA 5-shot. MGSM 8-shot native chain of thought in all 11 languages, items that outgrow the window counted wrong. BIG-Bench Hard 3-shot chain of thought. DROP 3-shot F1. \\
LiveCodeBench & Code, pass@1 from 10 samples at temperature 0.2. LiveCodeBench with its official base-model prompt on the January to April 2025 problems, CRUXEval 2-shot, MultiPL-E as the mean of 18 languages. \\
HumanEval+ & Code run against the tests, greedy. HumanEval+ 0-shot, MBPP+ 3-shot, and BigCodeBench's complete split, all 1,140 tasks with 4 GB of memory each (the official 30 GB is sized for a larger machine; 99.5\% of reference solutions pass here). Prompts are right-stripped, as the reference harnesses do. \\
MBPP+ & Code run against the tests, greedy. HumanEval+ 0-shot, MBPP+ 3-shot, and BigCodeBench's complete split, all 1,140 tasks with 4 GB of memory each (the official 30 GB is sized for a larger machine; 99.5\% of reference solutions pass here). Prompts are right-stripped, as the reference harnesses do. \\
CRUXEval-O & Code, pass@1 from 10 samples at temperature 0.2. LiveCodeBench with its official base-model prompt on the January to April 2025 problems, CRUXEval 2-shot, MultiPL-E as the mean of 18 languages. \\
CRUXEval-I & Code, pass@1 from 10 samples at temperature 0.2. LiveCodeBench with its official base-model prompt on the January to April 2025 problems, CRUXEval 2-shot, MultiPL-E as the mean of 18 languages. \\
BigCodeBench & Code run against the tests, greedy. HumanEval+ 0-shot, MBPP+ 3-shot, and BigCodeBench's complete split, all 1,140 tasks with 4 GB of memory each (the official 30 GB is sized for a larger machine; 99.5\% of reference solutions pass here). Prompts are right-stripped, as the reference harnesses do. \\
MultiPL-E & Code, pass@1 from 10 samples at temperature 0.2. LiveCodeBench with its official base-model prompt on the January to April 2025 problems, CRUXEval 2-shot, MultiPL-E as the mean of 18 languages. \\
SciCode & The benchmark's own prompt and scorer. BIG-Bench Extra Hard 0-shot, items too long for the window counted wrong. ARC-AGI-2 as text grids, two attempts. SciCode without background, the reply starting at the step's function header. \\
RULER (4K) & RULER, 13 tasks at each length. Haystacks built with each model's own tokenizer: 500 items per task up to 16K, the first 200 at 32K, the first 100 at 64K and 128K. No model is run past its context window, or past what fits the 16 GB evaluation GPU at that length. \\
RULER (8K) & RULER, 13 tasks at each length. Haystacks built with each model's own tokenizer: 500 items per task up to 16K, the first 200 at 32K, the first 100 at 64K and 128K. No model is run past its context window, or past what fits the 16 GB evaluation GPU at that length. \\
RULER (16K) & RULER, 13 tasks at each length. Haystacks built with each model's own tokenizer: 500 items per task up to 16K, the first 200 at 32K, the first 100 at 64K and 128K. No model is run past its context window, or past what fits the 16 GB evaluation GPU at that length. \\
RULER (32K) & RULER, 13 tasks at each length. Haystacks built with each model's own tokenizer: 500 items per task up to 16K, the first 200 at 32K, the first 100 at 64K and 128K. No model is run past its context window, or past what fits the 16 GB evaluation GPU at that length. \\
RULER (64K) & RULER, 13 tasks at each length. Haystacks built with each model's own tokenizer: 500 items per task up to 16K, the first 200 at 32K, the first 100 at 64K and 128K. No model is run past its context window, or past what fits the 16 GB evaluation GPU at that length. \\
RULER (128K) & RULER, 13 tasks at each length. Haystacks built with each model's own tokenizer: 500 items per task up to 16K, the first 200 at 32K, the first 100 at 64K and 128K. No model is run past its context window, or past what fits the 16 GB evaluation GPU at that length. \\
LongBench v2 & LongBench v2, 0-shot. All 503 questions in the official prompt, scored by the answer letter's likelihood. A prompt longer than the window is cut in the middle, as the official protocol does. Chance is 25. \\
TriviaQA & Greedy generation, answer matched. GSM8K and GSM8K-Platinum 5-shot, strict match. SimpleQA Verified 5-shot, normalized match, no grader model. TriviaQA 5-shot. MGSM 8-shot native chain of thought in all 11 languages, items that outgrow the window counted wrong. BIG-Bench Hard 3-shot chain of thought. DROP 3-shot F1. \\
MMMLU & Multiple choice, 0-shot, scored by answer likelihood. Length-normalized for HellaSwag, ARC, OpenBookQA and PIQA. MMMLU questions too long for the window count wrong. \\
TruthfulQA MC2 & Multiple choice, 0-shot, scored by answer likelihood. Length-normalized for HellaSwag, ARC, OpenBookQA and PIQA. MMMLU questions too long for the window count wrong. \\
AGIEval (English) & Multiple choice, 0-shot, scored by answer likelihood. Length-normalized for HellaSwag, ARC, OpenBookQA and PIQA. MMMLU questions too long for the window count wrong. \\
MMLU & Multiple choice, 0-shot, scored by answer likelihood. Length-normalized for HellaSwag, ARC, OpenBookQA and PIQA. MMMLU questions too long for the window count wrong. \\
MMLU (5-shot) & Multiple choice, 5-shot, scored by answer likelihood. SuperGPQA uses five fixed demonstrations in place of its generative official protocol, and counts the questions too long for the window wrong. \\
MMLU (cloze) & Multiple choice with no options shown. Each answer's own text is scored after the question, 0-shot, normalized by its length in bytes. The other MMLU rows show lettered options and score the letter, which also tests whether a small model can tie a letter to its option; this one does not. \\
MMLU-Redux & Multiple choice, 0-shot, scored by answer likelihood. Length-normalized for HellaSwag, ARC, OpenBookQA and PIQA. MMMLU questions too long for the window count wrong. \\
MMLU-Redux (5-shot) & Multiple choice, 5-shot, scored by answer likelihood. SuperGPQA uses five fixed demonstrations in place of its generative official protocol, and counts the questions too long for the window wrong. \\
BIG-Bench Hard & Greedy generation, answer matched. GSM8K and GSM8K-Platinum 5-shot, strict match. SimpleQA Verified 5-shot, normalized match, no grader model. TriviaQA 5-shot. MGSM 8-shot native chain of thought in all 11 languages, items that outgrow the window counted wrong. BIG-Bench Hard 3-shot chain of thought. DROP 3-shot F1. \\
DROP & Greedy generation, answer matched. GSM8K and GSM8K-Platinum 5-shot, strict match. SimpleQA Verified 5-shot, normalized match, no grader model. TriviaQA 5-shot. MGSM 8-shot native chain of thought in all 11 languages, items that outgrow the window counted wrong. BIG-Bench Hard 3-shot chain of thought. DROP 3-shot F1. \\
HellaSwag & Multiple choice, 0-shot, scored by answer likelihood. Length-normalized for HellaSwag, ARC, OpenBookQA and PIQA. MMMLU questions too long for the window count wrong. \\
PIQA & Multiple choice, 0-shot, scored by answer likelihood. Length-normalized for HellaSwag, ARC, OpenBookQA and PIQA. MMMLU questions too long for the window count wrong. \\
ARC-Easy & Multiple choice, 0-shot, scored by answer likelihood. Length-normalized for HellaSwag, ARC, OpenBookQA and PIQA. MMMLU questions too long for the window count wrong. \\
ARC-Challenge & Multiple choice, 0-shot, scored by answer likelihood. Length-normalized for HellaSwag, ARC, OpenBookQA and PIQA. MMMLU questions too long for the window count wrong. \\
WinoGrande & Multiple choice, 0-shot, scored by answer likelihood. Length-normalized for HellaSwag, ARC, OpenBookQA and PIQA. MMMLU questions too long for the window count wrong. \\
OpenBookQA & Multiple choice, 0-shot, scored by answer likelihood. Length-normalized for HellaSwag, ARC, OpenBookQA and PIQA. MMMLU questions too long for the window count wrong. \\
\end{longtable}}

\section{Reinforcement-learning telemetry}
\label{app:telemetry}

\begin{table}[h]
\centering
\caption{Selected rounds of \code{hybrid_mode_gspo} (steps 192 to 262), showing the response cap tightening from 32,768 to 20,480 tokens. Admitted counts the prompt groups kept, out of 448. Truncation rates are among correct (think) and hard correct samples.}
\label{tab:gspo_telemetry}
\scriptsize
\resizebox{\textwidth}{!}{%
\begin{tabular}{@{}rllrrrrrrrl@{}}
\toprule
\textbf{Round} & \textbf{Steps} & \textbf{Kind} & \textbf{Cap} & \textbf{Next cap} & \textbf{Admitted} & \textbf{Generate (s)} & \textbf{Recompute (s)} & \textbf{Think cut} & \textbf{Hard cut} & \textbf{Decision} \\
\midrule
96 & 192--194 & short & 32,768 & 32,768 & 304 & 77.9 & 28.6 & 0.00\% & 0.00\% & held \\
98 & 196--198 & short & 32,768 & 32,768 & 305 & 41.1 & 23.3 & 0.00\% & 0.00\% & held \\
100 & 200--202 & long & 28,672 & 28,672 & 293 & 1117.3 & 38.0 & 2.86\% & 4.49\% & held \\
102 & 204--206 & long & 28,672 & 28,672 & 283 & 1142.1 & 43.1 & 3.23\% & 5.50\% & held \\
104 & 208--210 & long & 28,672 & 28,672 & 311 & 1207.7 & 37.4 & 2.01\% & 3.63\% & held \\
106 & 212--214 & long & 28,672 & 24,576 & 303 & 1205.0 & 41.0 & 2.37\% & 4.65\% & tightened \\
108 & 216--218 & long & 24,576 & 24,576 & 301 & 1056.6 & 37.0 & 3.55\% & 6.17\% & held \\
110 & 220--222 & short & 24,576 & 24,576 & 287 & 38.4 & 24.6 & 0.00\% & 0.00\% & held \\
112 & 224--226 & short & 24,576 & 24,576 & 313 & 35.5 & 20.9 & 0.00\% & 0.00\% & held \\
114 & 228--230 & short & 24,576 & 24,576 & 302 & 40.7 & 17.2 & 0.00\% & 0.00\% & held \\
116 & 232--234 & short & 24,576 & 24,576 & 290 & 46.9 & 26.5 & 0.00\% & 0.00\% & held \\
118 & 236--238 & long & 20,480 & 20,480 & 283 & 871.0 & 30.9 & 3.92\% & 6.61\% & held \\
120 & 240--242 & long & 20,480 & 20,480 & 286 & 854.1 & 32.8 & 3.52\% & 6.40\% & held \\
122 & 244--246 & long & 20,480 & 20,480 & 277 & 868.2 & 30.8 & 2.71\% & 4.77\% & held \\
124 & 248--250 & long & 20,480 & 20,480 & 274 & 902.2 & 35.8 & 5.02\% & 8.67\% & held \\
126 & 252--254 & long & 20,480 & 20,480 & 269 & 866.0 & 31.8 & 2.43\% & 3.93\% & held \\
128 & 256--258 & short & 20,480 & 20,480 & 298 & 37.9 & 20.0 & 0.00\% & 0.00\% & held \\
130 & 260--262 & short & 20,480 & 20,480 & 306 & 47.2 & 31.2 & 0.00\% & 0.00\% & held \\
\bottomrule
\end{tabular}}
\end{table}

\begin{table}[h]
\centering
\caption{Rounds 2 to 9 of \code{specialist_agentic} (steps 4 to 20), with online tail retirement and the advantage clamp. The deferral queue stayed empty, and the cap tightened from 20,480 to 4,096 tokens with no truncation.}
\label{tab:agentic_telemetry}
\scriptsize
\resizebox{\textwidth}{!}{%
\begin{tabular}{@{}rllrrrrrrl@{}}
\toprule
\textbf{Round} & \textbf{Steps} & \textbf{Kind} & \textbf{Cap} & \textbf{Next cap} & \textbf{Admitted} & \textbf{Deferred} & \textbf{Generate (s)} & \textbf{Recompute (s)} & \textbf{Decision} \\
\midrule
2 & 4--6 & short & 20,480 & 16,384 & 282 & 0 & 215.6 & 31.3 & tightened \\
3 & 6--8 & short & 16,384 & 16,384 & 286 & 0 & 309.3 & 21.3 & held \\
4 & 8--10 & short & 16,384 & 12,288 & 297 & 0 & 188.3 & 20.9 & tightened \\
5 & 10--12 & short & 12,288 & 12,288 & 316 & 0 & 191.8 & 41.6 & held \\
6 & 12--14 & short & 12,288 & 8,192 & 279 & 0 & 81.0 & 27.2 & tightened \\
7 & 14--16 & short & 8,192 & 8,192 & 283 & 0 & 188.0 & 15.8 & held \\
8 & 16--18 & short & 8,192 & 4,096 & 290 & 0 & 202.6 & 17.3 & tightened \\
9 & 18--20 & short & 4,096 & 4,096 & 274 & 0 & 190.8 & 15.8 & held \\
\bottomrule
\end{tabular}}
\end{table}

\end{document}